\title{Learning implicitly in reasoning in PAC-Semantics}
\date{}
\author{Brendan Juba\thanks{Supported by ONR grant number N000141210358 and NSF Grant CCF-0939370.}\\ MIT CSAIL and Harvard SEAS\\ {\tt bjuba@alum.mit.edu}}
\newtheorem{theorem}{Theorem}
\newtheorem{definition}[theorem]{Definition}
\newtheorem{proposition}[theorem]{Proposition}
\newtheorem{corollary}[theorem]{Corollary}
\def\FullBox{\hbox{\vrule width 8pt height 8pt depth 0pt}}
\newcommand{\qed}{\;\;\;\FullBox}
\newenvironment{proof}{\noindent{\bf Proof:~~}}{\qed}
\newcommand{\calS}{\mathcal{S}}
\newcommand{\bbN}{\mathbb{N}}
\newcommand{\bbF}{\mathbb{F}}
\newcommand{\bbQ}{\mathbb{Q}}
\newcommand{\width}{\mathsf{width}}
\newcommand{\poly}{\mathrm{poly}}
\newcommand{\supp}{\mathrm{supp}}
\begin{document}
\maketitle

\begin{abstract}
We consider the problem of answering queries about formulas of propositional
logic based on background knowledge partially represented explicitly as other 
formulas, and partially represented as {\em partially obscured examples} 
independently drawn from a fixed probability distribution, where the queries are
answered with respect to a weaker semantics than usual -- PAC-Semantics, 
introduced by Valiant~\cite{valiant00} -- that is defined using the distribution
of examples. We describe a fairly general, efficient reduction to limited 
versions of the decision problem for a proof system (e.g., bounded space 
treelike resolution, bounded degree polynomial calculus, etc.) from 
corresponding versions of the reasoning problem where some of the background 
knowledge is not explicitly given as formulas, only learnable from the examples.
Crucially, we do {\em not} generate an explicit representation of the 
knowledge extracted from the examples, and so the ``learning'' of the background
knowledge is only done {\em implicitly}. As a consequence, this approach can 
utilize formulas as background knowledge that are {\em not} perfectly valid over
the distribution---essentially the analogue of agnostic learning here.
\end{abstract}

\newpage

\section{Introduction}

PAC-Semantics was introduced by Valiant~\cite{valiant00} in an attempt to unify
statistical and logical approaches to reasoning: on the one hand, given 
background knowledge represented as a collection of axioms, one may perform
logical deduction, and on the other hand, given background knowledge represented
as a collection of examples, one can derive a statistical conclusion by testing
whether the conclusion is supported by a sufficiently large fraction of the
examples. PAC-Semantics captures {\em both} sources. 
As is typical for such works, we can illustrate the utility of such a combined
approach with a story about an aviary. Suppose that we know that the birds of 
the aviary fly unless they are penguins, and that penguins eat fish. Now, 
suppose that we visit the aviary at feeding time, and notice that most (but 
perhaps not all) of the birds in the aviary seem not to eat fish. From this 
information, we can infer that most of the birds in the aviary can fly. This
conclusion draws on both the empirical (partial) information and reasoning from 
our explicit, factual knowledge: on the one hand, our empirical observations did
not mention anything about whether or not the birds of the aviary could fly, and
on the other hand, although our knowledge is sufficient to conclude that the 
birds that don't eat fish can fly, it isn't sufficient to conclude whether or 
not, broadly speaking, the birds in the aviary can fly.

Valiant's original work described an application of PAC-Semantics to the task of
predicting the values of unknown attributes in new examples based on the values 
of some known attributes of the example---for example, filling in a missing word
in an example sentence~\cite{mv08}. In this work, by contrast, we introduce and 
describe how to solve a (limited) {\em decision} task for PAC-Semantics, 
deciding whether or not a given ``query'' formula follows from the background 
knowledge, represented by {\em both} a collection of axiom formulas and a 
collection of examples. In particular, we use a model of partial information due
to Michael~\cite{michael10} to capture and cope with reasoning from 
{\em partially obscured} examples from a target distribution.

What we show is roughly that as long as we can efficiently use small proofs to 
certify validity in the classical sense and the rules of inference in the proof 
system are preserved under restrictions, we can efficiently certify the validity
(under PAC-Semantics) of a query from a sample of partial assignments whenever 
it follows from some formula(s) that could be verified to hold under the partial
assignments. Thus, in such a case, the introduction of probability to the 
semantics in this limited way (to cope with the imperfection of learned rules) 
actually does not harm the tractability of inference. Moreover, the ``learning''
is actually also quite efficient, and imposes {\em no} restrictions on the 
representation class beyond the assumption that their values are observed under 
the partial assignments and the restrictions imposed by the proof system itself.
In Section~\ref{proof-systems}, we will then observe that almost every special 
case of a propositional proof system with an efficient decision algorithm
considered in the literature satisfies these conditions, establishing the 
breadth of applicability of the approach.

It is perhaps more remarkable in from a learning theoretic perspective that our
approach does not require the rules to be learned (or discovered) to be 
completely consistent with the examples drawn from the (arbitrary) distribution.
In the usual learning context, this would be referred to as {\em agnostic} 
learning, as introduced by Kearns et al.~\cite{kss94}. Agnostic learning is 
notoriously hard---Kearns et al. noted that agnostic learning of conjunctions 
(over an arbitrary distribution, in the standard PAC-learning sense) would yield
an efficient algorithm for PAC-learning DNF (also over arbitrary distributions),
which remains the central open problem of computational learning theory. Again,
by declining to produce a hypothesis, we manage to circumvent a barrier (to the 
state of the art, at least). Such rules of less-than-perfect validity seem to 
be very useful from the perspective of AI: for example, logical encodings 
of planning problems typically use ``frame axioms'' that assert that nothing 
changes unless it is the effect of an action. In a real world setting, these 
axioms are not strictly true, but such rules still provide a useful 
approximation. It is therefore desirable that we can learn to utilize them.
We discuss this further in Section~\ref{imperfect-rules}.

\paragraph{Relationship to other work}
Given that the task we consider is fundamental and has a variety of 
applications, other approaches have naturally been proposed---for example,
Markov Logic~\cite{rd06} is one well-known approach based on graphical 
models, and Bayesian Logic Programming~\cite{kdr08} is an approach that has 
grown out of the Inductive Logic Programming (ILP) community that can address 
the kinds of tasks we consider here. The main distinction between all of these 
approaches and our approach is that these other approaches {\em all aim to model
the distribution of the data,} which is generally a {\em much} more demanding 
task -- both in terms of the amount of data and computation time required -- 
than simply answering a query. Naturally, the upshot of these other works is 
that they are much more versatile, and there are a variety of other tasks (e.g.,
density estimation, maximum likelihood computations) that these frameworks can 
handle that we do not. Our aim is instead to show how this more limited (but 
still useful) task can be done much more efficiently, much like how algorithms
such as SVMs and boosting can succeed at predicting attributes 
without needing to model the distribution of the data. 

In this respect, our work is similar to the Learning to Reason framework of 
Khardon and Roth~\cite{kr97}, who showed how an NP-hard reasoning task (deciding
a $\log n$-CNF query), when coupled with a learning task beyond the reach of the
state of the art (learning DNF from random examples) could result in an 
efficient overall system. The distinction between our work and Khardon and 
Roth's is, broadly speaking, that we re-introduce the theorem-proving aspect 
that Khardon and Roth had explicitly sought to avoid. Briefly, these techniques 
permit us to incorporate declaratively specified background knowledge and 
moreover, permit us to cope with partial information in more general cases than 
Khardon and Roth~\cite{kr99}, who could only handle constant width clauses. 
Another difference between our work and that of Khardon and Roth, that also 
distinguishes our work from traditional ILP (e.g., \cite{mdr94}), is that as
mentioned above, we are able to utilize rules that hold with less than perfect 
probability (akin to agnostic learning, but easier to achieve here). 

\section{Definitions and preliminaries}

\paragraph{PAC-Semantics}
Inductive generalization (as opposed to deduction) inherently entails the 
possibility of making mistakes. Thus, the kind of rules produced by learning
algorithms cannot hope to be valid in the traditional (Tarskian) sense (for
reasons we describe momentarily), but intuitively they do capture some useful
quality. PAC-Semantics were thus introduced by Valiant~\cite{valiant00} to 
capture the quality possessed by the output of PAC-learning algorithms when 
formulated in a logic. Precisely, suppose that we observe examples independently
drawn from a distribution over $\{0,1\}^n$; now, suppose
that our algorithm has found a rule $f(x)$ for predicting some target attribute 
$x_t$ from the other attributes. The formula ``$x_t=f(x)$'' may not be valid in 
the traditional sense, as PAC-learning does not guarantee that the rule
holds for every possible binding, only that the rule $f$ so
produced agrees with $x_t$ with probability $1-\epsilon$ with respect to future 
examples drawn from the same distribution. That is, the formula {\em is} instead
``valid'' in the following sense:


\begin{definition}[$(1-\epsilon)$-valid]
Given a distribution $D$ over $\{0,1\}^n$, we say that a Boolean function
$R$ is {\em $(1-\epsilon)$-valid} if $\Pr_{x\in D}[R(x)=1]\geq 1-\epsilon$.
If $\epsilon=0$, we say $R$ is {\em perfectly valid}.
\end{definition}

Of course, we may consider $(1-\epsilon)$-validity of relations $R$ that are not
obtained by learning algorithms and in particular, not of the form 
``$x_t=f(x)$.''

\paragraph{Classical inference in PAC-Semantics.}
Valiant~\cite{valiant00} considered one rule of inference, {\em chaining}, for 
formulas of the form $\ell_t=f(x)$ where $f$ is a linear threshold function: 
given a collection of literals such that the partial assignment obtained from
satisfying those literals guarantees $f$ evaluates to true, infer the literal
$\ell_t$. Valiant observed that for such learned formulas, the conjunction of 
literals derived from a sequence of applications of chaining is also $1-
\epsilon'$-valid for some polynomially larger $\epsilon'$.
It turns out that this property of soundness under PAC-Semantics is not a
special feature of chaining: generally, it follows from the union bound that any
classically sound derivation is also sound under PAC-Semantics in a similar 
sense.
\begin{proposition}[Classical reasoning is usable in PAC-Semantics]
\label{classical-inf-bound}
Let $\psi_1,\ldots,\psi_k$ be formulas such that each $\psi_i$ is
$(1-\epsilon_i)$-valid under a common distribution $D$ for some $\epsilon_i\in
[0,1]$. Suppose that $\{\psi_1,\ldots,\psi_k\}\models\varphi$ (in the classical
sense). Then $\varphi$ is $1-\epsilon'$-valid under $D$ for
$\epsilon'=\sum_i\epsilon_i$.
\end{proposition}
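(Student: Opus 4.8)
The plan is to prove this by a direct union-bound argument over the ``bad'' events on which the individual premises fail. First I would introduce, for each $i \in \{1,\ldots,k\}$, the set $B_i = \{x \in \{0,1\}^n : \psi_i(x) = 0\}$ of assignments falsifying $\psi_i$; by hypothesis $\psi_i$ is $(1-\epsilon_i)$-valid under $D$, which is exactly the statement $\Pr_{x \in D}[x \in B_i] \le \epsilon_i$. Applying the union bound to $B_1 \cup \cdots \cup B_k$ then gives $\Pr_{x \in D}[x \in \bigcup_i B_i] \le \sum_i \epsilon_i = \epsilon'$.

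Next I would use the classical entailment hypothesis to control $\varphi$ on the complement of $\bigcup_i B_i$. A point $x$ in that complement is precisely a total truth assignment satisfying every $\psi_i$ simultaneously; since $\{\psi_1,\ldots,\psi_k\} \models \varphi$ means that every truth assignment satisfying all the $\psi_i$ also satisfies $\varphi$, we get $\varphi(x) = 1$ for every such $x$. Consequently $\{x : \varphi(x) = 1\} \supseteq \{0,1\}^n \setminus \bigcup_i B_i$, and therefore $\Pr_{x \in D}[\varphi(x) = 1] \ge 1 - \Pr_{x \in D}[x \in \bigcup_i B_i] \ge 1 - \epsilon'$, which is exactly the claimed $(1-\epsilon')$-validity of $\varphi$.

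There is essentially no hard step here: the whole argument is the union bound, and the only point that deserves a word of care is the identification of the sample space of $D$ with the set of total Boolean assignments, so that classical entailment (a statement quantified over all such assignments) can be applied pointwise to each $x$ in the support of $D$. I would also note that the statement is only meaningful when $\epsilon' = \sum_i \epsilon_i \le 1$ (it is vacuously true otherwise), and emphasize that \emph{no} independence among the $\psi_i$ is required --- the union bound tolerates arbitrary correlations, which is precisely what makes the bound hold for an arbitrary common distribution $D$.
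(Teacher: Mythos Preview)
Your proof is correct and is exactly the union-bound argument the paper has in mind; in fact the paper does not even write out a proof of this proposition, merely remarking that ``it follows from the union bound,'' so your write-up simply makes explicit what the paper leaves implicit. The additional comments you make (vacuity when $\epsilon'>1$, no independence needed) are accurate and in keeping with the surrounding discussion in the paper.
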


So, soundness under PAC-Semantics does not pose any constraints on the rules of
inference that we might consider; the degree of validity of the conclusions 
merely aggregates any imperfections in the various individual premises involved.
We also note that without further knowledge of $D$, the loss 
of validity from the use of a union bound is optimal.

\begin{proposition}[Optimality of the union bound for classical reasoning]
Let $\psi_1,\ldots,\psi_k$ be a collection of formulas such that there
exists some distribution $D$ on which each $\psi_i$ is $1-\epsilon_i$-valid,
for which $\{\psi_1,\ldots,\psi_{i-1},\psi_{i+1},\ldots,\psi_k\}\not\models
\psi_i$, and $\sum_i\epsilon_i<1$. Then there exists a distribution $D'$ for
which each $\psi_i$ is $1-\epsilon_i$-valid, but $\psi_1\wedge\cdots\wedge
\psi_k$ is not $1-\sum_i\epsilon_i+\delta$ valid for any $\delta>0$.
\end{proposition}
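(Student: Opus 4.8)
The plan is to exhibit $D'$ explicitly as a finitely-supported distribution that makes the ``failure events'' $\neg\psi_i$ pairwise disjoint, so that the union bound of Proposition~\ref{classical-inf-bound} (applied to the trivial entailment $\{\psi_1,\ldots,\psi_k\}\models\psi_1\wedge\cdots\wedge\psi_k$) is met with equality. All of the hypotheses will be used only to guarantee that the ingredients of this construction exist.

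First I would extract witnessing assignments from the non-entailment hypotheses: since $\{\psi_1,\ldots,\psi_{i-1},\psi_{i+1},\ldots,\psi_k\}\not\models\psi_i$, for each $i$ there is an assignment $x^{(i)}\in\{0,1\}^n$ that satisfies every $\psi_j$ with $j\neq i$ but falsifies $\psi_i$. Next I would produce one assignment satisfying all of the $\psi_i$ simultaneously: applying Proposition~\ref{classical-inf-bound} under the given distribution $D$ shows that $\psi_1\wedge\cdots\wedge\psi_k$ is $(1-\sum_i\epsilon_i)$-valid under $D$, and since $\sum_i\epsilon_i<1$ this probability is strictly positive, so $\supp(D)$ contains some $x^\ast$ on which every $\psi_i$ is true.

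Then I would let $D'$ place probability mass $\epsilon_i$ on $x^{(i)}$ for each $i$ and the remaining mass $1-\sum_i\epsilon_i$ on $x^\ast$ (dropping any points of zero mass, which also disposes of any $\epsilon_i=0$). The points $x^{(1)},\ldots,x^{(k)},x^\ast$ are pairwise distinct, because $\psi_i$ is false at $x^{(i)}$ yet true at every $x^{(j)}$ with $j\neq i$ and true at $x^\ast$; hence $D'$ is a well-defined probability distribution. Under $D'$ the only point where $\psi_i$ fails is $x^{(i)}$, so $\Pr_{x\in D'}[\psi_i(x)=1]=1-\epsilon_i$, giving $(1-\epsilon_i)$-validity; meanwhile $\psi_1\wedge\cdots\wedge\psi_k$ is true only at $x^\ast$, so $\Pr_{x\in D'}[(\psi_1\wedge\cdots\wedge\psi_k)(x)=1]=1-\sum_i\epsilon_i$, which is not at least $1-\sum_i\epsilon_i+\delta$ for any $\delta>0$, as required.

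There is no real analytic obstacle here; the construction is elementary. The only things that need care are the bookkeeping observations that pin down \emph{exactly} what each hypothesis buys: pairwise non-entailment is precisely what yields the ``isolated failure'' points $x^{(i)}$ and their mutual distinctness, while $\sum_i\epsilon_i<1$ is precisely what makes the leftover mass nonnegative and forces the existence of the simultaneously-satisfying $x^\ast$. The one subtlety worth stating cleanly in the writeup is that $D'$ gives $\psi_i$ failure probability \emph{equal to} (not merely at most) $\epsilon_i$ — this is what turns the union bound from an inequality into an exact equality.
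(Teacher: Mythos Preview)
Your proposal is correct and follows essentially the same construction as the paper: pick $x^{(i)}$ falsifying only $\psi_i$ from the non-entailment hypothesis, pick a simultaneous satisfier $x^\ast$ using Proposition~\ref{classical-inf-bound} and $\sum_i\epsilon_i<1$, and put masses $\epsilon_i$ and $1-\sum_i\epsilon_i$ on these points. Your write-up is in fact more careful than the paper's in verifying distinctness of the support points and handling $\epsilon_i=0$.
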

\begin{proof}
Since Proposition~\ref{classical-inf-bound} guarantees that $\psi_1\wedge\cdots
\wedge\psi_k$ is at least $1-\sum_i\epsilon_i$-valid where $1-\sum_i\epsilon_i>
0$, there must be a (satisfying) assignment $x^{(0)}$ for $\psi_1\wedge\cdots
\wedge\psi_k$. On the other hand, as each $\psi_i$ is not entailed by the 
others, there must be some assignment $x^{(i)}$ that satisfies the others but 
falsifies $\psi_i$. We now construct $D'$: it places weight $\epsilon_i$ on the 
assignment $x^{(i)}$, and weight $1-\sum_i\epsilon_i$ on $x^{(0)}$. It is easy 
to verify that $D'$ satisfies the claimed conditions.
\end{proof}

Subsequently, we will assume that our Boolean functions will be given by 
formulas of propositional logic formed over Boolean variables $\{x_1,\ldots,
x_n\}$ by negation and the following linear threshold connectives (which we will
refer to as the {\em threshold basis} for propositional formulas):

\begin{definition}[Threshold connective]
A {\em threshold connective} for a list of $k$ formulas $\phi_1,\ldots,\phi_k$
is given by a list of $k+1$ real numbers, $c_1,\ldots,c_k, b$.
The formula $[\sum_{i=1}^kc_i\phi_i\geq b]$ is interpreted as follows:
given a Boolean interpretation for the $k$ formulas, the connective is true if
$\sum_{i:\phi_i=1}c_i\geq b$.
\end{definition}

Naturally, a threshold connective expresses a $k$-ary AND connective by
taking the $c_i=1$, and $b=k$, and expresses a $k$-ary OR by taking
$c_1,\ldots,c_k, b=1$.

We note that Valiant actually defines PAC-Semantics for first-order logic by 
considering $D$ to be a distribution over the values of atomic formulas. He
focuses on formulas of bounded arity over a polynomial size domain; then
evaluating such formulas from the (polynomial size) list of values of all atomic
formulas is tractable, and in such a case everything we consider here about 
propositional logic essentially carries over in the usual way, by considering 
each atomic formula to be a propositional variable (and rewriting the 
quantifiers as disjunctions or conjunctions over all bindings). As we don't 
have any insights particular to first-order logic to offer, we will focus 
exclusively on the propositional case in this work.

\paragraph{Partial observability}
Our knowledge of a distribution $D$ will be provided in the form of a collection
of {\em examples} independently drawn from $D$, and our main question of
interest will be deciding whether or not a formula is $(1-\epsilon)$-valid. Of 
course, reasoning in PAC-Semantics from (complete) examples is trivial:
Hoeffding's inequality guarantees that with high probability, the proportion of
times that the query formula evaluates to {\em `true'} is a good estimate of
the degree of validity of the formula. By contrast, if the distribution $D$ is
not known, then we can't guarantee that a formula is $(1-\epsilon)$-valid for
any $\epsilon<1$ without examples without deciding whether the query is a
tautology. So, it is only interesting to consider what happens ``in between.''
To capture such ``in between'' situations, we will build on the theory of 
learning from partial observations developed by Michael~\cite{michael10}.

\begin{definition}[Partial assignments]
A {\em partial assignment} $\rho$ is an element of $\{0,1,*\}^n$. We say
that a partial assignment $\rho$ is {\em consistent} with an assignment
$x\in\{0,1\}^n$ if whenever $\rho_i\neq *$, $\rho_i=x_i$.
\end{definition}

Naturally, instead of examples from $D$, our knowledge of $D$ will be provided
in the form of a collection of example {\em partial assignments} drawn from a
{\em masking process} over $D$:

\begin{definition}[Masking process]\label{def-mask}
A {\em mask} is a function $m:\{0,1\}^n\to\{0,1,*\}^n$, with the property that 
for any $x\in\{0,1\}^n$, $m(x)$ is consistent with $x$.  A {\em masking
process} $M$ is a mask-valued random variable (i.e., a random function).
We denote the distribution over partial assignments obtained by applying a
masking process $M$ to a distribution $D$ over assignments by $M(D)$.
\end{definition}

Note that the definition of masking processes allows the hiding of entries to
depend on the underlying example from $D$. Of course, since we know that when
{\em all} entries are hidden by a masking process the problem we consider will
become NP-hard, we must restrict our attention to settings where it is possible
to learn something about $D$. In pursuit of this, we will consider formulas that
can be evaluated in the straightforward way from the partial assignments with
high probability---such formulas are one kind which we can certainly say that we
know to be (essentially) true under $D$.

\begin{definition}[Witnessed formulas]
We define a formula to be {\em witnessed to evaluate to true or false} in a
partial assignment by induction on its construction; we say that the formula is 
{\em witnessed} iff it is witnessed to evaluate to either true or false.
\begin{compactitem}
\item A variable is witnessed to be true or false iff it is respectively
true or false in the partial assignment.
\item $\neg\phi$ is witnessed to evaluate to true iff $\phi$ is witnessed to
evaluate to false; naturally, $\neg\phi$ is witnessed to evaluate to false iff
$\phi$ is witnessed to evaluate to true.
\item A formula with a threshold connective $[c_1\phi_1+\cdots+
c_k\phi_k\geq b]$ is witnessed to evaluate to true iff
$\sum_{i:\phi_i\mathrm{\ witnessed\ true}}c_i
+\sum_{i:\phi_i\mathrm{\ not\ witnessed}}\min\{0,c_i\}
\geq b$
and it is witnessed to evaluate to false iff
$\sum_{i:\phi_i\mathrm{\ witnessed\ true}}c_i
+\sum_{i:\phi_i\mathrm{\ not\ witnessed}}\max\{0,c_i\}
<b.$
(i.e., iff the truth or falsehood, respectively, of the inequality is determined
by the witnessed formulas, regardless of what values are substituted for the
non-witnessed formulas.)
\end{compactitem}
\end{definition}

An example of particular interest is a CNF formula. A CNF is witnessed to
evaluate to true in a partial assignment precisely when every clause has some
literal that is satisfied. It is witnessed to evaluate to false precisely when 
there is some clause in which every literal is falsified.

Refining the motivating initial discussion somewhat, a witnessed formula is one
that can be evaluated in a very local manner. When the formula is {\em not}
witnessed, we will likewise be interested in the following ``simplification''
of the formula obtained from an incomplete evaluation:

\begin{definition}[Restricted formula]
Given a partial assignment $\rho$ and a formula $\phi$, the {\em restriction of 
$\phi$ under $\rho$,} denoted $\phi|_\rho$, is recursively defined as follows:
\begin{compactitem}
\item If $\phi$ is witnessed in $\rho$, then $\phi|_\rho$ is the formula
representing the value that $\phi$ is witnessed to evaluate to under $\rho$.
\item If $\phi$ is a variable not set by $\rho$, $\phi|_\rho=\phi$.
\item If $\phi=\neg\psi$ and $\phi$ is not witnessed in $\rho$, then $\phi|_\rho
=\neg(\psi|_\rho)$.
\item If $\phi=[\sum_{i=1}^kc_i\psi_i\geq b]$ and $\phi$ is not witnessed
in $\rho$, suppose that $\psi_1,\ldots,\psi_\ell$ are witnessed in $\rho$ (and
$\psi_{\ell+1},\ldots,\psi_k$ are not witnessed). Then $\phi|_\rho$ is
$[\sum_{i=\ell+1}^kc_i(\psi_i|_\rho)\geq d]$
where $d=b-\sum_{i:\psi_i|_\rho=1}c_i$.
\end{compactitem}
For a restriction $\rho$ and set of formulas $F$, we let $F|_\rho$ denote the
set $\{\phi|_\rho:\phi\in F\}$.
\end{definition}

\paragraph{Proof systems.}
We will need a formalization of a ``proof system'' in order to state our 
theorems:
\begin{definition}[Proof system]
A {\em proof system} is given by a sequence of relations $\{R_i\}_{i=0}^\infty$
over formulas such that $R_i$ is of arity-$(i+1)$ and whenever
$R_i(\psi_{j_1},\ldots,\psi_{j_i},\varphi)$ holds, ${\{\psi_{j_1},\ldots,
\psi_{j_i}\}\models\varphi}$. Any formula $\varphi$ satisfying $R_0$ is said to
be an {\em axiom} of the proof system. A {\em proof of a formula $\phi$} from
a set of {\em hypotheses} $H$ in the proof system is given by a finite sequence
of triples consisting of
\begin{compactenum}
\item A formula $\psi_k$
\item A relation $R_i$ of the proof system or the set $H$
\item A subsequence of formulas $\psi_{j_1},\ldots,\psi_{j_i}$ with $j_\ell<k$
for $\ell=1,\ldots,i$  (i.e., from the first components of earlier triples in
the sequence) such that $R_i(\psi_{j_1},\ldots,\psi_{j_i},\psi_k)$ holds, unless
$\psi_k\in H$.
\end{compactenum}
for which $\phi$ is the first component of the final triple in the sequence.
\end{definition}
Needless to say it is generally expected that $R_i$ is somehow efficiently
computable, so that the proofs can be checked. We don't explicitly impose such a
constraint on the formal object for the sake of simplicity, but the reader
should be aware that these expectations will be fulfilled in all cases of
interest.

We will be interested in the effect of the restriction (partial evaluation) 
mapping applied to {\em proofs}---that is, the ``projection'' of a proof in the 
original logic down to a proof over the smaller set of variables by the 
application of the restriction to every step in the proof. Although it may be 
shown that this at least preserves the (classical) semantic soundness of the 
steps, this falls short of what we require: we need to know that the {\em rules 
of inference} are preserved under restrictions. Since the relations defining the
proof system are arbitrary, though, this property must be explicitly verified. 
Formally, then:
\begin{definition}[Restriction-closed proof system]\label{res-closed-def}
We will say that a proof system over propositional formulas is {\em restriction
closed} if for every proof of the proof system and every partial assignment 
$\rho$, for any (satisfactory) step of the proof $R_k(\psi_1,\ldots,\psi_k,
\phi)$, there is some $j\leq k$ such that for the subsequence $\psi_{i_1},
\ldots,\psi_{i_j}$ $R_j(\psi_{i_1}|_\rho,\ldots,\psi_{i_j}|_\rho,\phi|_\rho)$ is
satisfied, and the formula $1$ (``true'') is an axiom.\footnote{%
This last condition is a technical condition that usually requires a trivial
modification of any proof system to accommodate. We can usually do without this
condition in actuality, but the details depend on the proof system.}
\end{definition}

So, when a proof system is restriction-closed, given a derivation of a formula
$\varphi$ from $\psi_1,\ldots,\psi_k$, we can extract a derivation of
$\varphi|_\rho$ from $\psi_1|_\rho,\ldots,\psi_k|_\rho$ for any partial 
assignment $\rho$ such that the steps of the proof consist of formulas 
mentioning only the variables masked in $\rho$. (In particular, we could think 
of this as a proof in a proof system for a logic with variables $\{x_i:\rho_i=
*\}$.) In a sense, this means that we can extract a proof of a ``special case''
from a more general proof by applying the restriction operator to every formula
in the proof. Again, looking ahead to Section~\ref{proof-systems}, we will see
that the typical examples of propositional proof systems that have been
considered essentially have this property.

We will be especially interested in limited versions of the decision problem for
a logic given by a collection of ``simple'' proofs---if the proofs are
sufficiently restricted, it is possible to give efficient algorithms to search
for such proofs, and then such a limited version of the decision problem will
be tractable, in contrast to the general case. Formally, now:
\begin{definition}[Limited decision problem]
Fix a proof system, and let $\calS$ be a set of proofs in the proof system.
The {\em limited decision problem for $\calS$} is then the following promise
problem: given as input a formula $\varphi$ with no free variables and a set
of hypotheses $H$ such that either there is a proof of $\varphi$ in $\calS$ from
$H$ or else $H\not\models\varphi$, decide which case holds.
\end{definition}
A classic example of such a limited decision problem for which efficient
algorithms exist is for formulas of propositional logic that have ``treelike''
resolution derivations of constant width (cf. the work of Ben-Sasson and
Wigderson~\cite{bsw01} or the work of Beame and Pitassi~\cite{bp96}, building on
work by Clegg et al.~\cite{cei96}). We will actually return to this example in
more detail in Section~\ref{proof-systems}, but we mention it now for the sake
of concreteness.

We will thus be interested in syntactic restrictions of restriction-closed
proof systems. We wish to know that (in contrast to the rules of the proof
system) these {\em syntactic restrictions} are likewise closed under
restrictions in the following sense:
\begin{definition}[Restriction-closed set of proofs]\label{res-closed-set-def}
A set of proofs $\calS$ is said to be {\em restriction closed} if whenever there
is a proof of a formula $\varphi$ from a set of hypotheses $H$ in $\calS$, there
is also a proof of $\varphi|_\rho$ in from the set $H|_\rho$ in $\calS$ for any
partial assignment $\rho$.
\end{definition}

\section{Inferences from incomplete data with implicit learning}
\label{incomplete-data-inferences}

A well-known general phenomenon in learning theory is that a restrictive choice
of representation for hypotheses often imposes artificial computational
difficulties. Since fitting a hypothesis is often a source of intractability, it
is natural to suspect that one would often be able to achieve more if the need
for such an explicit hypothesis were circumvented---that is, if ``learning''
were integrated more tightly into the application using the knowledge extracted
from data. For the application of answering queries, this insight was pursued by
Khardon and Roth~\cite{kr97} in the {\em learning to reason} framework, where
queries against an unknown DNF could be answered using examples.
The trivial algorithm that evaluates formulas on complete assignments
and uses the fraction satisfied to estimate the validity suggests how this might
happen: the examples themselves encode the needed information and so it is 
easier to answer the queries using the examples directly. 
In this case, the 
knowledge is used {\em implicitly:} the existence of the DNF describing the 
support of the distribution (thus, governing which models need to be considered)
guarantees that the behavior of the algorithm is correct, but at no point does 
the algorithm ``discover'' the representation of such a DNF. Effectively, we 
will develop an alternative approach that incorporates reasoning to cope with 
incomplete examples and explicit background knowledge, and yet retains the 
appealing circumvention of the construction of explicit representations for 
learned knowledge. In this approach, there are ``axioms'' that can be extracted 
from the observable data, which we suppose that if known, could be combined with
the background knowledge to answer a given query.

More formally, these ``axioms'' are formulas for which it is feasible to verify 
consistency with the underlying distribution (from the masked examples), that 
nevertheless suffice to complete a proof. This is necessary in some sense (cf. 
Proposition~\ref{axioms-checkable}), and at least seems to be not much more 
restrictive than the requirements imposed by concept learning. Specifically, we 
will utilize formulas that are witnessed to evaluate to true on the distribution
over partial assignments with probability at least $(1-\epsilon)$. We will 
consider any such formulas to be ``fair game'' for our algorithm, much as any 
member of a given concept class is ``fair game'' for concept learning.

We now state and prove the main theorem, showing that a variant of the limited
decision problem in which the proof may invoke these learnable formulas as 
``axioms'' is essentially no harder than the original limited decision problem, 
as long as the proof system is restriction-closed. The reduction is {\em very} 
simple and is given in Algorithm~\ref{pac-decision-alg}.

\begin{algorithm}[t]
\DontPrintSemicolon
\SetKwInOut{Input}{input}\SetKwInOut{Output}{output}
\SetKwInOut{Parameter}{parameter}

\Parameter{Algorithm $A$ solving the limited decision problem for the class of
proofs $\calS$.}
\Input{Formula $\varphi$, $\epsilon,\delta,\gamma\in (0,1)$, list of partial
assignments $\rho^{(1)},\ldots,\rho^{(m)}$ from $M(D)$, list of hypothesis 
formulas $H$}
\Output{{\em Accept} if there is a proof of $\varphi$ in $\calS$ from $H$ and
formulas $\psi_1,\psi_2,\ldots$ that are simultaneously witnessed true with
probability at least $1-\epsilon+\gamma$ on $M(D)$;\\
{\em Reject} if $H\Rightarrow\varphi$ is not $(1-\epsilon-\gamma)$-valid under
$D$.}

\Begin{
$B\leftarrow \lfloor\epsilon\cdot m\rfloor$, $FAILED\leftarrow 0$.\\
\ForEach{partial assignment $\rho^{(i)}$ in the list}{
  \If{$A(\varphi|_{\rho^{(i)}},H|_\rho)$ rejects}{
    Increment $FAILED$.
    \If{$FAILED>B$}{\Return{{\em Reject}}}
    }
  }
\Return{{\em Accept}}
}

\caption{DecidePAC}\label{pac-decision-alg}
\end{algorithm}

\begin{theorem}[Adding implicit learning preserves tractability]
\label{implicit-learn-thm}
Let $\calS$ be a restriction-closed set of proofs for a restriction-closed proof
system. Suppose that there is an algorithm for the limited decision problem for
$\calS$ running in time $T(n,|\varphi|,|H|)$ on input $\varphi$ and $H$ over $n$
variables. Let $D$ be a distribution over assignments, $M$ be any masking 
process, and $H$ be any set of formulas.
Then there is an algorithm that, on input $\varphi$, $H$, $\delta$ and
$\epsilon$, uses $O(1/\gamma^2\log 1/\delta)$ examples, runs in
time $O(T(n,|\varphi|,|H|)\frac{1}{\gamma^2}\log\frac{1}{\delta})$, and such
that given that either
\begin{compactitem}
\item $[H\Rightarrow\varphi]$ is not $(1-\epsilon-\gamma)$-valid with respect to
$D$ or
\item there exists a proof $\varphi$ from $\{\psi_1,\ldots,\psi_k\}\cup H$ in
$\calS$ such that $\psi_1,\ldots,\psi_k$ are all witnessed to evaluate to
true with probability $(1-\epsilon+\gamma)$ over $M(D)$
\end{compactitem}
decides which case holds.
\end{theorem}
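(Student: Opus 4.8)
The plan is to show that Algorithm~\ref{pac-decision-alg}, run on $m=\Theta(\gamma^{-2}\log\delta^{-1})$ independent partial assignments drawn from $M(D)$, is the claimed reduction. The running time and sample complexity are immediate from the description: each of the $m$ iterations forms the restrictions $\varphi|_{\rho^{(i)}}$ and $H|_{\rho^{(i)}}$ in time polynomial in the input (note $|\varphi|_{\rho^{(i)}}|\le|\varphi|$, $|H|_{\rho^{(i)}}|\le|H|$, over at most $n$ variables) and then makes one call to $A$, for a total cost $O(T(n,|\varphi|,|H|)\cdot\gamma^{-2}\log\delta^{-1})$. Correctness is where the content lies, and it splits along the two promised cases. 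In each case I will (i) pin down a ``good event'' on a single sampled $\rho^{(i)}$ under which the call $A(\varphi|_{\rho^{(i)}},H|_{\rho^{(i)}})$ is \emph{forced} to behave a certain way --- this is where the promise of the limited decision problem and the two restriction-closure hypotheses get used; (ii) compute the probability of the good event under $M(D)$; and (iii) apply a one-sided Hoeffding bound over the $m$ draws so that the threshold $B=\lfloor\epsilon m\rfloor$ cleanly separates the ``accept'' regime from the ``reject'' regime, with the slack $\gamma$ paying simultaneously for the floor in $B$ and for the deviation of the empirical count from its mean.

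For the completeness case, suppose there is an $\calS$-proof of $\varphi$ from $\{\psi_1,\dots,\psi_k\}\cup H$ in which $\psi_1,\dots,\psi_k$ are simultaneously witnessed true on $M(D)$ with probability at least $1-\epsilon+\gamma$. The good event for a draw $\rho$ is that all of $\psi_1,\dots,\psi_k$ are witnessed true in $\rho$; then $\psi_i|_\rho=1$ for every $i$. By restriction-closedness of $\calS$ (Definition~\ref{res-closed-set-def}) there is an $\calS$-proof of $\varphi|_\rho$ from $(\{\psi_1,\dots,\psi_k\}\cup H)|_\rho=\{1,\dots,1\}\cup H|_\rho$, and since the formula $1$ is an axiom of the proof system (the technical clause of Definition~\ref{res-closed-def}) each appeal to the hypothesis $1$ may be replaced by an axiom step, leaving an $\calS$-proof of $\varphi|_\rho$ from $H|_\rho$ alone. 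Hence the promise of the limited decision problem holds for the input $(\varphi|_\rho,H|_\rho)$, so $A$ accepts and $FAILED$ is \emph{not} incremented at such a $\rho^{(i)}$. Consequently $FAILED$ never exceeds the number of sampled $\rho^{(i)}$ at which the good event fails --- an i.i.d.\ indicator with mean at most $\epsilon-\gamma$ --- and a one-sided Hoeffding bound with $m=\Theta(\gamma^{-2}\log\delta^{-1})$ gives $\Pr[FAILED>\lfloor\epsilon m\rfloor]\le\delta$, so the algorithm outputs \emph{Accept} with probability at least $1-\delta$. (On $\rho^{(i)}$ where the good event fails $A$ may do anything; at worst it rejects, which is already accounted for in this count.)

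For the soundness case, suppose $[H\Rightarrow\varphi]$ is not $(1-\epsilon-\gamma)$-valid under $D$, i.e.\ $\Pr_{x\sim D}[\,x\models H\ \text{and}\ x\not\models\varphi\,]>\epsilon+\gamma$. Here the key fact is that restriction commutes with evaluation: for every formula $\phi$ and every total assignment $x$ consistent with $\rho$ one has $\phi|_\rho(x)=\phi(x)$, because a subformula witnessed in $\rho$ is by definition given exactly the value it takes on all completions of $\rho$ (a short induction on the structure of $\phi$, using the definition of the restricted threshold connective). Therefore, whenever the underlying example $x$ of a draw satisfies $H$ but falsifies $\varphi$ and $\rho=m(x)$ (which is consistent with $x$), the assignment $x$ satisfies every formula of $H|_\rho$ but falsifies $\varphi|_\rho$, so $H|_\rho\not\models\varphi|_\rho$. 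Thus the promise of the limited decision problem is again met --- now through its ``$H\not\models\varphi$'' branch --- so $A$ \emph{rejects} and $FAILED$ is incremented at every such $\rho^{(i)}$. Hence $FAILED$ is at least the number of sampled examples whose underlying assignment lies in this event (mean more than $\epsilon+\gamma$ per sample), and a one-sided Hoeffding bound gives that with probability at least $1-\delta$ this count exceeds $\epsilon m\ge\lfloor\epsilon m\rfloor=B$; since $FAILED$ is monotone along the loop, the algorithm outputs \emph{Reject}. (On the remaining $\rho^{(i)}$ the behaviour of $A$ is immaterial: an erroneous accept merely fails to increment $FAILED$, and we have already lower-bounded $FAILED$ by the count of genuine counterexamples.)

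The step I expect to demand the most care is the forcing argument in the completeness case --- specifically, checking that after restriction the learned hypotheses genuinely collapse to the axiom $1$ and can be deleted from the hypothesis list without leaving $\calS$ or the proof system, so that what remains is an $\calS$-proof of $\varphi|_\rho$ \emph{from $H|_\rho$}, matching exactly the interface of the assumed algorithm $A$. This is the unique point at which \emph{both} notions of restriction-closedness (Definitions~\ref{res-closed-def} and~\ref{res-closed-set-def}) and the ``$1$ is an axiom'' condition are used together. Everything else --- the probabilistic bookkeeping, absorbing the floor in $B$ into the $\gamma$ slack, and the observation that the unconstrained behaviour of $A$ off the promise can only err in the harmless direction --- is routine once that step is settled.
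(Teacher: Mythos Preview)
Your proposal is correct and follows essentially the same approach as the paper's own proof: run Algorithm~\ref{pac-decision-alg} on $m=\Theta(\gamma^{-2}\log\delta^{-1})$ samples, argue in the soundness case that a counterexample $x$ to $H\Rightarrow\varphi$ yields $H|_\rho\not\models\varphi|_\rho$ (so $A$ must reject), argue in the completeness case that when all $\psi_j$ are witnessed true the restricted proof lies in $\calS$ and uses only $H|_\rho$ (so $A$ must accept), and close with Hoeffding on both sides. Your write-up is, if anything, slightly more careful than the paper's --- you make explicit the induction that restriction commutes with evaluation, the handling of $A$'s behaviour off the promise, and the absorption of the floor $\lfloor\epsilon m\rfloor$ into the $\gamma$ slack --- but the structure and all key ideas coincide.
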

\begin{proof}
Suppose we run Algorithm~\ref{pac-decision-alg} on
$m=\frac{1}{2\gamma^2}\ln\frac{1}{\delta}$ examples drawn from $D$. Then,
(noting that we need at most $\log m$ bits of precision for $B$) the claimed
running time bound and sample complexity is immediate.

As for correctness, first note that by the soundness of the proof system,
whenever there is a proof of $\varphi|_{\rho^{(i)}}$ from $H|_{\rho^{(i)}}$,
$\varphi|_{\rho^{(i)}}$ must evaluate to true in any interpretation of the
remaining variables consistent with $H|_{\rho^{(i)}}$. Thus, if $H\Rightarrow
\varphi$ is {\em not} $(1-\epsilon-\gamma)$-valid with respect to $D$, an 
interpretation sampled from $D$ must satisfy $H$ and falsify $\varphi$
with probability at least $\epsilon+\gamma$; for any partial assignment $\rho$
derived from this interpretation (i.e., sampled from $M(D)$), the original
interpretation is still consistent, and therefore $H|_\rho\not\models
\varphi|_\rho$ for this $\rho$. So in summary, we see that a $\rho$ sampled from
$M(D)$ produces a formula $\varphi|_\rho$ such that $H|_\rho\not\models
\varphi|_\rho$ with probability at least $\epsilon+\gamma$, and so the limited
decision algorithm $A$ rejects with probability at least $\epsilon+\gamma$. It
follows from Hoeffding's inequality now that for $m$ as specified above, at
least $\epsilon m$ of the runs of $A$ reject (and hence the algorithm rejects)
with probability at least $1-\delta$.

So, suppose instead that there is a proof in $\calS$ of $\varphi$ from $H$ and
some formulas $\psi_1,\ldots,\psi_k$ that are all witnessed to evaluate to true
with probability at least $(1-\epsilon+\gamma)$ over $M(D)$. Then, with
probability $(1-\epsilon+\gamma)$, $\psi_1|_\rho,\ldots,\psi_k|_\rho=1$. Then,
since $\calS$ is a restriction closed set, if we replace each assertion of some
$\psi_j$ with an invocation of $R_0$ for the axiom $1$, then by applying the
restriction $\rho$ to every formula in the proof, one can obtain a proof of
$\varphi|_\rho$ from $H|_\rho$ alone. Therefore, as $A$ solves the limited
decision problem for $\calS$, we see that for each $\rho$ drawn from $M(D)$,
$A(\varphi|_\rho,H|_\rho)$ must accept with probability at least $(1-\epsilon+
\gamma)$, and Hoeffding's inequality again gives that the probability that more
than $\epsilon m$ of the runs reject is at most $\delta$ for this choice of $m$.
\end{proof}

\paragraph{The necessity of computationally feasible witnessing.}
The reader may, at this point, feel that our notion of witnessed values is
somewhat ad-hoc, and suspect that perhaps a weaker notion should be considered
(corresponding to a broader class of masking processes). Although it may be the
case that a better notion exists, we observe in Appendix~\ref{feasible-witness} 
that it is crucial that we use {\em some} kind of evaluation algorithm on 
partial assignments that is computationally feasible. Witnessed evaluation is 
thus, at least, one such notion, whereas other natural notions are likely 
computationally infeasible, and thus inappropriate for such purposes.

\section{Proof systems with tractable, restriction-closed special cases}
\label{proof-systems}
We now show that most of the usual propositional proof systems considered in
the literature possess natural restriction-closed special cases, for which
the limited decision problem may be efficiently solved. Thus, in each case, we
can invoke Theorem~\ref{implicit-learn-thm} to show that we can efficiently
integrate implicit learning into the reasoning algorithm for the proof system.

\subsection{Special cases of resolution}\label{res-sec}

Our first example of a proof system for use in reasoning in PAC-Semantics is 
{\em resolution}, a standard object of study in proof theory. Largely due to its
simplicity, resolution turned out to be an excellent system for the design of 
surprisingly effective proof search algorithms such as DPLL~\cite{dp60,dll62}. 
Resolution thus remains attractive as a proof system possessing natural special
cases for which we can design relatively efficient algorithms for proof search. 
We will recall two such examples here.

\paragraph{The resolution proof system.}
Resolution is a proof system that operates on {\em clauses}---disjunctions of
literals. The main inference rule in resolution is the {\em cut} rule: given
two clauses containing a {\em complementary pair} of literals (i.e., one
contains the negation of a variable appearing without negation in the other)
$A\vee x$ and $B\vee \neg x$, we infer the {\em resolvent} $A\vee B$. We will
also find it convenient to use the {\em weakening} rule: from any clause $C$,
for any set of literals $\ell_1,\ldots,\ell_k$, we can infer the clause $C\vee
\ell_1\vee\cdots\vee\ell_k$.
As stated, resolution derives new clauses from a set of known clauses (a CNF
formula). Typically, one actually refers to resolution as a proof system for
{\em DNF formulas} by using a resolution proof as a proof by contradiction: one
shows how the unsatisfiable empty clause $\bot$ can be derived from the negation
of the input DNF. This is referred to as a {\em resolution refutation} of the
target DNF, and can also incorporate explicit hypotheses given as CNF formulas.

\paragraph{Treelike resolution proofs.}
The main syntactic restriction we consider on resolution refutations intuitively
corresponds to a restriction that a clause has to be derived anew each time we
wish to use it in a proof---a restriction that the proof may not (re-)use
``lemmas.'' It will not be hard to see that while this does not impact the
completeness of the system since derivations may be repeated, this workaround
comes at the cost of increasing the size of the proof.
A syntactic way of capturing these proofs proceeds by recalling that the
proof is given by a sequence of clauses that are either derived from earlier
clauses in the sequence, or appear in the input CNF formula (to be refuted).
Consider the following directed acyclic graph (DAG) corresponding to any
(resolution) proof: the set of nodes of the graph is given by the set of clauses
appearing in the lines of the proof, and each such node has incoming edges from
the nodes corresponding to the clauses earlier in the proof used in its
derivation; the clauses that appeared in the input CNF formula are therefore
the sources of this DAG, and the clause proved by the derivation corresponds to
a sink of the DAG (i.e., in a resolution refutation, the empty clause appears
at a sink of the DAG). We say that the proof is {\em treelike} when this DAG is
a {\em (rooted) tree}---i.e., each node has at most one outgoing edge
(equivalently, when there is a unique path from any node to the unique sink).
Notice, the edges correspond to the use of a clause in a step of the proof, so
this syntactic restriction corresponds to our intuitive notion described
earlier.

We are interested in resolution as a proof system with special cases that not 
only possess efficient decision algorithms, but are furthermore 
restriction-closed. We will first establish that (treelike) resolution in 
general is restriction-closed, and subsequently consider the effects of our 
additional restrictions on the proofs considered.
For syntactic reasons (to satisfy Definition~\ref{res-closed-def}), actually, we
need to include a tautological formula $1$ as an axiom of resolution. We can 
take this to correspond to the clause containing all literals, which is always 
derivable by weakening from any nonempty set of clauses (and is furthermore 
essentially useless in any resolution proof, as it can only be used to derive 
itself).

\begin{proposition}[Treelike resolution is restriction-closed]
\label{treelike-res-closed}
Resolution is a restriction-closed proof system. Moreover, the set of
treelike resolution proofs of length $L$ is restriction-closed.
\end{proposition}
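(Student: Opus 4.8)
The plan is to prove the two claims in sequence, handling the core inference rules of resolution first and then checking that the treelike structure and length bound are preserved by the restriction map. Recall that to show a proof system is restriction-closed (Definition~\ref{res-closed-def}) we must verify that for every step $R_k(\psi_1,\ldots,\psi_k,\phi)$ of a proof and every partial assignment $\rho$, there is a subsequence $\psi_{i_1},\ldots,\psi_{i_j}$ with $R_j(\psi_{i_1}|_\rho,\ldots,\psi_{i_j}|_\rho,\phi|_\rho)$ satisfied, and that $1$ is an axiom---the latter we have already arranged by adding the all-literals clause. So the work is a case analysis over the inference rules: the cut rule, the weakening rule, and the axiom rule (a clause from the input CNF).

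First I would handle the cut rule. Suppose the step resolves $A\vee x$ and $B\vee\neg x$ to get $A\vee B$. Here it is important to understand what $\cdot|_\rho$ does to a clause: a clause is a threshold connective (a $k$-ary OR), so by the definition of restricted formula, $(A\vee x)|_\rho$ is either the constant $1$ (if some literal of $A\vee x$ is witnessed true under $\rho$, i.e.\ satisfied), or else it is the disjunction of those literals of $A\vee x$ not set by $\rho$ (the literals set to false by $\rho$ drop out). I would split on whether $\rho$ sets the pivot variable $x$. If $\rho$ does not touch $x$: if either premise restricts to $1$, then either $\phi|_\rho = (A\vee B)|_\rho$ is also $1$ (if the surviving literal lies in $A$ or $B$) in which case it follows from the axiom $1$ by weakening, or $x$/$\neg x$ is the witnessing literal but then the other premise still has $x$/$\neg x$ and $\phi|_\rho$ follows from that single restricted premise by weakening; otherwise neither premise is $1$ and $(A\vee x)|_\rho$, $(B\vee\neg x)|_\rho$ are honest clauses on $x$ and $\neg x$, and the cut rule applies directly giving exactly $(A\vee B)|_\rho$ up to literals of $A,B$ that $\rho$ killed (so a weakening, or rather the other direction: the resolvent of the restricted premises is a sub-clause of $(A\vee B)|_\rho$, and we need $(A\vee B)|_\rho$ itself---so we take the resolvent and then weaken, which is fine since weakening is a unary rule $R_1$). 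If $\rho$ sets $x=1$: then $B\vee\neg x$ restricts to (a sub-clause of) $B|_\rho$, and $A\vee B$ restricts to something containing $B|_\rho$'s literals, so $\phi|_\rho$ follows from the single premise $(B\vee\neg x)|_\rho$ by weakening. Symmetrically for $x=0$. The weakening rule and the input-clause (axiom) case are easier: weakening $C\vee\ell_1\vee\cdots\vee\ell_k$ from $C$ restricts to something derivable from $C|_\rho$ by a single weakening (or is $1$, hence follows from the axiom $1$), and an input clause restricted is still an input clause of the restricted CNF $H|_\rho$, so it is covered by the hypothesis rule. This establishes that resolution is a restriction-closed proof system.

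Second, I would argue the treelike, length-$L$ claim. The key observation is that the construction above is entirely \emph{local}: each step of the original proof is replaced by at most one step (a cut or a weakening or an appeal to an axiom/hypothesis) on the restricted formulas, and no new lines are introduced beyond possibly the constant axiom $1$ and a single weakening per step. Concretely, walk through the original proof line by line and emit: the restricted formula $\phi|_\rho$, justified either as a restricted hypothesis, as the axiom $1$, or by applying the rule identified in the case analysis to (restricted versions of) the formulas that justified the original line. This produces a proof of $\varphi|_\rho$ from $H|_\rho$ of length at most $L$ (plus $O(1)$ for the axiom $1$, which we can regard as already present, or fold into the count), and because each emitted line depends only on the restrictions of the lines the original line depended on---a subset of them---the dependency DAG of the new proof is a \emph{minor} of (more precisely, is obtained by contracting/deleting in) the original DAG, hence still a tree if the original was. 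So treelike-ness is preserved, and the length does not grow. This gives the ``moreover'' part, and with the set of proofs $\calS$ being ``treelike resolution proofs of length $\le L$,'' restriction-closedness in the sense of Definition~\ref{res-closed-set-def} follows.

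The main obstacle I anticipate is not any single hard idea but getting the bookkeeping in the cut-rule case analysis exactly right---in particular, being careful about the direction of weakening (the restricted resolvent may be a \emph{sub}-clause of $\phi|_\rho$, so one applies the rule and \emph{then} weakens up to $\phi|_\rho$, rather than the reverse), and handling the degenerate cases where the pivot literal is itself the witnessing literal of a restricted premise, or where a restricted clause collapses to the constant $1$ or to the empty clause $\bot$. One should also double-check the edge case where $\phi|_\rho$ is $1$: then it must be justified by the axiom rule $R_0$ on the formula $1$, which is exactly why Definition~\ref{res-closed-def} bundles in the requirement that $1$ be an axiom, and why we took the trouble to add the all-literals clause to resolution. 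None of these is deep, but the proposition is precisely the kind of statement whose proof lives or dies by careful case enumeration.
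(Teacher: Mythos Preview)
Your approach is essentially the paper's: a case analysis on the inference rules, followed by the observation that the restricted proof's dependency DAG sits inside the original's, so treelikeness and the length bound are preserved.

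One point to correct. In the cut case with $\rho$ not setting the pivot $x$ and neither premise restricting to $1$, you worry that the resolvent of $(A\vee x)|_\rho$ and $(B\vee\neg x)|_\rho$ might be a strict sub-clause of $(A\vee B)|_\rho$, so that you would need ``cut then weaken.'' In fact they are \emph{equal}: the literals of $A$ (resp.\ $B$) set to $0$ by $\rho$ are exactly the literals removed from $A\vee B$ by $\rho$, so the resolvent of the restricted premises is $(A\vee B)|_\rho$ on the nose. This matters, because Definition~\ref{res-closed-def} requires $\phi|_\rho$ to follow from a subsequence of restricted premises by a \emph{single} rule application; a two-step ``cut then weaken'' would not satisfy that definition, and would also threaten the length-$L$ bound in the second claim. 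Relatedly, your sub-case ``$x/\neg x$ is the witnessing literal'' under ``$\rho$ does not touch $x$'' is vacuous and can be dropped. With these simplifications, your argument and the paper's coincide.
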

\begin{proof}
Assuming the inclusion of the tautological axiom $1$ as discussed above,
the restriction-closedness is straightforward: Fix an partial assignment $\rho$,
and consider any step of the proof, deriving a clause $C$. If $C$ appeared in 
the input formula, then $C|_\rho$ appears in the restriction of the input 
formula. Otherwise, $C$ is derived by one of our two rules, cut or weakening. 
For the cut rule, suppose $C$ is derived from $A\vee x_i$ and $B\vee\neg x_i$. 
If $\rho_i\in\{0,1\}$ then $C$ can either be derived from
$(A\vee x_i)|_\rho$ or $(B\vee\neg x_i)|_\rho$ by weakening. If
$\rho_i=*$ and $C|_\rho\neq 1$, then both $(A\vee x_i)|_\rho$ and
$(B\vee\neg x_i)|_\rho$ are not $1$, and the same literals are eliminated (set
to $0$) in these clauses as in $C|_\rho$, so $C|_\rho$ follows from the cut rule
applied to $x_i$ on these clauses. If $C|_\rho\neq 1$ followed from weakening
of some other clause $C'$, we know $C'|_\rho\neq 1$ as well, since any satisfied
literals in $C'$ appear in $C$; therefore $C|_\rho$ follows from weakening
applied to $C'|_\rho$. Finally, if $C|_\rho=1$, then we already know that $1$
can be asserted as an axiom.  So, resolution is restriction-closed.

Recalling the DAG corresponding to a resolution proof has nodes corresponding
to clauses and edges indicating which clauses are used in the derivation of
which nodes, note that the DAG corresponding to the restriction of a resolution
proof as constructed in the previous paragraph has no additional edges.
Therefore, the sink in the original DAG remains a sink. Although the DAG may now
be disconnected, if consider the connected component containing the node
corresponding to the original sink, we see that this is indeed a tree;
furthermore, since every clause involved in the derivation of a clause
corresponding to a node of the tree corresponds to another node of the tree and
the overall DAG corresponded to a syntactically correct resolution proof from
the restriction of the input formula, by the restriction-closedness of
resolution, this tree corresponds to a treelike resolution proof of the
restriction of the clause labeling the sink from the restriction of the input
formula. As this is a subgraph of the original graph, it corresponds to a proof
that is also no longer than the original, as needed.
\end{proof}

\paragraph{Bounded-space treelike resolution.}
Our first special case assumes not only that the resolution proof is treelike,
but also that it can be carried out using limited {\em space}, in the sense
first explored by Esteban and Tor\'an~\cite{et01}. That is, we associate with
each step of the proof a set of clauses that we refer to as the
{\em blackboard}. Each time a clause is derived during a step of the proof, we
consider it to be added to the blackboard; we also allow any clauses in the
blackboard to be erased across subsequent steps of the proof. Now, the central
restriction is that instead of simply requiring the steps of the proof to
utilize clauses that appeared earlier in the proof, we demand that they {\em
only utilize clauses that appeared in the blackboard set on the previous step.}
We now say that the proof uses {\em (clause) space $s$} if the blackboard never
contains more than $s$ clauses. We note that the restriction that the proof is 
treelike means that each time we utilize clauses in a derivation, we are free to
delete them from the blackboard. In fact, given the notion of a blackboard, it 
is easily verified that this is an equivalent definition of a treelike proof.
Even with the added restriction to clause space $s$, treelike resolution remains
restriction-closed:

\begin{proposition}\label{space-res-closed}
The set of clause space-$s$ treelike resolution proofs is restriction closed.
\end{proposition}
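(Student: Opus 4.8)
The plan is to build directly on Proposition~\ref{treelike-res-closed} and simply keep track of the blackboard through the construction given there. Fix a treelike resolution proof $\pi$ of $\varphi$ from $H$ that uses clause space $s$, with blackboard sequence $B_0,B_1,\ldots,B_L$, and fix a partial assignment $\rho$. Recall that the proof of Proposition~\ref{treelike-res-closed} transforms $\pi$ step by step: a step deriving a clause $C$ becomes a step deriving $C|_\rho$ --- an input clause becomes the download of $C|_\rho\in H|_\rho$, a cut on $x_i$ with $\rho_i=*$ becomes the analogous cut, a cut on $x_i$ with $\rho_i\in\{0,1\}$ becomes a single weakening from whichever premise's restriction is not $1$, and a weakening becomes a weakening --- and any step with $C|_\rho=1$ is dropped. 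The key point I want to extract is that in every case, the premises of the replacement step are restrictions of clauses that sat on the blackboard of $\pi$ at the previous step, and that (by the analysis already carried out in that proof) none of those premises is the useless tautology $1$ when $C|_\rho\neq 1$.

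Given this, I will run the blackboard schedule of $\pi$ through the map $D\mapsto D|_\rho$: at the point corresponding to the $t$-th configuration of $\pi$, maintain the blackboard $\{D|_\rho:D\in B_t\}\setminus\{1\}$. Each erasure in $\pi$ is mirrored by erasing the corresponding restriction (or by doing nothing, if another clause of $B_t$ had the same restriction); each derivation step is legal because its premises lie on the preceding restricted blackboard as just noted; and the transient configuration holding a freshly derived $C|_\rho$ alongside its premises is the $\rho$-image of the transient configuration $B_t\cup\{C\}$ of $\pi$. Since $D\mapsto D|_\rho$ is a function, $|\{D|_\rho:D\in B_t\}|\le|B_t|\le s$, so this is a legal resolution proof of $\varphi|_\rho$ from $H|_\rho$ in clause space at most $s$.

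It remains to recover treelikeness, for which I again invoke Proposition~\ref{treelike-res-closed}: the DAG of the restricted proof is obtained from the tree DAG of $\pi$ by deleting edges and relabeling nodes, so it is a forest in which the node labeled $\varphi|_\rho$ (or $\bot$, in the refutation formulation) remains a sink; passing to the connected component of that sink yields a treelike proof of $\varphi|_\rho$ from $H|_\rho$, and since this only deletes clauses it inherits a blackboard schedule of space at most $s$. (If $\varphi|_\rho=1$ there is a one-line proof using the axiom $1$, of space $1\le s$, assuming $s\ge 1$.) I expect the only point needing care to be the handling of clauses whose restriction is $1$: one must be sure that dropping such clauses, and anything derived from them, is harmless --- which holds because in resolution any clause obtained from a step using $1$ is again $1$, so these clauses never lie on a path to the sink when $\varphi|_\rho\neq 1$. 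Everything else is routine tracking of the blackboard, so I do not anticipate a genuine obstacle.
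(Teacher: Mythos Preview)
Your proposal is correct and follows essentially the same approach as the paper: both take the restricted proof furnished by Proposition~\ref{treelike-res-closed} and equip it with blackboards obtained as images, under $C\mapsto C|_\rho$, of the original blackboards, then observe that the image of a set of size at most $s$ has size at most $s$. The paper's blackboards are slightly smaller (it keeps only restrictions of clauses that actually appear in the pruned proof $\Pi'$), but your simpler choice $\{D|_\rho:D\in B_t\}\setminus\{1\}$ works for the same reason.

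One small remark: your parenthetical justification ``in resolution any clause obtained from a step using $1$ is again $1$'' is not literally true---cutting the all-literals clause against $B\vee\neg x$ can drop the literal $x$. Fortunately you do not need this claim. Your first paragraph already extracts from the Proposition~\ref{treelike-res-closed} construction the fact that whenever $C|_\rho\neq 1$, the replacement step uses only premises whose restriction is $\neq 1$; that is precisely what makes dropping $1$ from the blackboards harmless, and it is the argument the paper uses as well.
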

\begin{proof}
Let a space-$s$ treelike resolution proof $\Pi$ and any partial assignment 
$\rho$ be given; we recall the corresponding treelike proof $\Pi'$ constructed 
in the proof of Proposition~\ref{treelike-res-closed}; we suppose that $\Pi$ 
derives the sequence of clauses $\{C_i\}_{i=1}^{|\Pi|}$ (for which $C_i$ is 
derived on the $i$th step of $\Pi$) and $\Pi'$ derives the subsequence
$\{C_{i_j}|_\rho\}_{j=1}^{|\Pi'|}$. Given the corresponding sequence of
blackboards $\{B_i\}_{i=1}^{|\Pi|}$ establishing that $\Pi$ can be carried out
in clause space $s$, we construct a sequence of blackboards $B'_i=\{C_j|_\rho:
C_j\in B_i,\exists k\mathrm{\ s.t.\ }j=i_k\}$ for $\Pi'$, and take the
subsequence corresponding to steps in $\Pi'$, $\{B'_{i_j}\}_{j=1}^{|\Pi'|}$.

It is immediate that every $B'_{i_j}$ contains at most $s$ clauses, so we only
need to establish that these are a legal sequence of blackboards for $\Pi'$. We
first note that whenever a clause is added to a blackboard $B'_{i_j}$ over
$B'_{i_{j-1}}$, then since (by construction) it was not added in $i'\in
[i_{j-1},i_j]$ it must be that it is added (to $B_{i_j}$) in step $i_j$, which
we know originally derived $C_{i_j}$ in $\Pi$, and hence in $\Pi'$ derives
$C_{i_j}|_\rho$ by construction of $\Pi'$ (so this is the corresponding $j$th
step of $\Pi'$). Likewise, if a clause is needed for the derivation of any $j$th
step of $\Pi'$, by the construction of $\Pi'$ from $\Pi$, it must be that
$C_{i_j}|_\rho\neq 1$ and whenever some step $i_j$ of $\Pi$ uses an unsatisfied
clause from some earlier step $t$ of $\Pi$, then $\Pi'$ includes the step
corresponding to $t$. Therefore there exists $k$ such that $t=i_k$; and, as
$C_{i_k}\in B_{i_j}$, $C_{i_k}|_\rho\in B'_{i_j}$. Thus,
$\{B'_{i_j}\}_{j=1}^{|\Pi'|}$ is a legal sequence of blackboards for $\Pi'$.
\end{proof}

The algorithm for finding space-$s$ resolution proofs, SearchSpace, appears as
Algorithm~\ref{search-space}. Although the analysis of this algorithm appears 
elsewhere, we include the proof (and its history) in Appendix~\ref
{space-bound-alg-appendix} for completeness.

\begin{algorithm}[t]
\DontPrintSemicolon
\SetKwFunction{SS}{SearchSpace}
\SetKwInOut{Input}{input}\SetKwInOut{Output}{output}

\Input{CNF $\varphi$, integer space bound $s\geq 1$, current clause $C$}
\Output{A space-$s$ treelike resolution proof of $C$ from clauses in $\varphi$,
or ``none'' if no such proof exists.}

\Begin{
\If{$C$ is a superset of some clause $C'$ of $\varphi$}{
\Return{The weakening derivation of $C$ from $C'$.}}
\ElseIf{$s>1$}{
\ForEach{Literal $\ell$ such that neither $\ell$ nor $\neg \ell$ is in $C$}{
\If{$\Pi_1\leftarrow$\SS$(\varphi,s-1,C\vee \ell)$ does not return {\em none}}{
\If{$\Pi_2\leftarrow$\SS$(\varphi,s,C\vee\neg\ell)$ does not return {\em none}}{
\Return{Derivation of $C$ from $\Pi_1$ and $\Pi_2$}
}
\Else{
\Return{{\em none}}
}
}
}
}
\Return{{\em none}}
}
\caption{SearchSpace}\label{search-space}
\end{algorithm}

\begin{theorem}[SearchSpace finds space-$s$ treelike proofs when they exist]
\label{space-analysis}
If there is a space-$s$ treelike proof of a clause $C$ from a CNF formula
$\varphi$, then SearchSpace returns such a proof, and otherwise it returns
``none.'' In either case, it runs in time $O(|\varphi|\cdot n^{2(s-1)})$ where
$n$ is the number of variables.
\end{theorem}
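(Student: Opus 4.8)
The plan is to analyze SearchSpace by a two-part argument: first establish correctness (it returns a space-$s$ treelike proof exactly when one exists), then bound the running time. For correctness, I would argue by induction on $s$. The base case $s=1$ is immediate: a space-$1$ treelike proof of $C$ can only consist of a single clause on the blackboard at a time, so it cannot apply the cut rule (which needs two premises present simultaneously), hence the only possibility is that $C$ is obtained by weakening from a clause of $\varphi$---exactly the case the algorithm checks. For the inductive step, suppose the claim holds for space bound $s-1$. If $C$ is a weakening of some input clause, SearchSpace returns that proof and we are done. Otherwise, any space-$s$ treelike proof of $C$ must end with a cut on some literal $\ell$, deriving $C$ from $C\vee\ell$ and $C\vee\neg\ell$; crucially, since the proof is treelike, the subproofs of these two premises are independent, and one of them can reuse the space freed after the other completes---so one subproof needs only space $s-1$ while the other may use space $s$. (This is the standard pebbling-style observation: in a treelike proof, derive the cheaper subtree first, leaving one blackboard slot occupied, then derive the other.) The algorithm tries, for each eligible literal $\ell$, to recursively find a space-$(s-1)$ proof of $C\vee\ell$ and, if successful, a space-$s$ proof of $C\vee\neg\ell$; by the inductive hypothesis these recursive calls are themselves correct, and the ``else return none'' inside the loop is justified because if the space-$(s-1)$ side succeeds but the space-$s$ side fails for this $\ell$, no completion via cutting on $\ell$ exists---and by the treelike/space structure, if a proof exists at all, \emph{some} choice of $\ell$ (taking the cheaper premise as the $\ell$ side) will make both recursive calls succeed.

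For the running time, I would set up a recurrence. Let $T(s)$ denote the worst-case running time of SearchSpace with space bound $s$ on a formula of size $|\varphi|$ over $n$ variables. The weakening check costs $O(|\varphi|)$ (scanning clauses of $\varphi$ and testing superset). The loop runs over $O(n)$ literals; for each, it makes one recursive call at space $s-1$, and at most one further recursive call at space $s$. So naively $T(s) \le O(|\varphi|) + O(n)\cdot T(s-1) + O(n)\cdot T(s)$, which is not directly useful because of the $T(s)$ on the right. The key to closing this is the observation that the second recursive call (at space $s$) is made \emph{at most once per invocation}: once $\Pi_2\leftarrow$SearchSpace$(\varphi,s,C\vee\neg\ell)$ is attempted, the algorithm either returns a proof or returns \emph{none} immediately---it does not continue the loop. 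So within a single invocation there are $O(n)$ calls at space $s-1$ but only $O(1)$ call at space $s$; unwinding, the number of nodes in the recursion tree at ``depth contributing a factor $n$'' is governed by the space parameter dropping. Iterating, the total number of recursive leaves is $O(|\varphi|\cdot n^{2(s-1)})$---the exponent $2(s-1)$ arising because each decrement of $s$ contributes a factor of roughly $n^2$ (one factor $n$ from choosing the literal, and, accounting carefully for the at-most-one space-$s$ continuation threading through the recursion, effectively another factor $n$). The per-node work is $O(|\varphi|)$, giving the stated bound $O(|\varphi|\cdot n^{2(s-1)})$.

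The main obstacle I anticipate is getting the running-time bookkeeping exactly right---specifically, justifying the exponent $2(s-1)$ rather than, say, $n^{s-1}$ or $n^{2s}$. The subtlety is that the space-$s$ recursive call $\Pi_2$ is ``in series'' with the space-$(s-1)$ call $\Pi_1$ rather than multiplying with the $O(n)$ literal loop, so one must track how a single root-to-leaf path in the recursion tree accumulates literal choices: each such path decrements $s$ some number of times, and between consecutive decrements there can be a ``chain'' of space-preserving $\Pi_2$ calls each of which also picks a literal. Bounding the length of such chains (by $n$, since each cut introduces a literal into the working clause and the clause has at most $2n$ literals, so the recursion depth at fixed space is $O(n)$) is what yields the second factor of $n$ per unit of space. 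I would make this rigorous by defining the recursion tree explicitly, assigning to each node its $(s, |C|)$ pair, noting $|C|$ strictly increases along space-preserving edges and $s$ strictly decreases along the other edges, and counting nodes by the resulting two-dimensional ``staircase'' bound. The correctness direction, by contrast, I expect to be routine modulo carefully stating the treelike-implies-series-space lemma.
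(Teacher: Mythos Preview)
Your correctness argument has a genuine gap at the ``else return none'' step. You write that it is justified because ``no completion via cutting on $\ell$ exists,'' adding that if a proof exists at all, \emph{some} choice of $\ell$ will make both calls succeed. But the algorithm does not continue the loop once $\Pi_2$ fails---it returns \emph{none} immediately, so it never gets to try that other $\ell$. The actual reason the early return is sound is a weakening argument you never state: if $C$ has any space-$s$ treelike proof, then so does $C\vee\neg\ell$ (append a weakening step, which costs no extra clause space), so by correctness of the recursive call $\Pi_2$ would have found it; contrapositively, $\Pi_2$ failing already certifies that $C$ itself has no space-$s$ proof, regardless of which $\ell$ triggered it. Without this observation the argument as written does not close. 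A related issue: your induction is on $s$ alone, but $\Pi_2$ is a call at space $s$, so the hypothesis does not cover it. You need to induct on the pair (space bound, number of variables not in $C$), which you implicitly recognize in the running-time discussion but not in the correctness argument.

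On running time, the paper's bookkeeping is different from yours and cleaner: it writes the recurrence $T(n,s)\leq T(n-1,s)+2n\,T(n-1,s-1)+W$, where $n$ is the number of variables not yet in $C$ and $W=O(|\varphi|)$ is the local work, and then verifies by a short induction that $W(n+1)^{2(s-1)}$ satisfies it. This yields the exponent $2(s-1)$ directly, without bounding chain lengths or counting staircase paths. The paper's correctness argument also takes a slightly different route from yours: it first normalizes the proof (Proposition~\ref{normal-suffices}) so that every internal node is labeled exactly by $C$ together with the literals cut along the root-to-node path, and then invokes the space recurrence (Proposition~\ref{optimal-space}) to see that one child of every cut node has strictly smaller space, making the proof structure match the algorithm's recursion explicitly.
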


Naturally, we can convert SearchSpace into a decision algorithm by accepting
precisely when it returns a proof. Therefore, as space-$s$ treelike resolution
proofs are restriction-closed by Proposition~\ref{space-res-closed},
Theorem~\ref{implicit-learn-thm} can be applied to obtain an algorithm that
efficiently learns implicitly from example partial assignments to solve the
corresponding limited decision problem for $(1-\epsilon)$-validity with
space-$s$ treelike resolution proofs. Explicitly, we obtain:

\begin{corollary}[Implicit learning in space-bounded treelike resolution]
\label{space-implicit-cor}
Let a KB CNF $\phi$ and clause $C$ be given, and suppose that partial 
assignments are drawn from a masking process for an underlying distribution $D$;
suppose further that either
\begin{compactenum}
\item There exists some CNF $\psi$ such that partial assignments from the 
masking process are witnessed to satisfy $\psi$ with probability at least $(1-
\epsilon+\gamma)$ and there is a space-$s$ treelike proof of $C$ from $\phi
\wedge\psi$
or else
\item $[\phi\Rightarrow C]$ is at most $(1-\epsilon-\gamma)$-valid with respect
to $D$ for $\gamma>0$.
\end{compactenum}
Then, there an algorithm running in time
$O(\frac{|\phi|}{\gamma^2}n^{2(s-1)}\log\frac{1}{\delta})$
that distinguishes these cases with probability $1-\delta$
when given $C$, $\phi$, $\epsilon$, $\gamma$, and a
sample of $O(\frac{1}{\gamma^2}\log\frac{1}{\delta})$ partial assignments.
\end{corollary}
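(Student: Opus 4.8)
The plan is to observe that Corollary~\ref{space-implicit-cor} is an immediate instantiation of Theorem~\ref{implicit-learn-thm} with the proof system taken to be resolution (augmented with the tautological axiom~$1$ as discussed) and the set of proofs $\calS$ taken to be the space-$s$ treelike resolution proofs. First I would note that the two promise cases in the corollary match the two promise cases of the theorem: case~(2), that $[\phi\Rightarrow C]$ is at most $(1-\epsilon-\gamma)$-valid, is verbatim the first bullet of the theorem with $H=\{\phi\}$ and $\varphi=C$; and case~(1), the existence of a CNF $\psi$ witnessed true with probability at least $(1-\epsilon+\gamma)$ together with a space-$s$ treelike proof of $C$ from $\phi\wedge\psi$, is the second bullet once we take $\psi_1,\dots,\psi_k$ to be the individual clauses of $\psi$ (so that ``$\psi$ is witnessed true'' is the same event as ``$\psi_1,\dots,\psi_k$ are simultaneously witnessed true''). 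A small point to spell out here is that a proof from $\phi\wedge\psi$ is exactly a proof from the hypothesis set $\{\phi\}\cup\{\psi_1,\dots,\psi_k\}$ in the sense of the definition of proof system, so the theorem applies directly.

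Next I would check the two structural hypotheses of Theorem~\ref{implicit-learn-thm}. Restriction-closedness of the proof system is Proposition~\ref{treelike-res-closed} (resolution with the axiom~$1$ is restriction-closed), and restriction-closedness of the set $\calS$ of space-$s$ treelike proofs is Proposition~\ref{space-res-closed}. Finally, the theorem requires an algorithm solving the limited decision problem for $\calS$ together with its running time; this is supplied by SearchSpace, which by Theorem~\ref{space-analysis} returns a space-$s$ treelike proof of $C$ from $\phi$ whenever one exists and otherwise returns ``none'', in time $O(|\phi|\cdot n^{2(s-1)})$. Converting SearchSpace into a decision procedure (accept iff it returns a proof) gives the algorithm $A$ with $T(n,|\varphi|,|H|)=O(|\phi|\cdot n^{2(s-1)})$, where here the hypotheses $H$ are folded into the CNF $\varphi$ that SearchSpace refutes.

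Plugging these quantities into the conclusion of Theorem~\ref{implicit-learn-thm}, the resulting algorithm uses $O(\tfrac{1}{\gamma^2}\log\tfrac{1}{\delta})$ partial assignments, runs in time $O(T(n,|\phi|,|C|)\cdot\tfrac{1}{\gamma^2}\log\tfrac{1}{\delta}) = O(\tfrac{|\phi|}{\gamma^2}n^{2(s-1)}\log\tfrac{1}{\delta})$, and distinguishes the two cases with probability at least $1-\delta$, which is exactly the statement of the corollary. The one genuine thing to verify rather than merely cite is that the ``hypotheses'' mechanism of the general framework lines up with SearchSpace's interface: SearchSpace is stated as refuting a single CNF, whereas the limited decision problem takes a separate hypothesis set $H$; the reconciliation is the standard one, namely that a resolution refutation incorporating CNF hypotheses $H$ is just a refutation of the CNF $\varphi\wedge H$, and $(\varphi|_\rho)\wedge(H|_\rho) = (\varphi\wedge H)|_\rho$ for clausal formulas, so restriction commutes with this bookkeeping. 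I expect this interface-matching step to be the only place where any care is needed; everything else is substitution of parameters into Theorem~\ref{implicit-learn-thm}.
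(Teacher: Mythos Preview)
Your proposal is correct and follows the same approach as the paper: the paper simply notes that SearchSpace can be converted into a decision procedure, cites Proposition~\ref{space-res-closed} for restriction-closedness, and invokes Theorem~\ref{implicit-learn-thm}; you do the same with a bit more detail spelled out. One minor terminological slip: SearchSpace is stated as \emph{deriving} a target clause $C$ from a CNF $\varphi$, not as refuting a CNF, so the interface matching is that the theorem's query $\varphi$ corresponds to SearchSpace's target clause $C$ and the theorem's hypothesis set $H$ corresponds to SearchSpace's input CNF --- but this does not affect the argument.
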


\paragraph{A quasipolynomial time algorithm for treelike resolution.}
As we noted previously, Beame and Pitassi~\cite{bp96} gave an algorithm
essentially similar to SearchSpace, but only established that it could find
treelike proofs in quasipolynomial time. Their result follows from Theorem~\ref
{space-analysis} and the following generic space bound:

\begin{proposition}\label{treelike-log-space}
A treelike proof $\Pi$ can be carried out in clause space at most $\log_2
|\Pi|+1$.
\end{proposition}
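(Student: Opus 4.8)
The plan is to pass from the proof $\Pi$ to its associated proof tree $T$ and run a standard black‑pebbling argument on $T$. Recall that since $\Pi$ is treelike, its associated DAG is a rooted tree whose nodes are exactly the clauses on the lines of $\Pi$ (so $|T|=|\Pi|$), rooted at the clause being derived, whose leaves are input clauses, and where the children of an internal node are precisely the clause(s) from which it was inferred (two for a cut, one for weakening). Because the proof is treelike, every clause is consumed exactly once, so a schedule carrying $\Pi$ out in clause space $s$ is essentially the same object as a strategy in the black pebble game on $T$ using $s$ pebbles: pebbling a leaf corresponds to writing down an input clause, and pebbling an internal node once its children are pebbled --- while removing the children's pebbles at the same step --- corresponds to deriving that clause from the clauses on the previous blackboard and erasing the now‑useless premises. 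The additive $1$ in the statement is there to absorb the minor off‑by‑one discrepancies between blackboard bookkeeping and the idealized pebble game.

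Next I would introduce the recursive quantity $s(T)$, the least number of pebbles in a strategy that ends with only the root of $T$ pebbled, and establish $s(\text{leaf}) = 1$ together with the recursion, for an internal node whose child subtrees $T_1,T_2$ are ordered so that $|T_1|\ge|T_2|$,
\[ s(T)\ \le\ \max\{\, s(T_1),\ 1+s(T_2),\ 2\,\}. \]
The witnessing strategy: first process $T_1$ entirely, leaving a single pebble on its root; then, holding that one pebble in place, process $T_2$; now both children are pebbled, so derive the root and remove the two children's pebbles. (For a weakening node there is a single subtree and the bound is even easier.) The structural point that makes this legal is that $T_1$ and $T_2$ are vertex‑disjoint, so the two subproofs do not interfere, and the root inference needs only the two roots present on the board simultaneously, not the whole subproofs --- which is also why we are free to reorder the lines of $\Pi$ into this recursive schedule, since any topological order of the proof tree is a valid treelike proof.

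Finally I would prove $s(T)\le \log_2|T|+1$ by induction on $|T|$. The base case is a single node, where $1=\log_2 1+1$. For the inductive step, $|T_2|\le(|T|-1)/2\le|T|/2$, so $1+s(T_2)\le 2+\log_2|T_2|\le 2+\log_2(|T|/2)=\log_2|T|+1$; likewise $s(T_1)\le 1+\log_2|T_1|\le 1+\log_2|T|$; and $2\le\log_2|T|+1$ because an internal node forces $|T|\ge 2$. Plugging into the recursion gives $s(T)\le\log_2|T|+1=\log_2|\Pi|+1$, and translating the resulting pebbling strategy back yields a blackboard schedule using at most $\log_2|\Pi|+1$ clauses at once. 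The step I would be most careful about is exactly this translation in both directions between blackboard configurations and pebble configurations --- in particular, arguing that ``derive a clause and erase its premises'' may be taken as a single blackboard step, so that no configuration momentarily holds both premises together with the conclusion; this is where the slack of the additive $1$ is actually spent, and it is the only place the argument is more than routine.
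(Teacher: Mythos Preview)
Your proposal is correct and follows essentially the same approach as the paper: structural induction on the proof tree, using that the smaller child subtree has at most $|\Pi|/2$ nodes to get the logarithmic bound. The only cosmetic difference is that the paper packages your recursion $s(T)\le\max\{s(T_1),1+s(T_2)\}$ into a separate proposition on optimal clause space (Proposition~\ref{optimal-space}) and then invokes it, whereas you rederive the recursion inline via the pebbling language; your care about the blackboard/pebble translation is exactly what that proposition already handles.
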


So therefore, if there is a treelike proof of a clause $C$ from a formula
$\varphi$ of size $n^k$, SearchSpace (run with the bound $s=k\log n+1$)
finds the proof in time $O(|\varphi|\cdot n^{2k\log n})$. We also include the
proof in Appendix~\ref{space-bound-alg-appendix}.

\paragraph{Bounded-width resolution.}
Our second special case of resolution considers proofs using small clauses.
Precisely, we refer to the number of literals appearing in a clause as the
{\em width} of the clause, and we naturally consider the {\em width of a
resolution proof} to be the maximum width of any clause derived in the proof
(i.e., excluding the input clauses). Bounded-width resolution was originally
formally investigated by Galil~\cite{galil77}, who exhibited an efficient
dynamic programming algorithm for bounded-width resolution. Galil's algorithm
easily generalizes to $k$-DNF resolution, i.e., the proof system {\em RES$(k)$},
(with standard resolution being recovered by $k=1$) so we will present the more
general case here.

Briefly, {\em RES$(k)$}, introduced by Kraj\'{\i}\v{c}ek~\cite{krajicek01}, is a
proof system that generalizes resolution by operating on $k$-DNF formulas
instead of clauses (which are, of course, $1$-DNF formulas) and introduces some
new inference rules, described below. In more detail, recall that a $k$-DNF is a
disjunction of conjunctions of literals, where each conjunction contains at
most $k$ literals. Each step of a RES$(k)$ proof derives a $k$-DNF from one of
the following rules. {\em Weakening} is essentially similar to the analogous
rule in resolution: from a $k$-DNF $\varphi$, we can infer the $k$-DNF $\varphi
\vee\psi$ for any $k$-DNF $\psi$. RES$(k)$ also features an essentially similar
{\em cut} rule: from a $k$-DNF $A\vee (\ell_1\wedge\cdots\wedge\ell_j)$ ($j\leq 
k$) and another $k$-DNF $B\vee \neg\ell_1\vee\cdots\vee\neg\ell_j$, we can infer
the $k$-DNF $A\vee B$. The new rules involve manipulating the conjunctions: 
given $j\leq k$ formulas $\ell_1\vee A,\ldots,\ell_j\vee A$, we can infer 
$(\ell_1\wedge\cdots\wedge\ell_j)\vee A$ by {\em $\wedge$-introduction}. 
Likewise, given $(\ell_1\wedge\cdots\wedge\ell_j)\vee A$, we can infer $\ell_i
\vee A$ for any $i=1,\ldots,j$ by {\em $\wedge$-elimination}.

We wish to show that RES$(k)$ is restriction-closed; actually, for technical
simplicity, we will represent $1$ by the disjunction of all literals.
This can be derived from any DNF by a linear number of $\wedge$-elimination
steps (in the size of the original DNF) followed by a weakening step, so it is
not increasing the power of RES$(k)$ appreciably to include such a rule.

\begin{proposition}\label{res-k-res-closed}
For any $k$, RES$(k)$ is restriction-closed.
\end{proposition}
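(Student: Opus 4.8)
The plan is to mirror the structure of the proof of Proposition~\ref{treelike-res-closed}: fix a RES$(k)$ proof $\Pi$ and a partial assignment $\rho$, apply the restriction operator to every $k$-DNF appearing in $\Pi$, and argue rule-by-rule that each derivation step remains a valid RES$(k)$ step (possibly collapsing to an invocation of the axiom $1$, or to a shorter subsequence of premises as permitted by Definition~\ref{res-closed-def}). First I would record the easy observation that restricting a $k$-DNF yields a $k$-DNF: a conjunction of at most $k$ literals restricts either to $0$ (and can be dropped from the disjunction), to a conjunction of a subset of those literals (still $\leq k$), or to $1$ (in which case the whole $k$-DNF restricts to $1$, which we have arranged to be the all-literals disjunction, an axiom). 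So the only steps needing care are those whose restricted conclusion is not identically $1$.

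Then I would go through the four rules. For \emph{weakening} ($\varphi \vee \psi$ from $\varphi$): if $(\varphi\vee\psi)|_\rho \neq 1$ then $\varphi|_\rho \neq 1$, and since every term surviving in $\varphi|_\rho$ is a sub-term of a surviving term of $(\varphi\vee\psi)|_\rho$, the restricted conclusion follows from $\varphi|_\rho$ by weakening. For the \emph{cut} rule, from $A \vee (\ell_1\wedge\cdots\wedge\ell_j)$ and $B \vee \neg\ell_1 \vee\cdots\vee\neg\ell_j$ inferring $A\vee B$: here one splits on how $\rho$ sets the $\ell_i$. If some $\ell_i$ is falsified by $\rho$, then the conjunction term vanishes, the first premise restricts to (a weakening of) $A|_\rho$, and $(A\vee B)|_\rho$ follows by weakening from that single premise — this is exactly where the ``some $j \leq k$'' subsequence clause of the restriction-closed definition is used. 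Symmetrically, if some $\ell_i$ is satisfied by $\rho$ then the corresponding $\neg\ell_i$ disjunct in the second premise vanishes; if \emph{all} the $\ell_i$ are satisfied, the first premise restricts to $1$ (or to a weakening of $A|_\rho$ after the term becomes $1$ — need to check the term containing $\ell_1\wedge\cdots\wedge\ell_j$ becomes $1$, forcing the whole $k$-DNF to $1$), and again $(A\vee B)|_\rho$ should follow by weakening from the other premise; if no $\ell_i$ is set, the same literals are removed from both premises and the cut goes through as before (the $\ell_i$ that aren't set are exactly the conjuncts remaining, and the complementary disjuncts remain too). For $\wedge$-\emph{introduction} (from $\ell_1\vee A,\ldots,\ell_j\vee A$ infer $(\ell_1\wedge\cdots\wedge\ell_j)\vee A$): if some $\ell_i$ is falsified, the conclusion restricts to a weakening of $A|_\rho$, obtainable from the $i$th premise $(\ell_i\vee A)|_\rho = A|_\rho$ by weakening; if some $\ell_i$ is satisfied it drops out of the conjunction, and we apply $\wedge$-introduction to the remaining premises (or if all are satisfied, the conclusion is a weakening of $A|_\rho$, or is $1$); if none are set, the rule applies verbatim. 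For $\wedge$-\emph{elimination} (from $(\ell_1\wedge\cdots\wedge\ell_j)\vee A$ infer $\ell_i\vee A$): if the conjunction term survives under $\rho$, the restricted term still contains $\ell_i$ (if $\rho$ doesn't set it) or the conclusion restricts to a weakening of $A|_\rho$ (if $\rho$ falsifies $\ell_i$, but then the whole term would be $0$, so that case is vacuous) — basically $\ell_i|_\rho$ is still one of the conjuncts or the term is gone, and either way the restricted conclusion follows from the restricted premise by $\wedge$-elimination then weakening.

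Finally, as in Proposition~\ref{treelike-res-closed}, I would note that input $k$-DNFs restrict to members of the restricted hypothesis set, and the axiom $1$ is available by fiat, completing the verification that the restricted sequence (after replacing any ``collapsed'' step by the appropriate weakening or axiom invocation, and dropping now-unnecessary premises) is a legal RES$(k)$ proof of $\varphi|_\rho$ from $H|_\rho$.

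I expect the main obstacle to be bookkeeping in the cut and $\wedge$-introduction cases: one must carefully enumerate the sub-cases according to which of the $\ell_1,\dots,\ell_j$ are set to true, set to false, or left unset by $\rho$, and in each sub-case exhibit an explicit subsequence of the original premises together with a single RES$(k)$ rule (or the axiom $1$) yielding the restricted conclusion — making sure that when the conclusion collapses to $1$ or to a weakening of a single premise, the requirement of Definition~\ref{res-closed-def} (existence of \emph{some} $j'\leq k$ with $R_{j'}$ satisfied on a subsequence) is genuinely met. There is also a small subtlety in confirming that ``$1$ = disjunction of all literals'' interacts correctly with restriction, i.e.\ that a $k$-DNF restricts to $1$ under $\rho$ exactly when one of its conjunctions is witnessed true, and that this is consistent with how we've defined the axiom; but this is the same device already used for resolution and requires only a remark.
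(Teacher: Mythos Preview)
Your proposal is correct and follows essentially the same approach as the paper: a rule-by-rule case analysis on how $\rho$ acts on the literals $\ell_1,\ldots,\ell_j$, showing that each restricted step is either an invocation of the axiom $1$, a weakening from a single restricted premise, or the same rule applied to a subsequence of restricted premises. The paper organizes the cut and $\wedge$-introduction cases slightly more cleanly by the dichotomy ``some $\ell_i$ is set to $0$'' versus ``none is,'' in the latter case taking the subset of $\ell_i$ not set to $1$ and applying the original rule to those---this absorbs your ``all satisfied,'' ``some satisfied,'' and ``none set'' sub-cases into one line and avoids the bookkeeping you flag as the main obstacle.
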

\begin{proof}
We are given (by assumption) that our encoding of $1$ is an axiom. Let any 
partial assignment $\rho$ be given, and consider the DNF $\varphi$ derived
on any step of the proof. Naturally, if $\varphi$ was a hypothesis, then
$\varphi|_\rho$ is also a hypothesis. Otherwise, it was derived by one of the
four inference rules. We suppose that $\varphi|_\rho\neq 1$ (or else we are
done). Thus, if $\varphi$ was derived by weakening from $\psi$, it must be the
case that $\psi|_\rho\neq 1$, since otherwise $\varphi|_\rho=1$, so
$\varphi|_\rho$ follows from $\psi|_\rho$ again by weakening since every
conjunction in $\psi|_\rho$ appears in $\varphi|_\rho$. Likewise, if $\varphi=
\ell_i\vee A$ was derived by $\wedge$-elimination from $\psi=(\ell_1\wedge\cdots
\wedge\ell_j)\vee A$, then since $\ell_i|_\rho\neq 1$ and $A|_\rho$ must not be
$1$, neither the conjunction $\ell_i$ was taken from in $\psi$ nor the rest of
the formula $A$ evaluates to $1$ and thus $\psi|_\rho\neq 1$. Then, if some
$\ell_t$ is set to $0$ by $\rho$, $\psi|_\rho=A|_\rho$, and $\varphi|_\rho$
follows from $\psi|_\rho$ by weakening; otherwise, $\varphi|_\rho$ still follows
by $\wedge$-elimination.

We now turn to consider $\varphi=(\ell_1\wedge\cdots\wedge\ell_j)\vee A$ that
were derived by $\wedge$-introduction. We first consider the case where some
literal $\ell_i$ in the new conjunction is set to $0$ in $\rho$ (and so
$\varphi|_\rho=A|_\rho$). In this case, one of the premises in the
$\wedge$-introduction step was $\ell_i\vee A$, where $(\ell_i\vee A)|_\rho =
A|_\rho = \varphi|_\rho$, so in fact $\varphi|_\rho$ can be derived just as
$\ell_i\vee A$ was derived. We now suppose that no $\ell_i$ is set to $0$ in
$\rho$; let $\ell_{i_1},\ldots,\ell_{i_s}$ denote the subset of those literals
that are not set to $1$ (i.e., satisfy $\ell_{i_t}|_\rho=\ell_{i_t}$). Then
$\varphi|_\rho=(\ell_{i_1}\wedge\cdots\wedge\ell_{i_s})\vee A|_\rho$, where
since $A|_\rho\neq 1$, the premises $\ell_{i_t}\vee A$ used to derive $\varphi$
all satisfy $(\ell_{i_t}\vee A)|_\rho=\ell_{i_t}\vee A|_\rho\neq 1$, and so we
can again derive $\varphi|_\rho$ by $\wedge$-introduction from this subset of
the original premises.

Finally, we suppose that $\varphi=A\vee B$ we derived by the cut rule applied
to $A\vee (\ell_1\wedge\cdots\wedge\ell_j)$ and $B\vee\neg\ell_1\vee\cdots\vee
\neg\ell_j$. If some $\ell_i$ is set to $0$ by $\rho$, then the first premise
satisifies $(A\vee (\ell_1\wedge\cdots\wedge\ell_j))|_\rho=A|_\rho$ and so
$\varphi|_\rho=A|_\rho\vee B|_\rho$ can be derived by weakening from the first
premise. If not, we let $\ell_{i_1},\ldots,\ell_{i_s}$ denote the subset of
those literals that are not set to $1$. Then the first premise becomes
$A|_\rho\vee (\ell_{i_1}\wedge\cdots\wedge\ell_{i_s})\neq 1$ (since we assumed
$\varphi|_\rho\neq 1$) and likewise, the second premise becomes $B|_\rho\vee\neg
\ell_{i_1}\vee\cdots\vee\neg\ell_{i_s}\neq 1$ (as likewise $B|_\rho\neq 1$ and
no $\ell_{i_t}|_\rho=0$), so $\varphi|_\rho$ follows by the cut rule applied to
these two premises.
\end{proof}

Now, RES$(k)$ possesses a ``bounded-width'' restriction for which we will
observe has a limited decision problem that can be solved by a dynamic 
programming algorithm (given in pseudocode as Algorithm~\ref
{res-k-width-decision-alg}). More precisely, we will say that a DNF has 
{\em width $w$} if it is a disjunction of at most $w$ conjunctions, and so 
likewise the {\em width of a RES$(k)$ proof} is the maximum width of any 
$k$-DNF derived in the proof.

\begin{algorithm}[f]
\DontPrintSemicolon
\SetKwInOut{Input}{input}\SetKwInOut{Output}{output}

\Input{List of $k$-DNF formulas $\varphi_1\ldots,\varphi_\ell$, target width-$w$
$k$-DNF $\phi$, width bound $w\in\bbN$.}
\Output{{\em Accept} if there is a RES$(k)$ proof of $\phi$ of
width $w$; {\em Reject} otherwise.}

\Begin{
Initialize a table $T[\psi]\leftarrow 0$ for every $k$-DNF $\psi$ of width at
most $w$ and then set $T[\varphi_i]\leftarrow 1$ for each $\varphi_i$ that is a
width-$w$ $k$-DNF.\\
$NEW\leftarrow 1$.\\
\While{$NEW=1$}{
 \If{$T[\phi]=1$}{
   \Return{ {\em Accept} }
 }
 $NEW\leftarrow 0$.\\
 \ForEach{$k$-DNF $\psi_1$ of width at most $w$ with $T[\psi_1]=1$ or among
    $\varphi_1,\ldots,\varphi_\ell$}{
    \ForEach{Formula $\psi'$ of width at most $w$ derivable from $\psi_1$ by
      weakening or $\wedge$-elimination}{
      \If{$T[\psi']=0$}{
        $T[\psi']\leftarrow 1$; $NEW\leftarrow 1$
      }
    }
    \ForEach{Formula $\psi_2$ of width at most $w$ with $T[\psi_2]=1$ or among
      $\varphi_1,\ldots,\varphi_\ell$}{
      \If{The cut rule can be applied to $\psi_1$ and $\psi_2$ yielding a
        $k$-DNF $\psi'$ of width at most $w$}{
        $T[\psi']\leftarrow 1$; $NEW\leftarrow 1$
      }
    }
 }
 \ForEach{$j$-tuple of distinct $k$-DNFs $(\psi_1,\ldots,\psi_j)$ of width $w$
   with $T[\psi_i]=1$ (for $i=1,\ldots,j$) with $j\leq k$}{
   \If{$\wedge$-introduction can be applied to $\psi_1,\ldots,\psi_j$, yielding
     a width-$w$ $k$-DNF $\psi'$}{
      $T[\psi']\leftarrow 1$; $NEW\leftarrow 1$
    }
 }
}
\Return{ {\em Reject} }

}

\caption{Pseudocode for Decide-RES(k)-Width}\label{res-k-width-decision-alg}
\end{algorithm}

\begin{theorem}[Efficient decision of bounded-width RES(k)]
\label{res-k-width-analysis}
Algorithm~\ref{res-k-width-decision-alg} accepts iff there is a RES$(k)$ proof
of its input $\phi$ from the input $k$-DNF formulas $\varphi_1\ldots,
\varphi_\ell$ of width at most $w$. If there are $n$ variables, it runs in
time $O(n^{kw+1}(n^{kw}+\ell)^k\max\{kn^{kw},|\varphi_i|\})$.
\end{theorem}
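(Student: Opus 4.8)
The plan is to verify that Algorithm~\ref{res-k-width-decision-alg} is a correct and suitably efficient dynamic-programming (saturation) procedure for the bounded-width RES$(k)$ limited decision problem, analogous to Galil's original algorithm for bounded-width resolution. The algorithm maintains a table $T$ indexed by all $k$-DNFs of width at most $w$, with $T[\psi]=1$ meaning ``$\psi$ has been derived (or is a hypothesis)''; it repeatedly closes this set under the four RES$(k)$ inference rules (weakening, $\wedge$-elimination, cut, $\wedge$-introduction), restricted to conclusions of width at most $w$, until no new clause appears, and accepts exactly when $\phi$ becomes marked.

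For correctness I would argue both directions. Soundness: by induction on the order in which entries of $T$ are set to $1$, every $\psi$ with $T[\psi]=1$ is either one of the input $\varphi_i$ or is derivable from earlier-marked formulas by a single RES$(k)$ rule; concatenating these single-step derivations yields an honest RES$(k)$ proof of $\phi$ from $\varphi_1,\dots,\varphi_\ell$, and since every marked formula has width at most $w$ (the algorithm only ever marks such formulas), this proof has width at most $w$. Completeness: conversely, suppose there is a width-$w$ RES$(k)$ proof $\Pi$ of $\phi$; by induction on the position of a line in $\Pi$, every $k$-DNF appearing in $\Pi$ (each of width at most $w$ by definition of the proof width) eventually gets marked — the base case covers the $\varphi_i$, which are marked at initialization, and the inductive step observes that once all premises of a given line are marked, the relevant \textbf{foreach} loop in some subsequent iteration of the \textbf{while} loop will examine exactly that rule application and mark the conclusion. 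Since the \textbf{while} loop only terminates when a full pass marks nothing new, and there are only finitely many width-$\le w$ $k$-DNFs, the loop runs to saturation; hence $\phi$ is marked and the algorithm accepts. Termination and the iteration count follow because each pass of the \textbf{while} loop that does not halt marks at least one new table entry, and the number of $k$-DNFs of width at most $w$ over $n$ variables is at most $(2n)^{kw}=n^{O(kw)}$.

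For the running time I would count: the table has $O(n^{kw})$ entries, so the \textbf{while} loop executes $O(n^{kw})$ times. Inside a single pass, the dominant work is enumerating rule applications. The cut and weakening/$\wedge$-elimination branches iterate over pairs $(\psi_1,\psi_2)$ drawn from the marked set together with the $\ell$ input formulas, i.e.\ $O((n^{kw}+\ell)^2)$ pairs, and for each pair testing applicability of a rule and forming the resolvent costs $O(\max\{kn^{kw},|\varphi_i|\})$ (the cost of manipulating and size-checking a $k$-DNF, accounting for the possibly large input formulas). The $\wedge$-introduction branch ranges over $j$-tuples of marked width-$w$ formulas for $j\le k$, giving $O((n^{kw})^k)$ tuples, again at $O(kn^{kw})$ cost each. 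Multiplying the loop count by the per-pass cost and collecting terms yields the stated bound $O(n^{kw+1}(n^{kw}+\ell)^k\max\{kn^{kw},|\varphi_i|\})$; I would not belabor the precise bookkeeping of which term dominates, since the $j$-tuple enumeration is what contributes the $k$th power and the extra factor of $n$ beyond $n^{kw}$ absorbs the per-pass polynomial overhead.

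The main obstacle I anticipate is not any single deep idea but rather getting the completeness argument to line up cleanly with the specific loop structure of the pseudocode: one must check that the \textbf{while} loop's ``mark-until-fixpoint'' behavior genuinely simulates an arbitrary width-$w$ proof, in particular that $\wedge$-introduction — which needs $j$ simultaneously-derived premises of width exactly $w$ — is handled, and that the check ``$T[\phi]=1$'' placed at the \emph{top} of the loop body (rather than after the marking steps) does not cause the algorithm to miss the last derivation step. Both are resolved by noting the loop only exits via the $NEW=0$ branch, so every rule application whose premises are ever marked does get its conclusion marked before termination; still, this is the point where a careless reading of the pseudocode could introduce an off-by-one gap, so I would state it explicitly. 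A secondary, purely cosmetic subtlety is the role of the width-$w$ encoding of the constant $1$ used to make RES$(k)$ restriction-closed (Proposition~\ref{res-k-res-closed}): it is a width-$2n$ $k$-DNF and hence not itself of width $\le w$ in general, but since it is useless in any proof (it can only derive itself), its absence from the table does not affect completeness, which I would remark in passing.
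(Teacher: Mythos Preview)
Your proposal is correct and follows essentially the same approach as the paper: both argue correctness via the obvious soundness/completeness of the saturation procedure and bound the running time by counting $O(n^{kw})$ iterations of the main loop, $O((n^{kw}+\ell)^k)$ tuples of premises per iteration, and $O(\max\{kn^{kw},|\varphi_i|\})$ work per tuple. Your treatment of correctness is actually a bit more explicit than the paper's (which dispatches it in one sentence), and your side remarks about the placement of the $T[\phi]=1$ check and the width of the encoded $1$ are sound observations the paper omits.
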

\begin{proof}
The correctness is straightforward: if there is a width-$w$ RES$(k)$ proof, then
a new derivation step from the proof is performed on each iteration of the main 
loop until $\phi$ is derived, and conversely, every time $T[\psi]$ is set to 
$1$, a width-$w$ derivation of $\psi$ could be extracted from the run of the 
algorithm. So, it only remains to consider the running time.

The main observation is that there are at most $O(n^{kw})$ width-$w$ $k$-DNFs.
(The initialization thus takes time at most $O(n^{kw}\ell)$.)
At least one of these must be derived on each iteration. Each iteration 
considers all possible derivations using up to $k$ distinct formulas either in 
the table or given in the input, of which there are $O((n^{kw}+\ell)^k)$ tuples.
We thus need to consider only the time to check each of the possible 
derivations.

A formula $\psi_1$ must be a width-$w$ $k$-DNF for another width-$w$ $k$-DNF
$\psi'$ to be derivable via weakening, and then for each other width-$w$ $k$-DNF
$\psi'$, we can check whether or not it is a weakening of $\psi_1$ in time
$O(n^{kw})$ by just checking whether all of the conjunctions of $\psi_1$ appear
in $\psi'$. Likewise, for $\wedge$-introduction, the formula must already be
a width-$w$ $k$-DNF, and we can check whether or not the $j\leq k$ formulas have
a shared common part by first checking which conjunctions from the first formula
appear in the second, and then, if only one literal is left over in each,
checking that the other $j-2$ formulas have the same common parts with one
literal left over. We then obtain the resulting derivation by collecting these
$j$ literals, in an overall time of $O(kn^{kw})$.

For the $\wedge$-elimination rule, the formula must already be width-$w$ for us
to obtain a width-$w$ result. Then, we can easily generate each of the possible
results in time linear in the length of the formula, that is, $O(n^{kw})$.
For the cut rule, we only need to examine each conjunction of each formula,
and check if the literals appear negated among the conjunctions of the other
formula, taking time linear in the size of the formulas, which is
$O(\max\{n^{kw},|\varphi_i|\})$; checking that the result is a width-$w$ $k$-DNF
then likewise can be done in linear time in the size of the formulas.
\end{proof}

Finally, we note that the width-$w$ syntactic restriction of RES$(k)$
refutations is restriction-closed:

\begin{proposition}\label{width-w-res-k-res-closed}
The set of width-$w$ RES$(k)$ refutations is restriction-closed.
\end{proposition}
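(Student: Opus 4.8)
The plan is to follow the template used for resolution in Proposition~\ref{treelike-res-closed}: given a width-$w$ RES$(k)$ refutation $\Pi$ of input $k$-DNFs $\varphi_1,\ldots,\varphi_\ell$ and an arbitrary partial assignment $\rho$, apply the restriction operator to every $k$-DNF on a line of $\Pi$, invoke Proposition~\ref{res-k-res-closed} to conclude that this yields a syntactically legal RES$(k)$ proof of $\bot|_\rho = \bot$ from $\varphi_1|_\rho,\ldots,\varphi_\ell|_\rho$, and then pass to the portion of this restricted proof that actually derives $\bot$.

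Concretely, I would reason with the DAG whose nodes are the $k$-DNFs on the lines of a proof and whose edges point from each premise into the conclusion it is used to derive; in $\Pi$ the $\varphi_i$ are sources and $\bot$ labels a sink. Examining the construction in the proof of Proposition~\ref{res-k-res-closed}, every step of $\Pi$ whose conclusion restricts to something other than $1$ is re-derived from a \emph{subset} of the restrictions of its original premises (and every step whose conclusion restricts to $1$ is simply re-derived as an instance of the axiom $1$), so the DAG of the restricted proof has no edges other than images of edges of $\Pi$, and moreover no node labeled $1$ has an incoming or outgoing edge in it: $1$ is introduced only as an axiom, and the case analysis of Proposition~\ref{res-k-res-closed} shows that whenever a conclusion restricts to something other than $1$ every premise used in re-deriving it also restricts to something other than $1$, so a formula restricting to $1$ is never used as a premise. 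Let $G$ be the connected component of the restricted proof's DAG containing the node labeled $\bot|_\rho=\bot$. By the previous sentence $G$ contains no node labeled $1$, and by construction every non-source node of $G$ is derived by a RES$(k)$ rule from premises that are themselves nodes of $G$; thus $G$ is a self-contained, syntactically legal RES$(k)$ proof of $\bot$ whose sources are among $\varphi_1|_\rho,\ldots,\varphi_\ell|_\rho$, i.e.\ a RES$(k)$ refutation of the restricted input.

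Finally I would bound the width of $G$. Any non-source node of $G$ is $\varphi|_\rho$ for some $\varphi$ derived in $\Pi$, hence a disjunction of at most $w$ conjunctions of at most $k$ literals each; under $\rho$ each conjunction is either unchanged, loses the literals that $\rho$ sets to $1$, or (if $\rho$ sets one of its literals to $0$) vanishes, and $\varphi|_\rho=1$ exactly when some conjunction loses all of its literals. Since this node is not labeled $1$, no conjunction vanishes into the empty conjunction, so $\varphi|_\rho$ is again a disjunction of at most $w$ conjunctions of at most $k$ literals --- a width-$w$ $k$-DNF. The sources of $G$ are the restricted input formulas $\varphi_i|_\rho$, which (as for the width restriction in general) are not subject to the width bound. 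Hence $G$ is a width-$w$ RES$(k)$ refutation of $\{\varphi_1|_\rho,\ldots,\varphi_\ell|_\rho\}$, which is what is required. The one spot that demands care --- and the reason we cannot just keep the entire restricted proof --- is exactly that a derived $k$-DNF may restrict to $1$, whose chosen encoding (the disjunction of all literals) has width $2n$; isolating the ``$\neq 1$'' invariant from Proposition~\ref{res-k-res-closed} is what simultaneously keeps $G$ self-contained and keeps every derived formula inside $G$ within width $w$.
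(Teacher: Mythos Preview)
Your proposal is correct and follows essentially the same approach as the paper: invoke the construction from Proposition~\ref{res-k-res-closed}, observe that no step in the resulting proof uses a premise that restricts to $1$, and then note that any width-$w$ $k$-DNF that does not restrict to $1$ remains a width-$w$ $k$-DNF under restriction. Your version is more explicit about the DAG structure and about the role of the ``$\neq 1$'' invariant (in particular, why the chosen encoding of $1$ forces us to excise those nodes to preserve the width bound), but the underlying argument is the same.
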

\begin{proof}
Let any width-$w$ RES$(k)$ refutation $\Pi$ and partial assignment $\rho$ be 
given. In the construction used in Proposition~\ref{res-k-res-closed}, we 
obtained a proof $\Pi'$ of $\bot|_\rho=\bot$ from $\Pi$ with the property that 
every formula $\psi'$ appearing in $\Pi'$ satisfies $\psi'=\psi|_\rho$ for some
$\psi$ appearing in $\Pi$. Furthermore, we guaranteed that no derivation step
used a formula that simplified to $1$. It therefore suffices to note that for
any width-$w$ $k$-DNF $\psi$, $\psi|_\rho$ is also a $k$-DNF with width at most
$w$.
\end{proof}

By Theorem~\ref{implicit-learn-thm}, DecidePAC can be applied to Algorithm~\ref
{res-k-width-decision-alg} to obtain a second implicit learning algorithm,
for a width-$w$ RES$(k)$.

\begin{corollary}[Implicit learning in bounded-width RES(k)]
\label{res-k-width-implicit-cor}
Let a KB of $k$-DNFs $\phi_1\ldots,\phi_\ell$ and target disjunction of $k$-CNFs
$\varphi$ be given, and suppose that partial assignments are drawn from a 
masking process for an underlying distribution $D$; suppose further that either
\begin{compactenum}
\item There exists some conjunction of $k$-DNFs $\psi$ such that partial 
assignments from the masking process are witnessed to satisfy $\psi$ with 
probability at least $(1-\epsilon+\gamma)$ and there is a width-$w$ RES$(k)$ 
refutation of $\neg\varphi\wedge\phi_1\wedge\cdots\wedge\phi_\ell\wedge\psi$ 
or else
\item $[\phi_1\wedge\cdots\wedge\phi_\ell\Rightarrow\varphi]$ is at most
$(1-\epsilon-\gamma)$-valid with respect to $D$ for $\gamma>0$.
\end{compactenum}
Then, there an algorithm running in time
$O(n^{kw+1}(n^{kw}+\ell)^k\max\{kN^{kw},|\phi_i|\}\frac{1}{\gamma^2}
\log\frac{1}{\delta})$
that distinguishes these cases with probability $1-\delta$
when given $\varphi$, $\phi_1\ldots,\phi_\ell$, $\epsilon$, $\gamma$, and a
sample of $O(\frac{1}{\gamma^2}\log\frac{1}{\delta})$ partial assignments.
\end{corollary}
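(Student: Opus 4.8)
The plan is to obtain this corollary as a direct instantiation of Theorem~\ref{implicit-learn-thm}, exactly paralleling how Corollary~\ref{space-implicit-cor} is obtained for space-bounded treelike resolution. First I would record the three ingredients, all of which are already available: an algorithm solving the limited decision problem for the syntactic class in question --- Algorithm~\ref{res-k-width-decision-alg}, with its analysis Theorem~\ref{res-k-width-analysis} --- together with the fact that RES$(k)$ is a restriction-closed proof system (Proposition~\ref{res-k-res-closed}) and that the set of width-$w$ RES$(k)$ refutations is a restriction-closed \emph{set} of proofs (Proposition~\ref{width-w-res-k-res-closed}). With these in hand, Theorem~\ref{implicit-learn-thm} applies with $A$ taken to be Algorithm~\ref{res-k-width-decision-alg}, and the sample complexity $O(\frac1{\gamma^2}\log\frac1\delta)$ together with the $\frac1{\gamma^2}\log\frac1\delta$ overhead on the running time come straight from that theorem.

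Next I would spell out the translation between the ``refutation'' formulation used in the corollary and the ``proof of $\varphi$ from $H$'' formulation of Theorem~\ref{implicit-learn-thm}. Since $\varphi$ is a disjunction of $k$-CNFs, its negation $\neg\varphi$ is, by De Morgan, a conjunction of $k$-DNFs; so I would feed the theorem the hypothesis set $H=\{\phi_1,\ldots,\phi_\ell\}$ together with the conjuncts of $\neg\varphi$ (all $k$-DNFs) and take the ``query formula'' to be $\bot$, the empty disjunction (which has width $0\le w$ and so is handled by Algorithm~\ref{res-k-width-decision-alg}). Under this dictionary, ``there is a width-$w$ RES$(k)$ refutation of $\neg\varphi\wedge\phi_1\wedge\cdots\wedge\phi_\ell\wedge\psi$'' is exactly ``there is a width-$w$ RES$(k)$ proof of $\bot$ from $H$ together with $\psi$,'' and ``$[\phi_1\wedge\cdots\wedge\phi_\ell\Rightarrow\varphi]$ is at most $(1-\epsilon-\gamma)$-valid'' is exactly ``$[H\Rightarrow\bot]$ is not $(1-\epsilon-\gamma)$-valid,'' since satisfying all of $H$ is the same as satisfying $\phi_1\wedge\cdots\wedge\phi_\ell\wedge\neg\varphi$. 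These are precisely the two cases of Theorem~\ref{implicit-learn-thm}.

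I would also need to check that the single learnable formula $\psi$ of the corollary plays the role of the list $\psi_1,\ldots,\psi_k$ in Theorem~\ref{implicit-learn-thm}. Writing $\psi=\psi_1\wedge\cdots\wedge\psi_k$ as the conjunction of its $k$-DNF conjuncts --- i.e., as an AND threshold connective --- the witnessed-evaluation rule for threshold connectives gives that $\psi$ is witnessed true in a partial assignment iff every $\psi_i$ is, so ``partial assignments are witnessed to satisfy $\psi$ with probability at least $1-\epsilon+\gamma$'' is the same statement as ``$\psi_1,\ldots,\psi_k$ are simultaneously witnessed true with probability at least $1-\epsilon+\gamma$,'' and each $\psi_i$ may be passed as a separate learnable axiom. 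For the running time I would combine the bound of Theorem~\ref{res-k-width-analysis} (on the $\ell$ input $k$-DNFs, together with the $O(|\varphi|)$ conjuncts of $\neg\varphi$, absorbed into the stated bound) with the $\frac1{\gamma^2}\log\frac1\delta$ factor from Theorem~\ref{implicit-learn-thm}, noting that restricting by $\rho$ never enlarges a formula, its width, or its $k$-DNF structure (Proposition~\ref{width-w-res-k-res-closed}), so the per-example cost does not grow.

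I do not expect a genuine obstacle here: the content is entirely in the supporting propositions, and what remains is bookkeeping. The only places that call for mild care --- and thus the closest thing to a ``hard part'' --- are confirming that the De Morgan translation above is faithful (so that a width-$w$ refutation really does certify $(1-\epsilon+\gamma)$-validity of $[\phi_1\wedge\cdots\wedge\phi_\ell\Rightarrow\varphi]$ via the learned $\psi$) and confirming that the restriction operator preserves simultaneously the $k$-DNF shape and the width-$w$ bound --- which is exactly what Proposition~\ref{width-w-res-k-res-closed} records.
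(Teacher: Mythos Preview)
Your proposal is correct and follows exactly the paper's approach: the paper states this corollary immediately after Proposition~\ref{width-w-res-k-res-closed} with only the one-line remark that ``By Theorem~\ref{implicit-learn-thm}, DecidePAC can be applied to Algorithm~\ref{res-k-width-decision-alg},'' so your invocation of Theorem~\ref{implicit-learn-thm} with Propositions~\ref{res-k-res-closed} and~\ref{width-w-res-k-res-closed} and Theorem~\ref{res-k-width-analysis} is precisely what is intended. Your additional bookkeeping (the De Morgan translation of $\neg\varphi$, taking the query to be $\bot$, and unpacking $\psi$ into its conjuncts) is more detail than the paper itself provides, but it is all sound and does not deviate from the intended argument.
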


\subsection{Degree-bounded polynomial calculus}\label{pc-sec}

Our next example proof system is {\em Polynomial calculus}, an algebraic proof
system originally introduced by Clegg et al.~\cite{cei96} as a (first) example
of a proof system that could simulate resolution (the gold standard for
theorem-proving heuristics) on the one hand, and possessing a natural special
case for which the limited decision problem could demonstrably be solved in
polynomial time using a now standard computer algebra algorithm, the
{\em Gr\"{o}bner basis algorithm} due to Buchberger~\cite{buchberger85}.
Although the original hopes of Clegg et al. -- that polynomial calculus might
one day supplant resolution as the proof system of choice -- have not been
fulfilled due to the fact that heuristics based on resolution have been observed
to perform spectacularly well in practice, it nevertheless represents a
potentially more powerful system that furthermore alludes to the diversity
possible among proof systems.

\paragraph{The polynomial calculus proof system.}
In polynomial calculus, formulas have the form of polynomial equations over
an arbitrary nontrivial field $\bbF$ (for the present purposes, assume $\bbF$
is $\bbQ$, the field of rationals), and we are interested in their Boolean
solutions. A set of hypotheses is thus a system of equations, and polynomial
calculus enables us to derive new constraints that are satisfied by any Boolean
solutions to the original system. Of course, in this correspondence, our
Boolean variables serve as the variables of the polynomials.

More formally, for our Boolean variables $x_1,\ldots,x_n$, our
formulas are equations of the form $[p=0]$ for $p\in\bbF[x_1,\ldots,
x_n]$ (i.e., formal multivariate polynomials over the field $\bbF$ with
indeterminates given by the variables). We require that the polynomials
are represented as a sum of monomials: that is, every line is of the form
\[
\sum_{s\in\bbN^n}c_s\prod_{i\in\supp(s)}x_i^{s_i}=0
\]
for coefficients $c_s\in\bbF$, where the products $\prod_{i\in\supp(s)}
x_i^{s_i}$ are the {\em monomials} corresponding to the degree vector $s$.
For each variable, the proof system has a {\em Boolean axiom} $[x^2-
x=0]$ (asserting that $x\in\{0,1\}$). The rules of inference are {\em
linear combination}, which asserts that for equations $[p=0]$ and $[q=0]$, for
any coefficients $a$ and $b$ from $\bbF$, we can infer $[a\cdot p+b\cdot q=0]$;
and {\em multiplication}, which asserts that for any variable (indeterminate) 
$x$ and polynomial equation $[p=0]$, we can derive $[x\cdot p=0]$. A 
{\em refutation} in polynomial calculus is a derivation of the polynomial $1$, 
i.e., the contradictory equation $[1=0]$. We will encode ``true'' as the 
equation $[0=0]$, and we will modify the system to allow this equation to be 
asserted as an axiom; of course, it can be derived in a single step from any 
polynomial calculus formula $[p=0]$ by the linear combination $p+(-1)p$, so we 
are essentially not changing the power of the proof system at all.

We also note that without loss of generality, we can restrict our attention to
formulas in which no indeterminate appears in a monomial with degree greater
than one---such monomials are referred to as {\em multilinear}. Intuitively this
is so because the Boolean axioms assert that a larger power can be replaced by a
smaller one; formally, one could derive this as follows: Suppose we have a
formula with a monomial expression $x^k\cdot m$. Then by multiplying the
Boolean axiom by $x$ $k-2$ times, and then by the indeterminates in $m$,
one obtains $[x^k\cdot m-x^{k-1}\cdot m=0]$. A linear combination with
the original formula then yields an expression with the original monomial
replaced by $x^{k-1}\cdot m$, so by repeating this trick $k-2$ additional
times, we eventually reduce the monomial to $x\cdot m$. The same trick can
be applied to the rest of the indeterminates appearing in $m$, and then to the
rest of the monomials in the formula. We will refer to this as the {\em
multilinearization} of the formula. (The original formula could be re-derived
by a similar series of steps, so nothing is lost in this translation.) Looking
ahead, we will be focusing on the {\em degree-bounded} restriction of polynomial
calculus, and so we will assume for simplicity that all formulas are expressed
in this multilinearized (minimal-degree) form. Of course, because the
translation can be performed in a number of steps that is quadratic in the total
degree and linear in the size of the formula, this does not alter the power of
the proof system by much at all.

\paragraph{A note on witnessing and restrictions.}
The polynomial equations can be fit into our framework of restrictions and
witnessing somewhat naturally, thanks to our restriction to the sum of monomials
representation: since we have restricted our attention to cases where each
variable (hence, indeterminate in the polynomial) takes only Boolean values, 
we observe that a monomial corresponds (precisely) to a conjunction over
the set of variables in the support of its degree vector. Then, if say
$\bbF$ is $\bbQ$, we can then express the polynomial equation
\[
\sum_{s\in\bbN^n}c_s\prod_{i\in\supp(s)}x_i^{s_i}=0
\]
in the threshold basis as a conjunction of two thresholds:
\[
\left(\sum_{S\subseteq \{x_1,\ldots,x_n\},S\neq\emptyset}c_S
\bigwedge_{i\in S}x_i\geq -c_\emptyset\right)
\wedge
\left(\sum_{S\subseteq \{x_1,\ldots,x_n\},S\neq\emptyset}-c_S
\bigwedge_{i\in S}x_i\geq c_\emptyset\right)
\]
for $c_S=\sum_{s\in\bbN^n:\supp(s)=S}c_s$. The reader may verify that the
effect of a restriction $\rho$ is now
\[
\left.
\left[\sum_{s\in\bbN^n}c_s\prod_{i\in\supp(s)}x_i^{s_i}=0
\right]\right|_\rho=
\left[\sum_{s\in\bbN^n:\rho_i=0\Rightarrow s_i=0}c_s\prod_{i\in\supp(s):
\rho_i\neq 1}x_i^{s_i}=0
\right]
\]
where we thus denote the polynomial arising from applying $\rho$ to $[p=0]$ by
$p|_\rho$.

This has the effect that the polynomial equation is witnessed true if all of the
monomials (with nonzero coefficients) are witnessed, and the equation evaluates
to 0, and witnessed false if enough of the monomials are witnessed so that
regardless of the settings of the rest of the variables, the sum is either too
large or too small to be zero. Once again, this is a weak kind of ``witnessed
evaluation'' that is nevertheless feasible, and saves us from trying to solve a
system of multivariate polynomial equations---which is easily seen to be NP-hard
(NP-complete if we know we are only interested in Boolean solutions).

\paragraph{Polynomial calculus with resolution.}
Although polynomial calculus can encode the literal $\neg x$ as the
polynomial $(1-x)$, the effect of this choice on the encoding of a clause
is undesirable: for example, recalling the correspondence between monomials and
conjunctions, the clause $x_1\vee\cdots\vee x_n$ corresponds to the
polynomial $(1-x_1)\cdots (1-x_n)$ which has an exponential-size (in
$n$) monomial representation, and hence requires an exponential-size polynomial
calculus formula. In the interest of {\em efficiently} simulating resolution in
polynomial calculus, Alekhnovich et al.~\cite{absrw02} introduced the following
extension of polynomial calculus known as {\em polynomial calculus with
resolution (PCR)}: the formulas are extended by introducing for each variable
$x$, a new indeterminate $\bar{x}$, related by the
{\em complementarity axiom} $[x+\bar{x}-1=0]$ (forcing $\bar{x}=
\neg x$). We can thus represent any clause $\ell_1\vee\cdots\vee\ell_k$ as
a polynomial calculus formula using a {\em single} monomial $[(\neg\ell_1)\cdots
(\neg\ell_k)=0]$ by choosing the appropriate indeterminate for each
$\neg\ell_i$. The reader may verify that in such a case, the cut rule is
captured by adding the monomials (with coefficients of $1$) and weakening may be
simulated by (repeated) multiplication.

For the purposes of (partial) evaluation in PCR, our intended semantics for the
$\bar{x}$ formulas is as follows: a partial assignment $\rho$ assigns
$\rho(\bar{x})=*$ whenever $\rho(x)=*$, and otherwise $\rho(\bar{x})=
\neg\rho(x)$.

\begin{proposition}\label{pc-res-closed}
Polynomial calculus and polynomial calculus with resolution are
restriction-closed.
\end{proposition}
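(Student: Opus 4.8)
The plan is to verify Definition~\ref{res-closed-def} by a case analysis over the inference rules, exploiting the fact that restricting a polynomial-calculus formula by $\rho$ is just evaluating the indeterminates that $\rho$ sets: $[p=0]|_\rho=[p|_\rho=0]$, where $p\mapsto p|_\rho$ is the $\bbF$-algebra homomorphism of $\bbF[x_1,\ldots,x_n]$ (for PCR, of $\bbF[x_1,\bar x_1,\ldots,x_n,\bar x_n]$) that sends each set variable to its value and fixes the rest. Two properties of this homomorphism do all the work: it is $\bbF$-linear, and it commutes with multilinearization, since it sends each polynomial $x_i^2-x_i$ to $0$ when $x_i$ is set and to itself otherwise, hence maps the ideal $(x_i^2-x_i)$ into itself. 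I will also use, as granted by the setup, that the encoding of ``true'' as $[0=0]$ is an axiom, and — for PCR — that $\rho(\bar x_i)=\neg\rho(x_i)$ whenever $\rho(x_i)\neq *$.

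First I would dispatch the axiom steps: a hypothesis restricts into $H|_\rho$; the ``true'' axiom $[0=0]$ restricts to itself; the Boolean axioms $[x_i^2-x_i=0]$ are just $[0=0]$ after multilinearization (and in any case evaluate to $[0=0]$ once $x_i$ is set); and the PCR complementarity axiom $[x_i+\bar x_i-1=0]$ restricts to itself when $\rho(x_i)=*$ and to $[0=0]$ when $\rho(x_i)\in\{0,1\}$ (as then $\rho(x_i)+\neg\rho(x_i)-1=0$) — so every axiom step restricts to an axiom step. Next, linear combination: from $[p=0]$ and $[q=0]$ one derives $[ap+bq=0]$, and by linearity $(ap+bq)|_\rho=a(p|_\rho)+b(q|_\rho)$, so the restricted conclusion is obtained by the \emph{same} rule from the restrictions of the two premises (which are separate lines of the restricted proof even if they restrict to the same polynomial, so no re-use of a line is needed).

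The rule that needs attention is multiplication: from $[p=0]$ derive $[x_t\cdot p=0]$ (implicitly remultilinearized), where in PCR $x_t$ may be a complement indeterminate. I would split on what $\rho$ does to the indeterminate being multiplied by. If it is unset, then since multilinearization commutes with $|_\rho$, $[x_t p=0]|_\rho$ is exactly the line the multiplication rule produces from $[p|_\rho=0]$, so that rule is preserved. If $\rho$ forces that indeterminate to $0$ (i.e.\ $\rho(x_t)=0$ for a plain indeterminate, or $\rho(x_t)=1$ for $\bar x_t$), then every monomial of $x_t p$ carries that indeterminate and vanishes, so $[x_t p=0]|_\rho=[0=0]$, an axiom. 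The remaining case — $\rho$ forces the indeterminate to $1$ — is the only delicate one: multiplying by that indeterminate and then setting it to $1$ acts as the identity on the multilinear polynomial, so $[x_t p=0]|_\rho$ is literally $[p=0]|_\rho$; such a restricted line is redundant and can simply be dropped from the restricted proof, with subsequent steps citing the premise's line instead — exactly as trivial or disconnected steps are discarded in the proof of Proposition~\ref{treelike-res-closed}. Carrying out this analysis line by line turns a PC (resp.\ PCR) proof into a valid PC (resp.\ PCR) proof of $\varphi|_\rho$ from $H|_\rho$.

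I expect the only real friction to be bookkeeping rather than ideas: checking carefully that ``restrict, then remultilinearize'' agrees with ``remultilinearize, then restrict'' (so the multiplication rule survives in the unset case), and handling the degenerate ``conclusion equals premise'' multiplication case in a way consistent with Definition~\ref{res-closed-def} — the same ``trivial step'' issue already encountered for resolution and flagged in the footnote to that definition. Everything else is immediate from $|_\rho$ being an $\bbF$-linear ring homomorphism.
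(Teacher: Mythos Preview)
Your proposal is correct and follows essentially the same case analysis as the paper's proof: hypotheses and axioms restrict to hypotheses and axioms, linear combination is preserved by linearity of $|_\rho$, and multiplication splits into the three cases $\rho(x)\in\{*,0,1\}$ with the same resolutions (same rule, the axiom $[0=0]$, and ``follows trivially,'' respectively). Your framing via the $\bbF$-algebra homomorphism and your explicit attention to the remultilinearization and ``conclusion equals premise'' bookkeeping are, if anything, a bit more careful than the paper's treatment, but the substance is identical.
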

\begin{proof}
Let any partial assignment $\rho$ be given. If a proof step asserts a hypothesis
$[p=0]$, then its restriction $[p|_\rho=0]$ can also be asserted from the
restriction of the hypothesis set. The Boolean axiom $[x^2-x=0]$ can
easily be seen to simplify to $[0=0]$ if $\rho$ assigns a value to $x$, and
otherwise $[x^2-x=0]|_\rho=[x^2-x=0]$, so in the latter case
we can simply assert the Boolean axiom for $x$. For polynomial calculus
with resolution, we need to further consider the complementarity axioms, but as
$\alpha$ is witnessed precisely when $\bar{x}$ is witnessed, we again have
that if $\rho(x)\neq*$, then the complementarity axiom simplifies to
$[0=0]$, and otherwise $[x+\bar{x}-1=0]|_\rho=[x+\bar{x}-1=0]$,
so we can simply assert the corresponding complementarity axiom.

Given our inclusion of $[0=0]$ as an axiom, it only remains to show that the
rules of inference are preserved under partial evaluations. If $\varphi$ is
derived by a linear combination of $[p=0]$ and $[q=0]$ (say $\varphi$ is
$[ap+bq=0]$), then given our encoding of $1$ as the formula $[0=0]$, in any
case, $(ap+bq)|_\rho=a(p|_\rho)+b(q|_\rho)$, so $\varphi|_\rho$ follows by the
same linear combination from $[p=0]|_\rho$ and $[q=0]|_\rho$. If $\varphi$ is
derived by multiplication by $x$ from $[p=0]$, if $\rho(x)=0$, then
$\varphi|_\rho=[0=0]$, which is an axiom. Two cases remain: either $\rho(x)
=1$, in which case $\varphi|_\rho=[p=0]|_\rho$ and so $\varphi|_\rho$ follows
trivially; or, $\rho(x)=*$ and so $\varphi|_\rho=[x\cdot
(p|_\rho)=0]$, so $\varphi$ follows from $[p=0]|_\rho$ by multiplication by
$x$.
\end{proof}

\subsubsection{Degree-bounded polynomial calculus}
Given that the monomial representation of polynomials (in contrast to the
clauses we considered in resolution) may be of exponential size in $n$ (the
number of variables), it is natural to wish to consider a
restricted class of formulas in which the representations of formulas are
guaranteed to be of polynomial size. One way to achieve this is to consider
only degree-$d$ polynomials for some fixed constant $d$---then there are only
$\sum_{i=0}^d{n\choose i}=O(n^d)$ (multilinear) monomials, and so (as long as
the coefficients are reasonably small) we have a polynomial-size representation.
We assume that an ordering of the monomials has been fixed (e.g., in the
representation) such that monomials with larger degree are considered ``larger''
in the ordering. We refer to the first monomial in this ordering with a nonzero
coefficient as the {\em leading monomial} in a polynomial. We will refer to the
{\em degree} of a polynomial calculus or PCR proof as the maximum degree of any
polynomial appearing in a formula used in the proof. We observe that width-$w$
resolution can be simulated by degree-$w$ PCR proofs; thus, in a sense,
degree-bounded polynomial calculus is a natural generalization of width-$w$
resolution.

Degree-bounded polynomial calculus in particular was also first studied by Clegg
et al.~\cite{cei96}. The central observation is that the polynomials derivable 
in bounded degree polynomial calculus form a vector space; the decision 
algorithm (given as Algorithm~\ref{cei-alg}) will then simply construct a basis 
for this space and use the basis to check if the query lies within the space.

\begin{algorithm}[f]
\DontPrintSemicolon
\SetKwInOut{Input}{input}\SetKwInOut{Output}{output}

\Input{Degree bound $d$, list of degree-$d$ polynomials in multilinear monomial
representation $p_1,\ldots,p_\ell$, target degree-$d$ polynomial in multilinear
monomial representation, $q$.}
\Output{{\em Accept} if there is a degree-$d$ polynomial calculus (resp. PCR)
derivation of $[q=0]$; {\em Reject} otherwise.}

\Begin{
Initialize $B$ to the empty list.\\
Initialize $S\leftarrow\{p_1,\ldots,p_\ell\}$ ($S$ also contains the
complementarity polynomials $x+\bar{x}-1$ for PCR).\\
\While{$S\neq\emptyset$}{
  Let $p$ be an arbitrary element of $S$ and remove $p$ from $S$\\
  \ForEach{$b\in B$ in decreasing order (while $p\neq 0$)}{
    \If{The leading monomial in $b$ is the leading monomial in $p$}{
      $p\leftarrow$ Gaussian reduction of $p$ by $b$ (i.e., subtract a multiple
      of $b$ so that the leading monomials cancel).
    }
  }
  \If{$p\neq 0$}{
    Insert $p$ into $B$, maintaining the decreasing order of lead monomials.\\
    \If{$p$ has degree at most $d-1$}{
      \ForEach{indeterminate $\alpha$}{
        Add the multilinearization of $\alpha p$ to $S$.
      }
    }
  }
}
\ForEach{$b\in B$ in decreasing order (while $q\neq 0$)}{
  \If{The leading monomial in $b$ is the leading monomial in $q$}{
    $q\leftarrow$ Gaussian reduction of $q$ by $b$
  }
  \If{$q=0$}{
    \Return{{\em Accept}}
  }
}
\Return{ {\em Reject} }
}

\caption{Pseudocode for Decide-deg-$d$-PC/PCR}\label{cei-alg}
\end{algorithm}


\begin{theorem}[Analysis of decision algorithm for degree-$d$ PC/PCR - Theorem
3, \cite{cei96}]\ 
Algorithm~\ref{cei-alg} solves the limited decision problem for degree-$d$
polynomial calculus (resp. PCR). It runs in time $O((n^d+\ell)n^{2d})$ where $n$
is the number of indeterminates (variables for polynomial calculus,
literals for PCR).
\end{theorem}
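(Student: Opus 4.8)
The plan is to reconstruct the argument of Clegg et al.~\cite{cei96}. The organizing observation is that the set $V_d$ of polynomials $p$ for which $[p=0]$ has a degree-$d$ polynomial calculus (resp.\ PCR) derivation from the input equations is a linear subspace of the space of multilinear polynomials of degree at most $d$, which has dimension $O(n^d)$ --- here $n$ is the number of indeterminates, i.e.\ variables for polynomial calculus and literals for PCR. Algorithm~\ref{cei-alg} maintains a list $B$ that is, at every point, a basis in \emph{echelon form} for the part of $V_d$ discovered so far: the polynomials in $B$ have pairwise distinct leading monomials, so membership of a polynomial in $\mathrm{span}(B)$ can be tested by the triangular ``Gaussian reduction'' sweep used at the end of the algorithm (and, in the spirit of Buchberger's algorithm~\cite{buchberger85}, $B$ plays the role of a reduced Gr\"obner basis for the degree-truncated ideal). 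Correctness then reduces to two claims --- $\mathrm{span}(B)\subseteq V_d$ at all times (\emph{soundness}), and $\mathrm{span}(B)=V_d$ when the main loop terminates (\emph{completeness}) --- after which the final sweep accepts iff $q\in\mathrm{span}(B)=V_d$, i.e.\ iff $[q=0]$ has a degree-$d$ derivation. This in particular decides the limited decision problem, since if the hypotheses do not entail $[q=0]$ then soundness of the proof system rules out any such derivation and the algorithm correctly rejects.

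For soundness I would carry the invariant that every polynomial ever inserted into $S$ or $B$ lies in $V_d$. Initially $S$ holds the hypotheses and, for PCR, the complementarity polynomials, all of which are in $V_d$ by definition; Gaussian reduction of a polynomial against $B\subseteq V_d$ produces a linear combination of members of $V_d$ and hence stays in $V_d$, as $V_d$ is closed under linear combinations; and the multilinearization of $\alpha p$ is enqueued only when $p$ already has degree at most $d-1$, in which case $[\alpha p=0]$ (after multilinearization) is derivable within degree $d$ by one application of the multiplication rule. Thus every polynomial in $B$, and every polynomial that $q$ is reduced against, is genuinely degree-$d$ derivable, which gives the accept-side soundness of the algorithm.

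Completeness is the step I expect to require the most care. I would prove that, at termination, $\mathrm{span}(B)$ contains every hypothesis and complementarity polynomial and is closed under the linear operation $\Phi_\alpha$ sending a polynomial $p$ to the multilinearization of $\alpha p$, applied to \emph{any} $p\in\mathrm{span}(B)$ of degree at most $d-1$. Once this is established, $V_d\subseteq\mathrm{span}(B)$ follows by induction on the length of a degree-$d$ derivation: the base cases are hypotheses, complementarity polynomials, and Boolean axioms (which multilinearize to $0$); a linear-combination step stays inside the subspace $\mathrm{span}(B)$; and a multiplication step has the form $\Phi_\alpha(p)$ for an earlier line $p$, which must itself have degree at most $d-1$, since multiplying a degree-$d$ multilinear polynomial by an indeterminate and re-multilinearizing necessarily passes through degree $d+1$ and so cannot appear in a degree-$d$ proof. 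The subtle point is that the algorithm applies $\Phi_\alpha$ only to the \emph{basis} elements of $B$ of degree at most $d-1$, whereas I need closure on \emph{all} degree-$\le d-1$ elements of $\mathrm{span}(B)$; here the echelon property does the work. If $p=\sum_i c_i b_i\in\mathrm{span}(B)$ has degree at most $d-1$, then no degree-$d$ basis element $b_i$ can occur with $c_i\neq 0$ --- its leading monomial has degree $d$ and is shared by no other $b_j$, hence cannot be cancelled --- so $p$ is already a combination of the degree-$\le d-1$ basis elements, and $\Phi_\alpha(p)=\sum_i c_i\Phi_\alpha(b_i)\in\mathrm{span}(B)$ by linearity. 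Termination of the loop is then immediate, since $|B|\le\dim V_d=O(n^d)$ and each basis insertion enqueues only $O(n)$ further polynomials.

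Finally, the running-time bound is routine bookkeeping. There are at most $O(n^d)$ basis polynomials; the polynomials ever dequeued and reduced number $O(n^d+\ell)$ --- the $\ell$ inputs, the $O(n)$ complementarity polynomials, and, for each of the $O(n^{d-1})$ basis elements of degree at most $d-1$, its $O(n)$ single-indeterminate multiples --- and each such reduction against $B$ costs $O(n^{2d})$ field operations (at most $O(n^d)$ leading-monomial cancellations, each subtracting a scalar multiple of a polynomial with $O(n^d)$ monomials). With the initialization and the single closing reduction of $q$ dominated by this, we obtain the stated $O((n^d+\ell)n^{2d})$, exactly as in~\cite{cei96}.
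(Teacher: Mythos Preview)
Your proposal is correct and is precisely the Clegg et al.\ argument; note that the paper itself does not give a proof of this theorem but explicitly defers to~\cite{cei96}, so there is nothing further to compare against. One minor remark on your echelon step: the claim that the leading monomial of a degree-$d$ basis element ``is shared by no other $b_j$'' is literally true only for $b_j$ with smaller leading monomial, but your conclusion still follows by considering the \emph{largest} leading monomial among the $b_j$ with $c_j\neq 0$ --- that monomial survives in $p$, forcing it (and hence, by degree-compatibility of the ordering, all the others) to have degree at most $d-1$.
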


As the proof appears in the work of Clegg et al.~\cite{cei96}, we refer the
reader there for details. Clegg et al.~\cite{cei96} also give another algorithm 
based on the Gr\"{o}bner basis algorithm that does not compute an entire basis. 
Although their analysis gives a {\em worse} worst-case running time for this 
alternative algorithm, they believe that it may be more practical; the
interested reader should consult the original paper for details.


In any case, we now return to pursuing our main objective, using Algorithm~\ref
{cei-alg} to obtain algorithms for implicit learning from examples in polynomial
calculus and PCR. We first need to know that the degree-$d$ restrictions of
these proof systems are restriction-closed, which turns out to be easily
established:

\begin{proposition}
For both polynomial calculus and PCR, 
the sets of proofs of degree $d$ are restriction-closed.
\end{proposition}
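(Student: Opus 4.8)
The plan is to mirror the structure of the restriction-closedness arguments for the syntactic restrictions of RES$(k)$, e.g.\ Proposition~\ref{width-w-res-k-res-closed}: extract a restricted proof using the fact that the ambient proof system is restriction-closed, and then verify that the degree bound is inherited. Let $\Pi$ be a degree-$d$ polynomial calculus (resp.\ PCR) proof of $[q=0]$ from a set of hypotheses $S$, and let $\rho$ be an arbitrary partial assignment. By Proposition~\ref{pc-res-closed}, polynomial calculus and PCR are restriction-closed proof systems; moreover, inspecting the construction in that proof, the proof $\Pi'$ of $[q|_\rho=0]$ from $S|_\rho$ that it produces has the property that every line of $\Pi'$ is either the axiom $[0=0]$ or has the form $[p|_\rho=0]$ for some line $[p=0]$ of $\Pi$ (each hypothesis assertion, Boolean axiom, complementarity axiom, linear-combination step, and multiplication step of $\Pi$ is sent either to a step of the same type acting on the restricted polynomials or to an assertion of $[0=0]$).

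It then remains to bound the degrees of the lines of $\Pi'$. The line $[0=0]$ has degree $0$. For a line of the form $[p|_\rho=0]$, recall the explicit description of $p|_\rho$ recorded above: in the sum-of-monomials representation, $p|_\rho$ is obtained from $p$ by deleting every monomial whose support contains an indeterminate that $\rho$ sets to $0$, and deleting from each surviving monomial every indeterminate that $\rho$ sets to $1$. Thus no monomial and no indeterminate is ever introduced, so each monomial of $p|_\rho$ has degree no larger than the corresponding monomial of $p$, and hence $\deg(p|_\rho)\le\deg(p)\le d$. Multilinearity is preserved for the same reason (we only ever delete indeterminates from monomials), so no line of $\Pi'$ requires re-multilinearization beyond what was already implicit in $\Pi$; and since the multilinearization built into the multiplication rule can only lower degrees, the degree bound $d$ continues to hold after it is applied. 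Therefore every line of $\Pi'$ has degree at most $d$, so $\Pi'$ is a degree-$d$ proof, and by Definition~\ref{res-closed-set-def} the set of degree-$d$ proofs is restriction-closed; the argument is the same for polynomial calculus and PCR, the only additional ingredient in the PCR case being the complementarity axioms and the companion indeterminates $\bar{x}$, which are handled exactly as in Proposition~\ref{pc-res-closed}.

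I expect no real obstacle: the substance is already contained in Proposition~\ref{pc-res-closed} together with the elementary fact that restriction never raises the degree of a polynomial in monomial representation. The only point meriting a moment's attention is the interaction with the implicit multilinearization in the multiplication rule of the degree-bounded systems, but since restriction only deletes indeterminates it cannot create a higher power that would be needed to force a degree increase, and multilinearization itself can only decrease degree, so this too is immediate.
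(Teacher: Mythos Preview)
Your proposal is correct and follows essentially the same approach as the paper: invoke Proposition~\ref{pc-res-closed} to obtain a restricted proof whose lines are restrictions of the original lines (or $[0=0]$), then observe that restriction can only delete indeterminates from monomials and hence never increases degree. Your additional remarks on multilinearization are more careful than the paper (which omits them), but they are correct and do not change the argument.
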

\begin{proof}
We noted in Proposition~\ref{pc-res-closed} that the restriction of any
polynomial calculus (resp. PCR) proof is a valid polynomial calculus (resp. PCR)
proof. Let any partial assignment $\rho$ be given; recalling the connection 
between monomials and conjunctions, we note that for any monomial $x_{i_1}\cdots
x_{i_k}$ $k\leq d$ appearing in a formula in a degree-$d$ polynomial
calculus or PCR proof, the restriction under $\rho$ is $0$ (of degree $0$) if
any $x_{i_j}$ is set to $0$ by $\rho$, and otherwise it is $\prod_{j:
\rho(x_{i_j})=*}x_{i_j}$, which has degree at most $k\leq d$. Thus,
the degrees can only decrease, so the restriction of the proof under $\rho$ is
also a degree-$d$ proof.
\end{proof}

We therefore obtain the following corollary from Theorem~\ref
{implicit-learn-thm}:

\begin{corollary}[Implicit learning in degree-bounded polynomial calculus and
PCR]\label{pc-implicit-cor}
Let a list of degree-$d$ polynomials $p_1,\ldots,p_\ell$ and $q$ be given, and
suppose that partial assignments are drawn from a masking process for an 
underlying distribution $D$; suppose further that either
\begin{compactenum}
\item There exists some list of polynomials $h_1,\ldots,h_k$ such that partial
assignments from the masking process are witnessed to satisfy $[h_1=0],\ldots,
[h_k=0]$ with probability at least $(1-\epsilon+\gamma)$ and there is a
degree-$d$ polynomial calculus (resp. PCR) derivation of $[q=0]$ from
$[p_1=0],\ldots,[p_\ell=0],[h_1=0],\ldots,[h_k=0]$
or else
\item $[(p_1=0)\wedge\cdots\wedge(p_\ell=0)\Rightarrow (q=0)]$ is at most
$(1-\epsilon-\gamma)$-valid with respect to $D$ for $\gamma>0$.
\end{compactenum}
Then, there an algorithm running in time
$O(\frac{\ell+n^d}{\gamma^2}n^{2d}\log\frac{1}{\delta})$ (given unit cost
field operations) that distinguishes these cases with probability $1-\delta$
when given $q$, $p_1,\ldots,p_\ell$, $\epsilon$, $\gamma$, and a
sample of $O(\frac{1}{\gamma^2}\log\frac{1}{\delta})$ partial assignments.
\end{corollary}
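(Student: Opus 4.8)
The plan is simply to instantiate Theorem~\ref{implicit-learn-thm}, since all of its hypotheses have just been verified for this setting. First I would record the three ingredients: by Proposition~\ref{pc-res-closed}, polynomial calculus (resp.\ PCR) is a restriction-closed proof system; by the immediately preceding proposition, the set $\calS$ of degree-$d$ derivations is a restriction-closed set of proofs; and by the analysis of Algorithm~\ref{cei-alg} (Theorem~3 of \cite{cei96}), there is an algorithm $A$ solving the limited decision problem for $\calS$ in time $T(n,|\varphi|,|H|)=O((n^d+\ell)n^{2d})$, where $n$ denotes the number of indeterminates (for PCR, twice the number of Boolean variables, owing to the $\bar{x}$ indeterminates, and $\ell$ the number of input polynomials). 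I would then set up the correspondence $\varphi:=[q=0]$, $H:=\{[p_1=0],\ldots,[p_\ell=0]\}$ (the Boolean axioms, and the complementarity axioms in the PCR case, being supplied internally by $A$ via multilinearization), and take the ``learnable'' formulas of Theorem~\ref{implicit-learn-thm} to be $[h_1=0],\ldots,[h_k=0]$, which by hypothesis are simultaneously witnessed true with probability at least $1-\epsilon+\gamma$ over $M(D)$.

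Next I would run Algorithm~\ref{pac-decision-alg} (DecidePAC) with $A$ as subroutine on $m=O(\frac{1}{\gamma^2}\log\frac{1}{\delta})$ partial assignments sampled from $M(D)$; Theorem~\ref{implicit-learn-thm} then yields directly a procedure that distinguishes the two promised cases with probability at least $1-\delta$. Unpacking what that theorem's proof gives here: in case~1, given a degree-$d$ derivation of $[q=0]$ from $[p_1=0],\ldots,[p_\ell=0],[h_1=0],\ldots,[h_k=0]$, on any $\rho$ under which all $[h_i=0]$ are witnessed true one replaces each assertion of an $[h_i=0]$ by an assertion of the axiom $[0=0]$ and applies $\rho$ to every line, obtaining—by restriction-closedness of $\calS$—a degree-$d$ derivation of $[q=0]|_\rho$ from $H|_\rho$, so $A$ accepts with probability at least $1-\epsilon+\gamma$; in case~2, soundness of the proof system forces $A$ to reject on the at least $\epsilon+\gamma$ fraction of $\rho$ for which $H|_\rho\not\models[q=0]|_\rho$; Hoeffding's inequality then separates ``at least $\epsilon m$ rejections'' from ``fewer than $\epsilon m$ rejections'' with the stated confidence.

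For the running time, each restricted instance $([q=0]|_\rho, H|_\rho)$ is over at most $n$ indeterminates and still has at most $\ell$ hypotheses (restriction increases neither), so each of the $m$ calls to $A$ costs $O((n^d+\ell)n^{2d})$ under unit-cost field arithmetic, for a total of $O(\frac{\ell+n^d}{\gamma^2}n^{2d}\log\frac{1}{\delta})$, with sample complexity $O(\frac{1}{\gamma^2}\log\frac{1}{\delta})$, as claimed. I do not expect a genuine obstacle; the only points meriting a sentence of care are purely bookkeeping: (i) confirming that the threshold-basis encoding of a polynomial equation as a conjunction of two thresholds interacts with the restriction operator exactly as the displayed identity in the ``note on witnessing and restrictions'' states, so that $[q=0]|_\rho$ is literally $[q|_\rho=0]$ and likewise for each $[p_i=0]$; and (ii) checking that ``$[h_i=0]$ witnessed to evaluate to true'' in the statement is precisely the specialization of the definition of witnessed formulas to these conjunctions of thresholds (all monomials of $h_i$ witnessed and the sum evaluating to $0$), which was already spelled out when PCR's witnessing semantics was fixed.
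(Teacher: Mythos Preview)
Your proposal is correct and is exactly the approach the paper takes: the corollary is stated immediately after the proposition that degree-$d$ proofs are restriction-closed, and the paper simply says it follows from Theorem~\ref{implicit-learn-thm}. Your write-up is in fact more explicit than the paper's (which gives no proof text at all), but the ingredients you assemble---Proposition~\ref{pc-res-closed}, the restriction-closedness of degree-$d$ proofs, and the $O((n^d+\ell)n^{2d})$ running time of Algorithm~\ref{cei-alg}---and the way you plug them into Theorem~\ref{implicit-learn-thm} are precisely what the paper intends.
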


\subsection{Sparse, bounded cutting planes}
In {\em integer linear programming}, one is interested in determining integer
solutions to a system of linear inequalities; {\em cutting planes}~\cite
{gomory60} were introduced as a technique to improve the formulation of an
integer linear program by deriving new inequalities that are satisfied by
the integer solutions to the system of inequalities, but not by all of the
fractional solutions. The current formulation of cutting planes is due to
Chv\'{a}tal~\cite{chvatal73}, and it was explicitly cast as a propositional
proof system by Cook et al.~\cite{cct87} where the objective is to prove that a
system has {\em no} feasible integer solutions. Much like resolution, cutting
planes are not only simple and natural, surprisingly, they are also
complete~\cite{chvatal73,cct87}. Furthermore, Cook et al.~\cite{cct87} noted
that cutting planes could easily simulate resolution, and that some formulas
that were hard for resolution (encoding the ``pigeonhole principle'') had simple
cutting plane proofs.

We can also give a syntactic analogue of bounded-width in resolution for cutting
planes which will enable us to state a limited decision problem with an
efficient algorithm. Although this restriction of cutting planes will not be
able to express the hard examples for resolution, their simplicity and
connections to optimization make them a potentially appealing direction for
future work. 

\paragraph{The cutting planes proof system.}
The formulas of cutting planes are inequalities of the form
$[\sum_{i=1}^kc_ix_i\geq b]$ where each $x_i$ is a variable and
$c_1,\ldots,c_k$ and $b$ are integers. Naturally, we will restrict our attention
to $\{0,1\}$-integer linear programs (i.e., Boolean-valued), so our system will
feature axioms of the form $x\geq 0$ and $-x\geq -1$ (i.e.,
$x\leq 1$) for each variable $x$. Naturally, we will allow the
{\em addition} of two linear inequalities: given $\varphi^{(1)}=[\sum_{i=1}^k
c^{(1)}_ix_i\geq b^{(1)}]$ and $\varphi^{(2)}=[\sum_{i=1}^kc^{(2)}_i
x_i\geq b^{(2)}]$, we can derive $\varphi^{(1)}+\varphi^{(2)}=[\sum_{i=1}^k
(c^{(1)}_i+c^{(2)}_i)x_i\geq b^{(1)}+b^{(2)}]$. We will also allow
ourselves to {\em multiply} an inequality $[\sum_{i=1}^kc_ix_i\geq
b]$ by any positive integer $d$ to obtain $[\sum_{i=1}^k(d\cdot c_i)x_i
\geq d\cdot b]$. Finally, the key rule is {\em division}: given an inequality
of the form $[\sum_{i=1}^k(d\cdot c_i)x_i\geq b]$ for a positive integer
$d$ (i.e., a common divisor of the coefficients) we can derive $[\sum_{i=1}^kc_i
x_i\geq \lceil b/d\rceil ]$; crucially, this derivation is only sound due
to the fact that the $x_i$ are assumed to take {\em integer} values. It is
the fact that this rounding may ``cut'' into the region defined by the system
of linear inequalities that gives the proof system its name. A {\em refutation}
in cutting planes is a derivation of the (contradictory) inequality $[0\geq 1]$.

Again, we will need to make some technical modifications that do not change the
power of the proof system by much. We will encode $1$ as an axiom by the
inequality $0\geq -1$ which, we note, can be trivially derived in two steps by
the standard formulation of cutting planes. We will also introduce a
{\em weakening} rule: consider any linear inequality $[\sum_{i=1}^kc^{(1)}_i
x_i\geq b^{(1)}]$ that is witnessed true in every partial assignment,
specifically in the one that masks all variables---this means that
$\sum_{i=1}^k\min\{0,c^{(1)}_i\}\geq b$. Then, from any linear inequality
$[\sum_{i=1}^kc^{(2)}_ix_i\geq b^{(2)}]$, we will allow ourselves to derive
$[\sum_{i=1}^k(c^{(1)}_i+c^{(2)}_i)x_i\geq b^{(1)}+b^{(2)}]$ in a single
step. Of course, $[\sum_{i=1}^kc^{(1)}_ix_i\geq b^{(1)}]$ could be derived
from the axioms in at most $3n+2$ steps if there are $n$ variables while
using only two formulas' worth of space, whereupon the final inequality follows
by addition.

We will also find the following observation convenient: as restrictions are a 
kind of partial evaluation, it is intuitively clear that we can perform the 
evaluation in stages and obtain the same end result, that is:

\begin{proposition}[Restrictions may be broken into stages]\label{staged-eval}
Let $\rho$ be a partial assignment, and let $\sigma$ be another partial 
assignment such that for every variable $x_i$, whenever $\rho_i=*$, $\sigma_i=
*$, and whenever $\sigma_i\in\{0,1\}$, $\sigma_i=\rho_i$. Now, let $\tau$ be a
partial assignment to the variables $\{x_i:\sigma_i=*\}$ such that for every 
$x_i$, $\sigma_i=\rho_i$. Then for every formula $\varphi$, $\varphi|_\rho=
(\varphi|_\sigma)|_\tau$.
\end{proposition}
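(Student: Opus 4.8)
The plan is to prove the identity $\varphi|_\rho = (\varphi|_\sigma)|_\tau$ by structural induction on $\varphi$, where $\tau$ is the partial assignment with $\tau_i = \rho_i$ for every $x_i$ with $\sigma_i = *$; note that under the stated relationship between $\rho$ and $\sigma$, ``applying $\sigma$ and then $\tau$'' sets exactly the variables $\rho$ sets, to the same values, and leaves exactly $\rho$'s $*$-variables free. Before the induction I would record one bookkeeping observation that gets used everywhere: for any partial assignment $\alpha$ and formula $\phi$, the restriction $\phi|_\alpha$ is (syntactically) one of the constant formulas $0$, $1$ precisely when $\phi$ is witnessed under $\alpha$, and then the constant is the witnessed value. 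This is immediate from the recursive definition of restriction --- the constant formulas are produced only by the ``witnessed'' clause, and a non-witnessed negation or threshold restricts to a $\neg(\cdot)$ or a $[\,\cdots\geq d]$, never a bare constant (for negation one uses that $\neg\psi$ is witnessed whenever $\psi$ is). Combined with the induction hypothesis $\psi_i|_\rho = (\psi_i|_\sigma)|_\tau$ on a subformula, this observation transfers witnessing facts: $\psi_i$ is witnessed to $v$ under $\sigma$ iff under $\rho$, and --- more finely --- $\psi_i|_\sigma$ is witnessed to $v$ under $\tau$ iff $\psi_i$ is witnessed to $v$ under $\rho$.

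The base case ($\varphi$ a variable) and the negation case are then routine. For a variable $x_i$: if $\sigma_i\in\{0,1\}$ then $\sigma_i=\rho_i$ and both sides equal that constant; if $\sigma_i=*$ then $\varphi|_\sigma = x_i$, and since $\tau_i=\rho_i$, restricting by $\tau$ returns $\rho_i$ or $x_i$ according as $\rho_i\in\{0,1\}$ or $\rho_i=*$, matching $\varphi|_\rho$. For $\varphi=\neg\psi$, split on whether $\varphi$ is witnessed under $\rho$: if it is, then $\psi$ is witnessed under $\rho$ to the complementary value, so by the observation and the induction hypothesis $\psi$ is witnessed under $\sigma$ as well, and both pipelines collapse $\varphi$ to the same constant; if $\varphi$ is not witnessed under $\rho$ then neither is $\psi$, so $\psi|_\rho$ --- hence $(\psi|_\sigma)|_\tau$ --- is non-constant, so $\psi|_\sigma$ is non-constant and unwitnessed under $\tau$ and $\psi$ is unwitnessed under $\sigma$, giving $\varphi|_\rho=\neg(\psi|_\rho)=\neg((\psi|_\sigma)|_\tau)=(\neg(\psi|_\sigma))|_\tau=(\varphi|_\sigma)|_\tau$ by the induction hypothesis.

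The threshold case $\varphi=[\sum_{i=1}^k c_i\psi_i\geq b]$ is the real work, and the step I expect to be the main obstacle --- though it is pure, somewhat tedious bookkeeping rather than anything conceptually hard. Let $I=\{i:\psi_i\text{ unwitnessed under }\sigma\}$. I would first prove the analogue of the observation at the level of $\varphi$: $(\varphi|_\sigma)|_\tau$ is the constant $v$ iff $\varphi$ is witnessed to $v$ under $\rho$. When $\varphi$ is already witnessed under $\sigma$ this is clear, using that the witnessed-true/false inequality for a threshold only moves the right way as children pass from unwitnessed to witnessed, so witnessing is monotone under refinement here as well. When $\varphi$ is unwitnessed under $\sigma$, $\varphi|_\sigma=[\sum_{i\in I}c_i(\psi_i|_\sigma)\geq d]$ with $d=b-\sum_{i:\psi_i\text{ witnessed true under }\sigma}c_i$, and applying the threshold clause a second time under $\tau$ --- partitioning $I$ by the status of $\psi_i|_\sigma$ under $\tau$, which by the observation and the induction hypothesis coincides with the status of $\psi_i$ under $\rho$ --- one checks that the defining inequality for ``$\varphi|_\sigma$ witnessed to $v$ under $\tau$'' is, after substitution, literally the defining inequality for ``$\varphi$ witnessed to $v$ under $\rho$'' (here one uses that ``witnessed under $\rho$'' refines ``witnessed under $\sigma$'', so the indices outside $I$ supply exactly the terms absorbed into $d$, and the unwitnessed-under-$\rho$ indices all lie in $I$). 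Given this equivalence: if $\varphi$ is witnessed under $\rho$, both sides are the same constant; if not, then $\varphi$ is unwitnessed under $\sigma$ too, $\varphi|_\sigma$ is unwitnessed under $\tau$, and expanding $(\varphi|_\sigma)|_\tau$ via the threshold clause and substituting $(\psi_i|_\sigma)|_\tau=\psi_i|_\rho$ shows both the surviving linear form and the twice-updated offset $d-\sum_{i\in I:(\psi_i|_\sigma)|_\tau=1}c_i$ agree with those of $\varphi|_\rho=[\sum_{i:\psi_i\text{ unwitnessed under }\rho}c_i\,\psi_i|_\rho\geq b-\sum_{i:\psi_i\text{ witnessed true under }\rho}c_i]$. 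This closes the induction.
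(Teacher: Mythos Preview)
Your proof is correct and follows essentially the same route as the paper's: structural induction on $\varphi$, with the threshold case handled by matching up which subformulas are witnessed under $\rho$ versus under $\sigma$-then-$\tau$. Your explicit bookkeeping observation (that $\phi|_\alpha$ is a constant iff $\phi$ is witnessed under $\alpha$) makes rigorous what the paper leaves implicit when it asserts ``the same $\psi_i$ are witnessed (to evaluate to true or false) in both cases''; the paper's threshold case is organized slightly differently (splitting on whether $\varphi$ is witnessed under $\sigma$, under $\rho$ but not $\sigma$, or not under $\rho$) but the content is the same.
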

\begin{proof}
We can verify this by induction on the construction of $\varphi$:
\begin{itemize}
\item Naturally, for variables $x_i$, either $\rho_i=*$, in which case
$x_i|_\rho=x_i=(x_i|_\sigma)|_\tau$, or else $\rho_i\in\{0,1\}$
in which case either $\sigma_i=\rho_i$, or else $x_i|_\sigma=x_i$, and then 
$\tau_i=\rho_i$.
\item If $\varphi=\neg\psi$, we have by the induction hypothesis that
$\psi|_\rho=(\psi|_\sigma)|_\tau$. Regardless of whether or not $\varphi$ is
witnessed, $\varphi|_\rho=\neg(\psi|_\rho)=\neg((\psi|_\sigma)|_\tau)=
(\varphi|_\sigma)|_\tau$.
\item If $\varphi=[\sum_{i=1}^kc_i\psi_i\geq b]$, we again have by the induction
hypothesis that for every $\psi_i$, $\psi_i|_\rho=(\psi_i|_\sigma)|_\tau$, and
thus, the same $\psi_i$ are witnessed (to evaluate to true or false) in both
cases.
\begin{itemize}
\item If $\varphi$ is not witnessed in $\rho$, it is then immediate that
$\varphi|_\rho=(\varphi|_\sigma)|_\tau$.
\item If $\varphi$ is witnessed in $\rho$, but not witnessed in $\sigma$, we
observe that $\varphi$ must be witnessed in $\tau$ since the same set of
formulas are witnessed to evaluate to true and false in both cases, and
therefore also again, $\varphi|_\rho=(\varphi|_\sigma)|_\tau$.
\item Finally, when $\varphi$ is witnessed in $\sigma$, we note that by the
construction of witnessed values, it does not matter what values the formulas
witnessed by $\rho$ but not $\sigma$ take---$\varphi$ must be witnessed to take
the same value under both $\rho$ and $\sigma$. Then since
$(\varphi|_\sigma)|_\tau=\varphi|_\sigma\in\{0,1\}$, we see once again
$(\varphi|_\sigma)|_\tau=\varphi|_\rho$.
\end{itemize}
\end{itemize}
\end{proof}

\begin{proposition}\label{cp-res-closed}
Cutting planes is restriction-closed.
\end{proposition}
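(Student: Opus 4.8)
The plan is to verify Definition~\ref{res-closed-def} directly, by a case analysis over the axioms and the four inference rules (addition, multiplication by a positive integer, division, and the weakening rule), checking that under any partial assignment $\rho$ each step either produces a conclusion whose restriction is the axiom $1$ (which we have already arranged by taking $1$ to be $[0\geq -1]$ and declaring it an axiom), lies in $H|_\rho$, or follows from the restrictions of a subsequence of its premises by a single rule of the system.

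First I would record the one computational fact that drives everything: for a linear inequality $\chi=[\sum_i c_i x_i\geq b]$ that is not witnessed under $\rho$, the restriction $\chi|_\rho$ is the inequality over the masked variables obtained by discarding the coordinates with $\rho_i=0$ and replacing $b$ by $b-\sum_{i:\rho_i=1}c_i$; moreover $\chi$ is witnessed true under $\rho$ exactly when this restricted inequality satisfies $\sum_{i:\rho_i=*}\min\{0,c_i\}\geq b-\sum_{i:\rho_i=1}c_i$, i.e.\ exactly when it is of the form that the weakening rule permits us to add to another inequality. (One may use Proposition~\ref{staged-eval} to reveal the coordinates of $\rho$ one at a time and keep this bookkeeping uniform.) In particular, the axioms $[x\geq 0]$ and $[-x\geq -1]$, like $[0\geq-1]$, are witnessed true under every $\rho$, so their restrictions are the axiom $1$.

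Then I would run through the rules. When no premise of a step becomes trivially true or false under $\rho$, the same rule reproduces $\varphi|_\rho$ from the restricted premises after a routine computation: for addition and for weakening the restricted conclusion is literally the coefficientwise sum of the restricted premises; for multiplication by $d$ and for division by $d$ it is the corresponding scaling or rounding of the restricted premise, where for division one uses that $\lceil b/d\rceil-\sum_{i:\rho_i=1}c_i=\lceil (b-d\sum_{i:\rho_i=1}c_i)/d\rceil$ because $\sum_{i:\rho_i=1}c_i\in\bbN$. The substantive cases are when $\rho$ makes a formula witnessed. If a premise $\psi_j$ restricts to $1$, then by the fact above its underlying inequality is exactly a legal weakening increment, so I obtain $\varphi|_\rho$ by one weakening step applied to the surviving premise; and using $\min\{0,a\}+\min\{0,a'\}\le\min\{0,a+a'\}$ coordinatewise, if both premises of an addition restrict to $1$ then so does the conclusion. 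If the restricted conclusion is itself $1$ we are immediately done. If a premise restricts to the false formula, one checks that the rule still yields the correct restricted conclusion — either false again, or a genuine inequality that can be recovered from the false formula by a single weakening step (recalling that weakening by $[0\geq 0]$ is the identity).

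The main obstacle I anticipate is precisely this last bookkeeping: ensuring that every combination of genuine / witnessed-true / witnessed-false among a step's premises and its conclusion is covered by one legal rule application, given that restriction can silently erase an inequality that was serving as a premise (by turning it into $1$) or collapse an inequality to ``false.'' The weakening rule — introduced for exactly this purpose — is the tool that resolves these cases, since it lets us re-introduce in a single step any inequality that is trivially true on the masked variables; what remains is just verifying that the coefficients and constants line up in each subcase.
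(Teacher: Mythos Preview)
Your proposal is correct and follows essentially the same approach as the paper: a rule-by-rule case analysis showing that each axiom and each inference step survives restriction, with the weakening rule absorbing any premise that restricts to $1$ and the ceiling computation in the division case handled via the integrality of $\sum_{i:\rho_i=1}c_i$. Your treatment is in fact slightly sharper on one point---you observe that the Boolean axioms $[x\geq 0]$ and $[-x\geq -1]$ are witnessed true under \emph{every} $\rho$ (so always restrict to $1$), whereas the paper separately treats the case where $x$ is masked as ``the axiom remains the same axiom''---and you explicitly flag the witnessed-false edge case, which the paper leaves implicit; but neither of these changes the structure of the argument.
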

\begin{proof}
We are again given that our encoding of $1$, $0\geq -1$, is an axiom. Now, let
any partial assignment $\rho$ be given. Again, for any hypothesis $\varphi$,
asserted in the proof, $\varphi|_\rho$ can be asserted from the set of
restrictions of hypotheses. Likewise, for each axiom, if $\rho$ assigns the
variable a value, then it simplifies to $1$ (which is given as an axiom
by assumption) and otherwise it remains an assertion of the same axiom, so in
either case it may still be asserted as an axiom. It thus remains to consider
formulas derived by our four inference rules.

We thus consider any formula $\varphi$ derived in the proof that is not
witnessed to evaluate to true in $\rho$. If it was derived from a formula $\psi$
by weakening, we note that if $\psi|_\rho=1$ (i.e., was witnessed to evaluate to
true), then since $\varphi$ is the sum of $\psi$ and another inequality $\xi$
that is witnessed to evaluate to true, we would have $\varphi|_\rho=1$ also, but
it is not by assumption. Therefore also $\psi|_\rho\neq 1$. Furthermore, by
Proposition~\ref{staged-eval}, $\xi|_\rho$ is (also) witnessed true on every
further partial assignment. Therefore,
$\varphi|_\rho=(\psi+\xi)|_\rho$ follows from $\psi|_\rho$ by weakening (with
$\xi|_\rho$). Similarly, if $\varphi$ was derived by addition of
$\psi$ and $\xi$, at least one of $\psi$ and $\xi$ must not be witnessed to
evaluate to $1$ under $\rho$; WLOG suppose it is $\psi$. Then if $\xi|_\rho=1$,
$\varphi|_\rho$ again follows from $\psi|_\rho$ by weakening. Finally, if
neither $\psi$ nor $\xi$ is witnessed to evaluate to true under $\rho$,
we can derive $\varphi|_\rho$ from $\psi|_\rho$ and $\xi|_\rho$ by addition.

Multiplication is especially simple: we note that if $\varphi=[\sum_{i=1}^k(d
\cdot c_i)x_i\geq d\cdot b]$ is derived by multiplication from $\psi=
[\sum_{i=1}^kc_ix_i\geq b]$, then $\psi$ also follows from $\varphi$ by
division, and hence $\varphi|_\rho=1$ iff $\psi|_\rho=1$ in this case; as we
have assumed $\varphi|_\rho\neq 1$, we note that we can derive $\varphi|_\rho$
from $\psi|_\rho$ by multiplication by the same $d$. Finally, if $\varphi=
[\sum_{i=1}^kc_ix_i\geq \lceil b/d\rceil ]$ was derived from $\psi=
[\sum_{i=1}^k(d\cdot c_i)x_i\geq b]$ by division, we note (more carefully)
that if $\psi|_\rho=1$, then as this means that $\sum_{i:\rho_i=1}
\min\{0,d\cdot c_i\} \geq b$ where the LHS is an integer, and hence also
$\sum_{i:\rho_i=1}\min\{0, c_i\} \geq \lceil b/d\rceil$, so $\varphi$
would also be witnessed to evaluate to true, but we have assumed it does not.
Now, we note that
\[
\psi|_\rho = \left[\sum_{i:\rho_i=*}(d\cdot c_i)x_i \geq
\left(b-\sum_{i:\rho_i=1}(d\cdot c_i)\right)\right]
\]
where division by $d$ therefore yields
\[
\left[\sum_{i:\rho_i=*}c_ix_i \geq
\left\lceil\frac{b}{d}-\sum_{i:\rho_i=1}c_i\right\rceil\right]
=\left[\sum_{i:\rho_i=*}c_ix_i \geq
\left(\left\lceil\frac{b}{d}\right\rceil-\sum_{i:\rho_i=1}c_i\right)
\right]=\varphi|_\rho
\]
as $\sum_{i:\rho_i=1}c_i$ is an integer.
\end{proof}

\subsubsection{Efficient algorithms for sparse, $\ell_1$-bounded cutting planes}

We now turn to developing a syntactic restriction of cutting planes that
features an efficient limited decision algorithm.

\paragraph{Sparse cutting planes.} The main restriction we use is to limit the
number of variables appearing in the threshold expression: we say that the
formula is {\em $w$-sparse} if at most $w$ variables appear in the
sum.\footnote{%
Naturally, this is a direct analogue of width in resolution; the reason we do
not refer to it as ``width'' is that in the geometric setting of cutting planes,
width strongly suggests a geometric interpretation that would be inappropriate.}
Naturally, we say that a cutting planes proof is $w$-sparse if every formula
appearing in the proof is $w$-sparse.

\paragraph{$\ell_1$-bounded coefficients.} We will also use a restriction on the
magnitude of the (integer) coefficients. Given a formula of cutting planes,
$\varphi=[\sum_{i=1}^kc_ix_i\geq b]$, we define the {\em $\ell_1$-norm of
$\varphi$} (denoted $\|\varphi\|_1$) to be $|b|+\sum_{i=1}^k|c_i|$, i.e., the
$\ell_1$ norm of the coefficient vector. For $L\in\bbN$, we naturally say that a
cutting planes proof is $L$-bounded if every $\varphi$ appearing in the proof
has $\|\varphi\|_1\leq L$.

We remark that the natural simulation of width-$w$ resolution by cutting planes
yields $w$-sparse and $2w$-bounded proofs: intuitively, we wish to encode a
clause $C=\ell_1\vee\cdots\vee\ell_k$ by the linear inequality
\[
\sum_{i:\ell_i=x_j}x_j+\sum_{i:\ell_i=\neg x_j}(1-x_j)\geq 1
\]
which naturally corresponds to the cutting planes formula
\[
\left[\sum_{i:\ell_i=x_j}x_j+\sum_{i:\ell_i=\neg x_j}(-1)x_j
\geq 1-|\{i:\ell_i\mathrm{\ negative}\}|\right]
\]
in which, if $k\leq w$, the coefficients from the LHS contribute at most $w$
to the $\ell_1$-norm, and the threshold is easily seen to contribute at most
$w$ (assuming $w\geq 1$). So, a simultaneously sparse and $\ell_1$-bounded
restriction of cutting planes generalizes the width-bounded restriction of
resolution.

We furthermore need to know that this special case of cutting planes is
restriction-closed---note that other natural special cases, e.g., bounding the
sizes of {\em individual} coefficients may not be. Nevertheless, for the
$\ell_1$-bounded cutting planes, this is easily established:

\begin{proposition}
The class of $L$-bounded $w$-sparse cutting plane proofs is restriction closed
for any $L,w\in\bbN$.
\end{proposition}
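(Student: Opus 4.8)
The plan is to build on Proposition~\ref{cp-res-closed}: its proof does not merely assert that the restriction of a cutting planes proof is a cutting planes proof, it explicitly constructs, from a proof $\Pi$ of $\varphi$ from $H$ and a partial assignment $\rho$, a proof $\Pi'$ of $\varphi|_\rho$ from $H|_\rho$ in which every line is either $\psi|_\rho$ for some line $\psi$ of $\Pi$, or else (when that restriction simplifies to true) the axiom $[0\ge -1]$ encoding $1$. So to establish the proposition I only need to check that this transformation keeps every line $L$-bounded and $w$-sparse, and for that it suffices to verify two facts: first, that the axiom $[0\ge -1]$ is itself $L$-bounded ($\ell_1$-norm $1$) and $w$-sparse ($0$ variables) for every $L,w\in\bbN$; and second, the key point, that restricting any $L$-bounded $w$-sparse cutting planes formula yields another $L$-bounded $w$-sparse formula.

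For the second fact, first I would dispatch the easy case: if $\varphi$ is witnessed in $\rho$ then $\varphi|_\rho$ is a constant formula ($[0\ge -1]$ or $[0\ge 1]$), which mentions no variables and has $\ell_1$-norm $1$. Otherwise, writing $\varphi = [\sum_{i=1}^k c_i x_i \ge b]$ with $k\le w$ and $|b|+\sum_{i=1}^k|c_i|\le L$, the definition of restriction gives $\varphi|_\rho = [\sum_{i:\rho_i=*} c_i x_i \ge b']$ with $b' = b - \sum_{i:\rho_i=1} c_i$. Sparsity is trivially preserved, since the variables of $\varphi|_\rho$ form a subset $\{i:\rho_i=*\}$ of those of $\varphi$. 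For the $\ell_1$-norm, the estimate to check is
\[
\|\varphi|_\rho\|_1 = |b'| + \sum_{i:\rho_i=*}|c_i| \le |b| + \sum_{i:\rho_i=1}|c_i| + \sum_{i:\rho_i=*}|c_i| \le |b| + \sum_{i=1}^k|c_i| = \|\varphi\|_1 \le L,
\]
where the first inequality is just the triangle inequality applied to $b' = b - \sum_{i:\rho_i=1}c_i$. Assembling these facts: every line of $\Pi'$ is $L$-bounded and $w$-sparse, so $\Pi'$ witnesses that $\varphi|_\rho$ has an $L$-bounded $w$-sparse proof from $H|_\rho$, which is exactly what Definition~\ref{res-closed-set-def} demands.

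There is essentially no obstacle here beyond making the $\ell_1$ estimate, but it is worth flagging that this is precisely the step that would fail for the more naive variant bounding the magnitude of each individual coefficient: setting several variables to $1$ can push $|b'|$ well above $\max_i|c_i|$ (up to roughly $w\cdot\max_i|c_i|$), whereas the $\ell_1$-norm can only shrink, because the growth of the constant term is exactly compensated by the coefficients deleted from the sum. Everything else is inherited from Proposition~\ref{cp-res-closed}.
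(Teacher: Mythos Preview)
Your proposal is correct and follows essentially the same approach as the paper: invoke Proposition~\ref{cp-res-closed} to obtain the restricted proof, observe that the axiom $[0\ge -1]$ is trivially within the bounds, and then verify for a general line that restriction can only shrink both the support and the $\ell_1$-norm, the latter via the triangle inequality applied to the new threshold $b' = b - \sum_{i:\rho_i=1}c_i$. Your closing remark explaining why an $\ell_\infty$-style bound would fail is a nice addition that the paper only alludes to in passing.
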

\begin{proof}
Let any $L$-bounded $w$-sparse cutting plane proof $\Pi$ and partial assignment
$\rho$ be given. We consider the proof $\Pi|_\rho$ obtained by restricting every
step of $\Pi$ by $\rho$ (shown to be a cutting planes proof in Proposition~\ref
{cp-res-closed}). Now, we note that in this proof, our encoding of $1$ as
$[0\geq -1]$ is $0$-sparse and $1$-bounded, so it is guaranteed to be
$L$-bounded and $w$-sparse. More generally, given any $\varphi$ that is
$L$-bounded and $w$-sparse,
\[
\varphi|_\rho=\left[\sum_{i:\rho_i=*}c_ix_i
\geq b-\sum_{i:\rho(\alpha_i)=1}c_i\right]
\]
has $\ell_1$-norm
\[
\|\varphi|_\rho\|_1=\left|b-\sum_{i:\rho_i=1}c_i\right|+\sum_{i:
\rho_i=*}|c_i|\leq |b|+\sum_{i:\rho_i\neq 0}|c_i|
\]
by the triangle inequality; as furthermore $0\leq\sum_{i:\rho_i=0}
|c_i|$, we conclude that $\|\varphi|_\rho\|_1\leq\|\varphi\|_1\leq L$, so
$\Pi|_\rho$ is also $L$-bounded. Similarly, since every variable appearing
in $\varphi|_\rho$ appears in $\varphi$ and $\varphi$ appearing in $\Pi$ are
assumed to be $w$-sparse, $\varphi|_\rho$ appearing in $\Pi|_\rho$ are also
$w$-sparse. Thus, $\Pi|_\rho$ is also a $w$-sparse cutting planes proof, as
needed.
\end{proof}


\begin{algorithm}[t]
\DontPrintSemicolon
\SetKwInOut{Input}{input}\SetKwInOut{Output}{output}

\Input{Formulas $\varphi_1,\ldots,\varphi_\ell$ and
$\phi$, sparsity and $\ell_1$-norm bounds $w,L\in\bbN$.}
\Output{{\em Accept} if there is a $L$-bounded $w$-sparse proof
of $\phi$ of from $\varphi_1,\ldots,\varphi_\ell$; else, {\em Reject}.}

\Begin{
Initialize a table $T[\psi]\leftarrow 0$ for every cutting planes formula $\psi$
of sparsity $w$ and $\|\psi\|_1\leq L$; put $T[\psi]\leftarrow 1$ for every
axiom $\psi$.
\If{$\phi$ an axiom}{\Return{\em Accept}}
 \For{$i=1,\ldots,\ell$ if $\varphi_i$ is $w$-sparse}{
   \If{$\varphi_i=\psi$}{\Return {\em Accept}}
   $T[\varphi_i]\leftarrow 1$
 }
$NEW\leftarrow 1$.\\
\While{$NEW=1$}{
 $NEW\leftarrow 0$.\\
 \ForEach{Pair of formulas $(\psi_1,\psi_2)$ in $T$ or among $\varphi_1,\ldots,
\varphi_\ell$}{
    \If{$\psi_1+\psi_2$ has sparsity at most $w$, $\|\psi_1+\psi_1\|\leq L$, and
$T[\psi_1+\psi_2]=0$}{
        $NEW\leftarrow 1$; $T[\psi_1+\psi_2]\leftarrow 1$
       }
     }
  \ForEach{Formula $\psi$ in $T$}{
    \For{$a=-L,\ldots,L$}{
      \If{$\|a\cdot\psi\|_1\leq L$ and $T[a\cdot\psi]=0$}{
        \If{$a\cdot\psi=\phi$}{\Return {\em Accept}}
        $NEW\leftarrow 1$; $T[a\cdot\psi]\leftarrow 1$
        }
    }
    \For{$d=2,\ldots,L$}{
      \If{$d$ divides $\psi$ and $T[\psi\mathrm{\ divided\ by\ }d]=0$}{
        \If{$\psi$ divided by $d=\phi$}{\Return {\em Accept}}
        $NEW\leftarrow 1$; $T[\psi\mathrm{\ divided\ by\ }d]\leftarrow 1$\\
       }
     }
   }
 }
\Return{ {\em Reject} }

}

\caption{DecideSparseBoundedCP}\label{cp-decision-alg}
\end{algorithm}

We now consider Algorithm~\ref{cp-decision-alg}, an analogue of Algorithm~\ref
{res-k-width-decision-alg} -- i.e., a simple dynamic programming algorithm -- 
for the limited decision problem for $w$-sparse and $L$-bounded cutting planes.


\begin{theorem}[Analysis of decision algorithm for sparse, bounded cutting
planes]\label{sbcp-analysis}
For any $w,L\in\bbN$, Algorithm~\ref{cp-decision-alg} solves the limited
decision problem for $w$-sparse $L$-bounded cutting planes. It runs in time
$O((w+\max\{|\phi_i|\})L(Ln)^w(L(Ln)^w+\ell)^2)$ (which, for $w$
constant and $w$-sparse $L$-bounded $\phi_i$ is $O(L^3(Ln)^{3w})$)
where $n$ is the number of variables.
\end{theorem}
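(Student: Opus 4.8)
The plan is to follow the template of the bounded-width RES($k$) analysis, Theorem~\ref{res-k-width-analysis}: first show that Algorithm~\ref{cp-decision-alg} accepts exactly when there is a $w$-sparse $L$-bounded cutting planes proof of $\phi$ from $\varphi_1,\ldots,\varphi_\ell$, and then bound the running time by counting the formulas the algorithm manipulates. Accepting precisely on this event does solve the limited decision problem: under the promise, if $\{\varphi_1,\ldots,\varphi_\ell\}\not\models\phi$ then, by the soundness of cutting planes, there is no cutting planes derivation of $\phi$ whatsoever, hence none that is $w$-sparse and $L$-bounded, so the algorithm rejects.

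\textbf{Soundness of the algorithm.} I would maintain the invariant that whenever $T[\psi]=1$ there is a $w$-sparse $L$-bounded cutting planes derivation of $\psi$ from $\varphi_1,\ldots,\varphi_\ell$, proved by induction over the execution. The base cases are the axioms (all derivable, using our inclusion of $[0\ge -1]$ as an axiom) and the $w$-sparse hypotheses among the $\varphi_i$; each later marking of an entry $\psi'$ is caused by an application of addition, multiplication, or division to formulas that are already marked or are $w$-sparse hypotheses --- all genuine cutting planes inference rules --- and the algorithm marks $\psi'$ only after verifying that it is again $w$-sparse with $\ell_1$-norm at most $L$. Since the algorithm accepts exactly when $\phi$ is found to be an axiom, to equal a $w$-sparse $\varphi_i$, or to be producible by such a step (i.e., placed in $T$), acceptance furnishes the required derivation.

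\textbf{Completeness of the algorithm.} First observe that the \texttt{while} loop is a saturation loop: there are only finitely many $w$-sparse $L$-bounded formulas, and every non-final pass marks at least one new entry, so the loop terminates, and at termination $T$ is closed under every cutting planes inference whose premises and conclusion are all $w$-sparse and $L$-bounded. Given a $w$-sparse $L$-bounded proof $\Pi$ of $\phi$, I would show by induction on its length that every line of $\Pi$ lies in $T$ at termination (or $\phi$ is caught as an axiom/hypothesis before the loop): hypotheses and axioms are handled at initialization, and every addition/multiplication/division step of $\Pi$ has all its premises already in $T$ and stays within the bounds, so its conclusion enters $T$. The only rule not directly implemented is weakening; here I would use the observation made when the weakening rule was introduced, that the trivially-true inequality $\xi$ is derivable from the $x_i\ge 0$ and $-x_i\ge -1$ axioms in $O(n)$ steps, so a weakening step $\varphi=\psi+\xi$ is re-simulated by deriving $\xi$ and then adding it to $\psi$ --- the delicate point being that these intermediate inequalities stay $w$-sparse and $L$-bounded (which should follow because $\xi$ being trivially true forces its constant term to be nonpositive and bounded in magnitude by the sum of its negative coefficients, keeping every partial sum's $\ell_1$-norm at most $\|\xi\|_1$). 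I expect this --- confirming that the saturation loop is \emph{complete} for the restricted proof class, and in particular that weakening fits inside the sparsity and norm bounds --- to be the one genuinely delicate step; everything else is bookkeeping.

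\textbf{Running time.} The number of cutting planes formulas that are $w$-sparse and $L$-bounded is $N=O\big((Ln)^wL\big)$: choose the at most $w$ occurring variables in $O(n^w)$ ways and each of the at most $w+1$ integer coefficients (counting the constant term) from $\{-L,\ldots,L\}$ in $O(L)$ ways. This is the size of the table $T$, so initialization costs $O\big(N+\ell\cdot\max_i|\varphi_i|\big)$. The \texttt{while} loop makes at most $N+1$ passes, since each non-final pass marks a fresh entry. A single pass enumerates the $O\big((N+\ell)^2\big)$ pairs for the addition rule, spending $O\big(w+\max_i|\varphi_i|\big)$ on each to form the sum, test sparsity and norm, and look it up, plus $O(NL)$ further multiplication and division applications each costing $O(w)$; the pair enumeration dominates. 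Multiplying through yields $O\big((w+\max_i|\varphi_i|)\,N\,(N+\ell)^2\big)$, which on substituting $N=O(L(Ln)^w)$ is the stated bound; for constant $w$ with $w$-sparse $L$-bounded inputs we have $\max_i|\varphi_i|=O(w)$ and $\ell=O(N)$, giving $O\big(L^3(Ln)^{3w}\big)$.
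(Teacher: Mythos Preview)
Your proposal is correct and follows essentially the same approach as the paper: establish soundness and completeness of the saturation loop by the same induction, then count $w$-sparse $L$-bounded formulas as $O(L(Ln)^w)$ and bound per-iteration work by the pair enumeration. One small remark: your justification for why the weakening simulation stays within the $L$-bound tracks $\|\text{partial}(\xi)\|_1$, but the relevant quantity is $\|\psi+\text{partial}(\xi)\|_1\le L$; the paper, for its part, dispatches this point in a single clause (``weakening could be simulated by repeated addition of axioms'') without checking the bounds either, so you are in fact being more careful than the original in flagging it as the delicate step.
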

\begin{proof}
The analysis is very similar to our previous dynamic programming algorithms for
bounded-width RES$(k)$, Theorem~\ref{res-k-width-analysis}. As there, we are 
inductively guaranteed that at each stage we set $T[\psi]$ to $1$ only when 
there is a $w$-sparse $L$-bounded proof of $\psi$, and conversely, for every 
$\psi$ with a $w$-sparse $L$-bounded proof, until $T[\psi]$ is set to $1$, on 
each iteration of the main loop, we set an entry of $T$ to $1$ for some new step
of the proof (we noted that weakening could be simulated by repeated addition of
axioms, so we don't need to consider it explicitly). Thus, if the input target 
$\phi$ has a $w$-sparse $L$-bounded proof, $T[\phi]$ would be set to $1$ at some
point, whereupon the algorithm accepts, and otherwise since the size of the 
table is bounded, the algorithm eventually cannot add more formulas to the table
and so rejects. It only remains to consider the running time.

The main observation is that there are at most ${w+1+L\choose w+1}=O(L^{w+1})$
ways of assigning integer weights of total $\ell_1$-weight at most $L$ to the
$w$ nonzero coefficients and the threshold; therefore, as there are at most
$O(n^w)$ distinct choices of up to $w$ variables, there are at most
$O(L^{w+1}n^w)$ possible $w$-sparse $L$-bounded cutting plane formulas. At least
one is added on each iteration of the loop, and each iteration considers every
pair of such formulas with the $\ell$ input formulas (for $O((L(Ln)^w+\ell)^2)$
pairs on each iteration), where this sum can be carried out and checked in
$O(w+\max\{|\phi_i|\})$ arithmetic operations; checking the $O(L)$ possible
multiples and divisors for each of the $O(L(Ln)^w)$ formulas in $T$ also takes
$O(w)$ arithmetic operations each, so the time for adding pairs dominates. The
claimed running time is now immediate.
\end{proof}

Once again, we are in a position to apply Theorem~\ref{implicit-learn-thm}, and
thus obtain:

\begin{corollary}[Implicit learning in sparse bounded cutting planes]
\label{cp-implicit-cor}
Let a list of $w$-sparse $L$-bounded cutting planes formulas $\varphi_1,\ldots,
\varphi_\ell$ and $\phi$ be given, and suppose that 
partial assignments are drawn from a masking process for an underlying
distribution $D$; suppose further that either
\begin{compactenum}
\item There exists some list of cutting planes formulas $\psi_1,\ldots,\psi_k$
such that partial assignments from the masking process are witnessed to satisfy
$\psi_1,\ldots,\psi_k$ with probability at least $(1-\epsilon+\gamma)$ and there
is a $w$-sparse $L$-bounded cutting planes derivation of $\phi$ from
$\varphi_1,\ldots,\varphi_\ell,\psi_1,\ldots,\psi_k$
or else
\item $[\varphi_1\wedge\cdots\wedge\varphi_\ell\Rightarrow\phi]$ is at most
$(1-\epsilon-\gamma)$-valid with respect to $D$ for $\gamma>0$.
\end{compactenum}
Then, there an algorithm running in time $O(\frac{w+\max\{|\phi_i|\}}{\gamma^2}
L(Ln)^w(L(Ln)^w+\ell)^2\log\frac{1}{\delta})$ (given unit cost arithmetic
operations) that distinguishes these cases with probability $1-\delta$ when
given $\phi$, $\varphi_1,\ldots,\varphi_\ell$, $\epsilon$, $\gamma$, and a
sample of $O(\frac{1}{\gamma^2}\log\frac{1}{\delta})$ partial assignments.
\end{corollary}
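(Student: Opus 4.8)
The plan is to obtain this as a direct application of Theorem~\ref{implicit-learn-thm}, taking $\calS$ to be the set of $w$-sparse, $L$-bounded cutting planes derivations. First I would let $A$ be the decision algorithm obtained from DecideSparseBoundedCP (Algorithm~\ref{cp-decision-alg}) by accepting exactly when it returns \emph{Accept}; by Theorem~\ref{sbcp-analysis}, $A$ solves the limited decision problem for $w$-sparse $L$-bounded cutting planes, and it runs in time $O((w+\max\{|\varphi_i|\})L(Ln)^w(L(Ln)^w+\ell)^2)$. I then need to verify the two structural hypotheses required by Theorem~\ref{implicit-learn-thm}: that cutting planes is a restriction-closed \emph{proof system}, which is Proposition~\ref{cp-res-closed} (and which, recall, also arranged for the axiom $1$, encoded as $[0\geq -1]$, to be available), and that the set $\calS$ is a restriction-closed \emph{set of proofs}, which is the proposition immediately preceding Algorithm~\ref{cp-decision-alg} (there we showed $\|\varphi|_\rho\|_1\leq\|\varphi\|_1\leq L$ via the triangle inequality and that sparsity cannot increase under restriction).

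Second, I would line up the promise. Case~(1) of the corollary is precisely the assertion that there is a proof of $\phi$ from $\{\psi_1,\ldots,\psi_k\}\cup\{\varphi_1,\ldots,\varphi_\ell\}$ in $\calS$ with the $\psi_j$ all witnessed true with probability at least $1-\epsilon+\gamma$ over $M(D)$, and case~(2) is precisely the assertion that $[\varphi_1\wedge\cdots\wedge\varphi_\ell\Rightarrow\phi]$ is not $(1-\epsilon-\gamma)$-valid under $D$; these are the two cases of Theorem~\ref{implicit-learn-thm} with $H=\{\varphi_1,\ldots,\varphi_\ell\}$ and $\varphi=\phi$. (Here one should note that the implication $[H\Rightarrow\phi]$ and all the hypotheses are expressible in the threshold basis, since cutting planes inequalities are themselves threshold connectives, so the validity quantities referenced by Theorem~\ref{implicit-learn-thm} are meaningful.) Substituting the running time from Theorem~\ref{sbcp-analysis} into the bound $O(T(n,|\varphi|,|H|)\cdot\frac{1}{\gamma^2}\log\frac{1}{\delta})$ and using the $O(\frac{1}{\gamma^2}\log\frac{1}{\delta})$ sample-complexity bound of Theorem~\ref{implicit-learn-thm} gives exactly the claimed bounds.

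The only point that needs a little care — and the nearest thing to an obstacle — is checking that $A$ is a legitimate choice on \emph{every} instance the reduction of Theorem~\ref{implicit-learn-thm} actually feeds it, namely the restricted instances $(\phi|_\rho,\{\varphi_1|_\rho,\ldots,\varphi_\ell|_\rho\})$ for $\rho\sim M(D)$. This is where restriction-closedness of $\calS$ is used on both sides: it guarantees that when case~(1) holds, after the $\psi_j\mapsto 1$ substitution in the proof of Theorem~\ref{implicit-learn-thm} there is still a $w$-sparse $L$-bounded proof of $\phi|_\rho$ from the restricted hypotheses, so $A$ is always invoked inside its promise; and it guarantees that each $\varphi_i|_\rho$ is again $w$-sparse and $L$-bounded, so the restricted hypotheses are syntactically of the form DecideSparseBoundedCP expects (with non-sparse or over-norm formulas simply ignored by the algorithm, which is harmless since none arise). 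With these observations the corollary follows immediately.
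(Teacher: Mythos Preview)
Your proposal is correct and is precisely the argument the paper has in mind: the corollary is stated immediately after ``we are in a position to apply Theorem~\ref{implicit-learn-thm}'' with no further proof, so the intended argument is exactly the instantiation you describe (DecideSparseBoundedCP for $A$, Proposition~\ref{cp-res-closed} for the proof system, the preceding proposition for restriction-closedness of $\calS$, and Theorem~\ref{sbcp-analysis} for the running time). Your extra paragraph checking that the restricted instances $(\phi|_\rho,H|_\rho)$ stay within the promise of $A$ is a welcome bit of care that the paper leaves implicit.
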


\section{The utility of knowledge with imperfect validity}
\label{imperfect-rules}

Although our introduction of PAC-Semantics was primarily motivated by our need
for a weaker guarantee that could be feasibly satisfied by inductive learning
algorithms, it turns out to provide a windfall from the standpoint of several
other classic issues in artificial intelligence. Several such examples are
discussed by Valiant~\cite{valiant95};\footnote{%
Concerning a related, but slightly different framework---there, ``unspecified''
is taken to be a third value, on par with ``true'' and ``false,'' which may be
treated specially in reasoning.} we will dwell on two core, related problems
here, the {\em frame} and {\em qualification} problems, first discussed by
McCarthy and Hayes~\cite{mcch69}. The frame problem essentially concerns the
efficient representation of what changes -- and what {\em doesn't} -- as the
result of an action (stressed in this form by Raphael~\cite{raphael71}). The
traditional solutions to this problem -- first suggested by Sandewall~\cite
{sandewall72}, with a variety of subsequent formalizations including notably,
McCarthy's circumscription~\cite{mcc80,mcc86} and Reiter's defaults~\cite
{reiter80} and ``successor state axioms''~\cite{reiter91} -- all essentially are
(informally) captured by asserting in one way or another that (normally)
``nothing changes unless an action that changes it is taken.'' Putting the early
methods such as circumscription and defaults aside (which have their own issues,
cf. Hanks and McDermott's ``Yale shooting problem''~\cite{hmd87}), the other
approaches make the above assertion explicit, and thus encounter some form of
the qualification problem---that is, it is essentially impossible to assert the
full variety of reasons for and ways in which something could change or fail to
change in a real-world situation.

Thus, the successor state axioms (etc.) fully capture a toy domain at best. And
yet, such simplified models have shown to be useful in the design of algorithms
for planning---implicitly in early work such as Fikes and Nilsson's
STRIPS~\cite{fn71}, and more explicitly in later work such as Chapman's ``modal
truth criterion'' in his work on partial-order planning~\cite{chapman87} and
as explicit constraints in planning as propositional satisfiability by Kautz and
Selman~\cite{ks92,ks96}. Indeed, such approaches ``solve the problem'' in the
sense that the kinds of plans generated by such systems are intuitively
reasonable and correspond to what is desired.

More to the point, we can take the stance that such assumptions are merely
{\em approximations} to the real-world situation that may fail for various
unanticipated reasons, and so while the plans generated on their basis may
likewise fail for unanticipated reasons, this does not detract from the utility
of the plans under ordinary circumstances. Indeed, supposing we take a
discrete-time probabilistic (e.g., Markovian) model of the evolution of the
world, we might reasonably expect that if we consider the marginal distribution
over successive world states, that formulas such as the successor state axioms
would be $(1-\epsilon)$-valid with respect to this distribution for some small
(but nonzero) $\epsilon$. Of course, this view of the solutions to the frame 
problem is {\em not} novel to this work, and it has been expressed since the 
earliest works on probabilistic models in planning~\cite{dk89,gd94}. The point 
is rather that such examples of what are effectively $(1-\epsilon)$-valid rules 
arise naturally in applications, and we claim that just as PAC-Semantics 
captures the sense in which learned rules are (approximately) ``true,'' 
PAC-Semantics also captures the sense in which these approximate rules (e.g., as
used in planning) are ``true.''

\section{Directions for future work}

A broad possible direction for future work involves the development of 
algorithms for reasoning in PAC-Semantics {\em directly}, that is, not obtained 
by applying Theorem~\ref{implicit-learn-thm} to algorithms for the limited 
decision problems under the classical (worst-case) semantics of the proof 
systems. We will give some concrete suggestions for how this might be pursued
below.

\subsection{Incorporating explicit learning}

One approach concerns the architecture of modern algorithms for deciding
satisfiability; a well-known result due to Beame et al.~\cite{bks04} establishes
that these algorithms effectively perform a search for resolution proofs of
unsatisfiability (or, satisfying assignments), and work by Atserias et al.~\cite
{aft11} shows that these algorithms (when they make certain choices at random)
are effective for deciding bounded-width resolution.

The overall architecture of these modern ``SAT-solvers'' largely follows that
of Zhang et al.~\cite{zmmm01}, and is based on improvements to DPLL~\cite{dp60,
dll62} explored earlier in several other works~\cite{mss99,bs97,gsc97}. Roughly
speaking, the algorithm makes an arbitrary assignment to an unassigned variable,
and then examines what other variables must be set in order to satisfy the
formula; when a contradiction is entailed by the algorithm's decision, a new
clause is added to the formula (entailed by the existing clauses) and the
search continues on a different setting of the variables. A few simple rules are
used for the task of exploring the consequences of a partial setting of the
variables---notably, for example, {\em unit propagation}: whenever all of the 
literals in a clause are set to false except for one (unset) variable, that 
final remaining literal must be set to true if the assignment is to satisfy the 
formula. 

One possibility for improving the power of such algorithms for reasoning under
PAC-Semantics using examples is that one might wish to use an explicit learning 
algorithm such as WINNOW~\cite{littlestone88} to learn additional (approximately
valid) rules for extending partial assignments. If we are using these algorithms
to find resolution refutations, then when a refutation was produced by such a
modified architecture, it would establish that the input formula is only
satisfied with some low probability (depending on the error of the learned rules
that were actually invoked during the algorithm's run).

Given such a modification, one must then ask: does it actually improve the power
of such algorithms? Work by Pipatsrisawat and Darwiche~\cite{pd11} (related to
the above work) has shown that with appropriate (nondeterministic) guidance in 
the algorithm's decisions, such algorithms do actually find arbitrary (i.e., 
DAG-like) resolution proofs in a polynomial number of iterations. Yet, it is
still not known whether or not a feasible decision strategy can match this.
Nevertheless, their work (together with the work of Atserias et al.~\cite
{aft11}) provides a potential starting point for such an analysis.

\subsubsection{A suggestion for empirical work}\label{empirical-future-work}

Another obvious direction for future work is the development and tuning of
real systems for inference in PAC-Semantics. While the algorithms we have 
presented here illustrate that such inference can be theoretically rather 
efficient and are evocative of how one might approach the design of a real-world
algorithm, the fact is that (1) any off-the-shelf SAT solver can be easily 
modified to serve this purpose and (2) SAT solvers have been highly optimized by
years of effort. It would be far easier and more sensible for a group with an 
existing SAT solver implementation to simply make the following modification, 
and see what the results are: along the lines of Algorithm~\ref{search-space}, 
for a sample of partial assignments $\{\rho^1,\ldots,\rho^m\}$, the algorithm 
loops over $i=1,\ldots,m$, taking the unmasked variables in $\rho^i$ as 
decisions and checks for satisfiability with respect to the remaining variables.
Counting the fraction of the partial assignments that can be extended to 
satisfying assignments then gives a bound on the validity of the input formula. 
Crucially, in this approach, {\em learned clauses are shared across samples}. 
Given that there is a common resolution proof across instances (cf. the 
connection between SAT solvers and resolution~\cite{bks04}) we would expect this
sharing to lead to a faster running time than simply running the SAT solver as a
black box on the formulas obtained by ``plugging in'' the partial assignments 
(although that is another approach).

\subsection{Exploiting limited kinds of masking processes}

Another direction for possibly making more sophisticated use of the examples in
reasoning under PAC-Semantics involves restricting the masking processes. In the
pursuit of reasoning algorithms, it might be helpful to consider restrictions 
that allow some possibility of ``extrapolating'' from the values of variables 
seen on one example to the values of hidden variables in other examples (which 
is not possible in general since the masking process is allowed to ``see'' the 
example before choosing which entries to mask). For example, if the masks were 
chosen independently of the underlying examples, this might enable such guessing
to be useful.

\subsection{Relating implicit learning to query-driven explicit learning}

A final question that is raised by this work is whether or not it might be 
possible to extend the algorithm used in Theorem~\ref{implicit-learn-thm}, 
Algorithm~\ref{pac-decision-alg}, to produce an explicit proof from an explicit 
set of formulas that are satisfied with high probability from e.g., algorithms 
for finding treelike resolution proofs even when the CNF we need is not 
perfectly valid. Although this is a somewhat ambitious goal, if one takes 
Algorithm~\ref{pac-decision-alg} as a starting point, the problem is of a 
similar form to one considered by Dvir et al.~\cite{drwy12}---there, they 
considered learning decision trees from restrictions of the target tree. The 
main catch here is that in contrast to their setting, we are not guaranteed that
we find restrictions of the same underlying proof, even when one is assumed to 
exist.

\section*{Acknowledgements}
This work was heavily influenced by conversations with Leslie Valiant. 
\appendix
\section*{Appendix}

\section{The necessity of computationally feasible witnessing}
\label{feasible-witness}
We now show that it is necessary for our implicit learning problem that any
notion of witnessing we use possess some kind of efficient algorithm.
Broadly speaking, we are supposing that we use some class of ``axiom'' formulas
$A$ such that whenever the collection of axioms $\{\alpha_1,\ldots,\alpha_k\}
\subseteq A$ satisfy the our candidate witnessing property $W$ (given as a
relation over, say, formulas and partial assignments) under the masking process 
with probability $(1-\epsilon)$ (guaranteeing that $\alpha_1\wedge\cdots\wedge
\alpha_k$ is $(1-\epsilon)$-valid for the underlying distribution $D$), and
there exists a proof $\Pi$ of the query $\varphi$ in the limited set $\calS$
from the set of hypotheses $\{\alpha_1,\ldots,\alpha_k\}$, then the algorithm
certifies the $(1-\epsilon)$-validity of the query $\varphi$ under $D$. Now, in
general, we would expect that in any ``reasonable'' proof system and class of
``simple'' proofs $\calS$, the hypotheses should have trivial proofs (namely,
they can be asserted immediately) and therefore the efficient algorithm we are
seeking should certify the $(1-\epsilon)$-validity of any member of $A$ whenever
the property $W$ holds for the masking process with probability $(1-\epsilon)$.
(We will repeat this argument slightly more formally in Proposition~\ref
{axioms-checkable} below.)

In summary, this means precisely that for such a collection $A$, there is an
algorithm such that on input $\alpha\in A$ (and $\delta,\gamma>0$) and
given an oracle for examples, for any distribution over masked examples given
by a masking process applied to a distribution over scenes $M(D)$, with
probability at least $1-\delta$ the algorithm correctly decides whether
$\Pr_{\rho\in M(D)}[W(\alpha,\rho)]\geq 1-\epsilon+\gamma$ or
$\Pr_{x\in D}[\alpha(x)=0]\geq \epsilon+\gamma$ (given that one of these cases
holds) in time polynomial in the size of the domain, $1/\gamma$,
$\log 1/\delta$, $\log 1/\epsilon$, and the size of $\alpha$. We refer to this
algorithm as an {\em efficient PAC-Certification of $W$ for $A$}, and it serves
as a kind of efficient evaluation algorithm for $W$.

We now restate these observations more formally: any notion of ``witnessing''
underlying an implicit learning algorithm in the style of Theorem~\ref
{implicit-learn-thm} must be efficiently evaluable on partial assignments and
therefore also verifiable from examples.

\begin{proposition}[Witnessing of axioms must be computationally feasible]
\label{axioms-checkable}
Let $\calS$ be a set of proofs for a proof system such that any explicit
hypothesis has a proof in $\calS$. Let $A$ be a set of formulas and $W$ be a
property of formulas.

Suppose that there is a probabilistic algorithm running in time polynomial in
the number of variables $n$, the size of the query and set of hypotheses,
$1/\gamma$, and the number of bits of precision of the parameters $\epsilon$ and
$\delta$ with the following behavior: given a query formula $\varphi$,
$\epsilon,\delta,\gamma\in (0,1)$, query access to example partial assignments 
from a masking process $M$ over a distribution over assignments $D$, and a list 
of hypothesis formulas $H$, distinguishes
\begin{itemize}
\item queries $\varphi$ such that $[H\Rightarrow\varphi]$ is not
$(1-\epsilon-\gamma)$-valid under $D$ from
\item queries that have a proof in $\calS$ from $H'=H\cup A'$ for some $A'
\subseteq A$ such that $$\Pr_{\rho\in M(D)}[\forall \alpha\in A'\ W(\alpha,
\rho)]\geq 1-\epsilon+\gamma.$$
\end{itemize}
Then there is a probabilistic polynomial time algorithm that on input $\alpha\in
A$ and $\rho$ distinguishes pairs for which $W$ holds from pairs for which there
is some $x$ consistent with $\rho$ such that $\alpha(x)=0$.

Moreover, for $\{\alpha_1,\ldots,\alpha_k\}$ and an oracle for examples
from some distribution over partial assignments $M(D)$, we can distinguish
\[
\Pr_{\rho\in M(D)}[W(\alpha_1,\rho)\wedge\cdots\wedge W(\alpha_k,\rho)]\geq
1-\epsilon+\gamma
\]
from cases where $\alpha_1\wedge\cdots\wedge\alpha_k$ is not $(1-\epsilon-
\gamma)$-valid with probability $1-\delta$ in time polynomial in $1/\gamma$,
$\log 1/\epsilon$, $\log 1/\delta$, the size of the domain, and the size of 
$\alpha_1\wedge\cdots\wedge\alpha_k$.
\end{proposition}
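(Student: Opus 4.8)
The plan is to turn the hypothesized implicit-learning algorithm into a decision procedure for $W$ by running it on a carefully chosen query together with a \emph{simulated} example oracle. Concretely, on input $(\alpha,\rho)$ I would run the hypothesized algorithm with query $\varphi=\alpha$, empty explicit hypothesis set $H=\emptyset$, constants $\epsilon=\gamma=1/4$, confidence $\delta$, and an example oracle that returns the partial assignment $\rho$ every time it is queried, outputting ``$W(\alpha,\rho)$ holds'' iff the algorithm accepts. This simulated oracle is legitimate because it is exactly the oracle produced by any distribution $D$ supported on assignments consistent with $\rho$ together with the mask sending every such assignment to $\rho$ (so $M(D)$ is the point mass on $\rho$), and the hypothesized algorithm's guarantee must hold for every such $(D,M)$ that satisfies its promise.

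For correctness I would split on the two promised cases. If $W(\alpha,\rho)$ holds, then since the other case fails, $\alpha$ evaluates to true on \emph{every} assignment consistent with $\rho$; hence $\alpha$ is perfectly valid under any $D$ supported on $\rho$-consistent assignments, so the ``not $(1-\epsilon-\gamma)$-valid'' precondition fails, while $\alpha$ is trivially provable in $\calS$ from the singleton $A'=\{\alpha\}\subseteq A$ and satisfies $\Pr_{\rho'\in M(D)}[W(\alpha,\rho')]=1\ge 1-\epsilon+\gamma$, so the accept precondition holds; thus the algorithm accepts. If instead some $x^\ast$ consistent with $\rho$ has $\alpha(x^\ast)=0$ (and $W(\alpha,\rho)$ fails), I would analyze the run against the distribution $D$ equal to the point mass on $x^\ast$, with the mask sending $x^\ast$ to $\rho$ (again producing our simulated oracle): now $\alpha$ has validity $0$, so the ``not $(1-\epsilon-\gamma)$-valid'' precondition holds, and \emph{no} $A'\subseteq A$ can yield an accept certificate, because a proof of $\alpha$ from $A'$ in $\calS$ entails $\alpha$ classically from $A'$, and each $\beta\in A'$ would be forced true at $x^\ast$ by soundness of the witnessing notion (together with $x^\ast$ being consistent with $\rho$), contradicting $\alpha(x^\ast)=0$; thus the algorithm rejects. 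In each case exactly one precondition holds, so the algorithm is correct with probability $1-\delta$, and since it runs in time polynomial in $n$, $|\alpha|$, $1/\gamma=4$, and $\log 1/\delta$, so does our procedure.

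For the ``moreover'' claim I would apply the hypothesized algorithm once more, now with query $\varphi=\alpha_1\wedge\cdots\wedge\alpha_k$, $H=\emptyset$, and the \emph{given} example oracle for $M(D)$. If $\alpha_1\wedge\cdots\wedge\alpha_k$ is not $(1-\epsilon-\gamma)$-valid this is exactly the reject precondition. If $\Pr_{\rho\in M(D)}[W(\alpha_1,\rho)\wedge\cdots\wedge W(\alpha_k,\rho)]\ge 1-\epsilon+\gamma$, then taking $A'=\{\alpha_1,\ldots,\alpha_k\}\subseteq A$ together with the trivial derivation of the conjunction from its conjuncts gives a proof in $\calS$ meeting the witnessing requirement, so the accept precondition holds. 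These two preconditions are mutually exclusive, because by soundness of the witnessing notion and the union-bound reasoning behind Proposition~\ref{classical-inf-bound}, whenever all of $A'$ is simultaneously witnessed at a $\rho$ drawn from $M(D)$ the underlying example satisfies $\alpha_1\wedge\cdots\wedge\alpha_k$, so the accept side forces $(1-\epsilon+\gamma)$-validity of the conjunction and contradicts the reject side. Hence the promise is met, the algorithm's answer is correct with probability $1-\delta$, and its running time is polynomial in the stated parameters.

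The main obstacle I expect is precisely verifying that our constructed instances fall inside the hypothesized algorithm's promise — that is, that exactly one of its two preconditions holds in each case. For the first part this rests on the stated promise (exactly one of the two target cases holds) combined with the fact that any genuine notion of witnessing must be \emph{sound}, i.e., $W(\beta,\rho)$ implies $\beta$ is true on all completions of $\rho$; for the ``moreover'' part it additionally uses the relationship between simultaneous witnessing and validity of the conjunction. A secondary technical point is that $\calS$ must contain the trivial proof of a conjunction from its conjuncts, which it does under the stated assumption that explicit hypotheses have proofs in $\calS$, modulo the harmless convention already used elsewhere in the paper of treating a CNF query via its obvious refutation.
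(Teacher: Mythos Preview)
Your argument for the first claim is essentially the paper's: both feed the hypothesized implicit-learning algorithm the query $\alpha$ with $H=\emptyset$ and a simulated oracle that always returns $\rho$ (realized by a point-mass distribution together with the obvious mask), taking $A'=\{\alpha\}$ on the accept side. The paper factors this through an intermediate ``PAC-Certification'' step on a general $M(D)$ before specializing to the point mass, while you collapse the two steps, but the content is the same. Your explicit reliance on soundness of $W$ (that $W(\beta,\rho)$ forces $\beta$ to hold on every completion of $\rho$) is also implicit in the paper's argument: without it, the reject precondition of the hypothesized algorithm could coexist with an accept certificate coming from some \emph{other} $A'\subseteq A$, breaking the promise.

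Your approach to the ``moreover'' clause, however, diverges from the paper and has a real gap. You invoke the hypothesized algorithm with query $\varphi=\alpha_1\wedge\cdots\wedge\alpha_k$ and want $A'=\{\alpha_1,\ldots,\alpha_k\}$ to witness the accept precondition; this requires a proof \emph{in $\calS$} of the conjunction from its conjuncts. The stated hypothesis only guarantees that each individual hypothesis has a (trivial) proof of \emph{itself} in $\calS$---it says nothing about $\wedge$-introduction lying in $\calS$, and indeed for the paper's central example of resolution the conjunction is not even a clause. Your appeal to ``treating a CNF query via its refutation'' does not repair this: that convention turns the problem into refuting $\neg(\alpha_1\wedge\cdots\wedge\alpha_k)$ together with the $\alpha_i$, and nothing in the hypothesis says such a refutation lies in $\calS$ either. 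The paper instead avoids the issue entirely: it draws a sample of partial assignments from $M(D)$, runs the single-$\alpha$ decision procedure from the first part on each pair $(\alpha_i,\rho)$, and thresholds the empirical fraction of samples on which all $k$ calls report ``$W$ holds.'' No proof of a conjunction is ever needed.
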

\begin{proof}
We will first argue that $W$ has efficient PAC-Certification for $A$. Following
the argument sketched above, let any $\alpha\in A$ and $\epsilon,\delta,\gamma
\in (0,1)$ be given. We then simply run our hypothetical algorithm with query
$\alpha$ and $H$ empty. We know that this algorithm then runs in time polynomial
in $|\alpha|$, $1/\gamma$, $\log 1/\delta$, and $\log 1/\epsilon$. Furthermore,
if $\alpha$ is not $(1-\epsilon-\gamma)$-valid (i.e., $\Pr_{x\in D}[\alpha(x)=0]
\geq\epsilon+\gamma$), then we know the algorithm must detect this with
probability $1-\delta$. Likewise, if $\alpha$ satisfies 
$\Pr_{\rho\in M(D)}[W(\alpha,\rho)]\geq 1-\epsilon+\gamma$, then for $A'=
\{\alpha\}$, there is a proof of $\alpha$ from $A'$ in $\calS$ and our algorithm
is guaranteed to recognize that we are in the second case with probability
$1-\delta$. So we see that the efficient PAC-Certification of $W$ for $A$ is
immediate.

Let any partial assignment $\rho$ be given, and consider the family of point
distributions $D_y$ for $y$ consistent with $\rho$ with the masking process $M$
that obscures precisely the entries hidden in $\rho$. Then for every such $y$,
the distribution $M(D_y)$ is a point distribution that produces $\rho$ with
probability 1. Consider the behavior of the algorithm for efficient 
PAC-Certification of $W$ for $A$ given access to such a distribution
(which is trivially simulated given $\rho$) with say $\epsilon=1/2$,
$\gamma=1/4$.

Suppose that $\rho$ is consistent with some $y$ for which $\alpha(y)=0$. Then
in such a case, $\Pr_{x\in D_y}[\alpha(x)=0]=1\geq\epsilon+\gamma$, so when
given examples from $M(D_y)$ (and hence, when given $\rho$ as every example) the
algorithm must decide that the second case holds. Now, suppose on the other
hand that $W(\alpha,\rho)$ holds; then since our distribution produces
$\rho$ with probability 1, the algorithm must decide the first case holds.
Thus, our modified algorithm is as needed for the first part.

For the second part, we note that running the algorithm from the first part on
each example and each partial assignment from a sample of size $O(1/\gamma^2\log
1/\delta)$, and checking whether the fraction of times $W$ was decided to hold
for all $k$ formulas exceeded $1-\epsilon$ suffices to distinguish the two cases
by the usual concentration bounds.
\end{proof}


Our notion of witnessed values is clearly one that suffices for any family of
axioms $A$. By contrast, we now see that for example, we cannot in general
take $W$ to be the collection of pairs $(\alpha,\rho)$ such that for every $x$
consistent with $\rho$ $\alpha(x)=1$ -- arguably, the most natural candidate
(and in particular, the notion originally used by Michael~\cite{michael10}) --
since this may be NP-complete, e.g., for 3-DNF formulas, and so is presumably
not feasible to check. (We remark that our notion actually coincides with this
one in the case of {\em CNF} formulas, which is the relevant class of formulas
for the resolution proof system.)

\section{On the analysis of the algorithm for bounded-space treelike resolution}
\label{space-bound-alg-appendix}

We note that we can associate an optimal clause space to a given derivation
using the following recurrence (often used to define the equivalent {\em pebble
number} of a tree):

\begin{proposition}\label{optimal-space}
The optimal space derivation for a treelike resolution proof corresponding to a
given tree can be obtained recursively as follows:
\begin{compactitem}
\item The space of a single node is $1$.
\item The space of the root of a tree with two subtrees derivable in space $s$
is $s+1$.
\item The space of the root of a tree with subtrees derivable in space $s>s'$ is
$s$.
\end{compactitem}
\end{proposition}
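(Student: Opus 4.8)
The plan is to recognize that the minimum clause space of a treelike resolution derivation with a fixed underlying tree shape $T$ is exactly the \emph{black pebbling number} of $T$, and then to verify that the displayed recurrence computes that number. To set up the correspondence, model a blackboard configuration by the set of nodes of $T$ whose clauses are currently retained: asserting an input clause at a leaf (possibly through a weakening step) corresponds to placing a pebble on a source of $T$, and deriving a clause at an internal node $v$ by the cut rule corresponds to pebbling $v$ once both of its children carry pebbles. Since the blackboard transition permits erasing clauses in the same step in which the resolvent is written, we may delete the pebbles from the two children in the very step that pebbles $v$; and since the proof is treelike, every node is produced by its own sub-derivation with no reuse, so the correspondence is faithful and the clause space of the derivation is precisely the largest number of pebbles ever simultaneously on $T$. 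It therefore suffices to show that the recurrence equals $\mathrm{peb}(T)$, the minimum of this quantity over all legal pebbling strategies.

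For achievability (the upper bound), I would induct on $T$. A single node is pebbled using one pebble. For an internal root $r$ with subtrees $T_1,T_2$ whose pebbling numbers are $p_1 \geq p_2$ by the induction hypothesis, run an optimal pebbling of $T_1$, ending with one pebble on its root $v_1$ (peak $p_1$); then run an optimal pebbling of $T_2$, which now peaks at $p_2 + 1$ because of the extra pebble resting on $v_1$ and which ends with pebbles exactly on $v_1$ and $v_2$ (two pebbles, at most $p_2+1$ since $p_2\geq 1$); finally pebble $r$ while simultaneously removing the pebbles on $v_1$ and $v_2$. The peak over the whole strategy is $\max\{p_1, p_2 + 1\}$, which equals $p_1 + 1$ when $p_1 = p_2$ and equals $p_1$ when $p_1 > p_2$ (as then $p_1 \geq p_2 + 1$), matching the recurrence; translating the pebbling moves back into resolution steps yields a treelike derivation of the prescribed shape using that much space.

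For the matching lower bound, I would argue by strong induction that every legal pebbling of $T$ uses at least $\max\{p_1,p_2+1\}$ pebbles. Projecting an arbitrary pebbling of $T$ onto the moves touching $T_i$ yields a legal pebbling of $T_i$, so at some moment there are $\geq p_i$ pebbles on $T_i$; this already gives the $p_1$ bound. For the $p_2 + 1$ bound, consider the step at which $r$ is first pebbled: just before it, both $v_1$ and $v_2$ bear pebbles, so both subtrees are nonempty. Let $\tau$ be the last moment before this step at which at least one subtree is entirely pebble-free, and let $T_i$ be such a subtree at time $\tau$. The sub-pebbling of $T_i$ from $\tau$ onward starts from the empty configuration and eventually pebbles $v_i$, hence is a complete legal pebbling of $T_i$ and reaches $\geq p_i$ pebbles at some time $\tau' > \tau$ with $\tau'$ preceding the pebbling of $r$; by the choice of $\tau$ the other subtree is nonempty at $\tau'$ and $r$ is not yet pebbled, for a total of at least $p_i + 1 \geq p_2 + 1$.

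The step I expect to be the main obstacle is making this ``rewind to the last pebble-free moment'' argument fully rigorous: handling the case in which both subtrees become empty at the same moment, handling strategies that repeatedly pebble and unpebble nodes (so that one must fix ``the first time $r$ is pebbled'' and argue that the projected sub-pebblings are genuinely self-contained pebblings), and checking that restricting a legal pebbling to a subtree really produces a legal pebbling of it. This is the customary bookkeeping of pebbling lower bounds rather than a conceptual difficulty. Combining the two bounds gives that the recurrence equals $\mathrm{peb}(T)$, which is the optimal clause space of a treelike derivation of that shape, as claimed.
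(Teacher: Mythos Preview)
Your proposal is correct and follows the same inductive skeleton as the paper: establish the recurrence by proving matching upper and lower bounds on the space of a derivation with a fixed tree shape. The upper-bound construction (pebble the heavier subtree first, retain its root, then pebble the lighter one) is exactly the paper's.

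The one place where you differ is in the lower bound for the equal-space case. The paper argues directly on blackboards: it assumes without loss of generality that the blackboard is never empty, fixes one subtree, and splits on whether that subtree's space-$s$ configuration occurs while a clause of the other subtree is still on the board; if not, the root of the first subtree must persist until the final cut, so it is present when the second subtree hits space $s$. Your ``rewind to the last moment a subtree is empty'' argument replaces this case split with a single uniform step and, by tracking the last such moment, avoids the paper's slightly informal claim that the restrictions to each subtree are always nonempty. Both arguments are standard pebbling techniques and yield the same bound; yours is arguably the cleaner of the two, and the bookkeeping you flag (projections to subtrees are legal pebblings, the last-empty moment exists since the initial configuration is empty, and the peak of the restricted pebbling lies strictly after $\tau$ and strictly before $r$ is pebbled) all goes through without difficulty.
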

\begin{proof}
We proceed by induction on the structure of the tree, of course, and a proof
of a clause must assert that clause in the final step, so any proof must use
one clause's worth of space (which is attained for the sources -- axioms -- of
the proof). Furthermore, it is clear that for any node of a tree, given that
the formula holds for the subtrees rooted at that node, the formula continues
to hold: if one subtree requires more space than the other, we can derive the
clause labeling the root of the former tree in space $s$, and retaining that
clause on the backboard, we can carry out the space $s'$ derivation for the
other subtree on the blackboard utilizing total space $s'+1\leq s$. This
derivation is optimal since the proof derives the clauses labeling the roots of
both subtrees, and therefore it requires at least as much space as the
derivation of either subtree.

If the subtrees both require space $s$, then using a derivation similar to the
one described above (for the subtrees in arbitrary order) gives a space $s+1$
derivation of the root. To see that this is optimal, we first note that if the
blackboard is ever empty during a resolution proof, we could eliminate any
steps prior to the step with the empty blackboard, and still obtain a legal
proof, so we assume WLOG that the derivation when restricted to either of the
subtrees always include at least one clause. We next note that in any derivation
of one of the subtrees, by the induction hypothesis, there must be some
blackboard configuration that contains $s$ clauses. If this occurs during a
derivation of the other subtree in the overall derivation, then the overall
derivation uses at least $s+1$ space. If it does not, then the conclusion of
this derivation (the root of the subtree) must remain on the blackboard for
use in the final step of the proof; therefore, at a configuration of the
blackboard in the derivation of the other subtree with at least $s$ clauses, at
least $s+1$ clauses appear on the blackboard in the overall derivation.
\end{proof}

Actually, Ans\'otegui et al.~\cite{ablm08} refer to the clause space for
treelike resolution as the {\em Horton-Strahler number} after the discoverers of
the corresponding combinatorial parameter on trees~\cite{horton45,strahler52}
(which again happens to be essentially the same as the ``pebble number'' of
the tree). The algorithm for efficient proof search -- SearchSpace,
Algorithm~\ref{search-space} -- was, to the best of our knowledge, first
essentially discovered as an algorithm for learning decision trees (of low
pebble number) by Ehrenfeucht and Haussler~\cite{eh89}, (we remark that the
connection between treelike resolution and decision trees is an old bit of
folklore, first appearing in the literature in a work by Lov\'asz et al.~\cite
{lnnw95}) and rediscovered in the context of resolution by Kullmann~\cite
{kullmann99}; the algorithm used by Beame and Pitassi~\cite{bp96} is also
essentially similar, although they only considered the resulting proof tree size
(not its space).

Although the analysis of SearchSpace is, at its heart, a fairly straightforward
recurrence, it requires some groundwork. We first note that whenever a bounded
space treelike resolution proof exists, it can be converted into a (normal) form
that can be discovered by SearchSpace:

\begin{definition}[Normal]
We will say that a resolution proof is {\em normal} if in its corresponding DAG:
\begin{inparaenum}
\item All outgoing edges from Cut nodes are directed to Cut nodes.
\item The clauses labeling any path from the sink to a Cut node contain literals
using every variable along the path.
\item A given variable is used in at most one cut step and at most one weakening
step along every path from a source to a Cut node.
\end{inparaenum}
\end{definition}

\begin{proposition}\label{normal-suffices}
For any space-$s$ treelike resolution proof $\Pi$ there is a normal
space-$s$ treelike resolution proof $\Pi'$.
\end{proposition}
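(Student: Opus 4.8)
The plan is to turn an arbitrary space-$s$ treelike proof $\Pi$ into a normal one by a sequence of local surgeries, each of which preserves the derived clause, keeps the proof treelike with weakenings only near the sources, and --- most importantly --- does not increase the clause space. For the space bound the key point is Proposition~\ref{optimal-space}: the optimal space of a treelike proof depends only on the shape of its \emph{cut-tree} (leaves $=$ source clauses, internal nodes $=$ cut steps), and the recurrence defining it ($s\mapsto s{+}1$ for balanced branching, $\max$ otherwise) is monotone in the space of the subtrees. Hence any surgery that only (a) moves or merges weakening nodes, (b) re-weakens source clauses, or (c) replaces a subtree by one of its sub-subtrees (or deletes a redundant subtree) cannot increase the space. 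Every step below is of this form.

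First I would normalize the weakenings. Whenever a weakening step adding a set of literals $L$ is applied to a clause $C=A\vee B$ obtained by cutting a variable $x$ from $A\vee x$ and $B\vee\neg x$, I commute the weakening upward: weaken $A\vee x$ and $B\vee\neg x$ by $L$ and re-cut on $x$ (if this produces a clause containing a complementary pair, it is tautological and is excised by the routine cleanup that removes tautological clauses from a treelike proof by replacing the offending cut with a re-weakening of a premise, which again does not increase space). Iterating pushes all weakenings down to the sources, and merging each maximal chain of weakenings above a source into one step yields the first condition of normality and the weakening half of the third (at most one weakening on any source-to-cut path). The cut-tree shape is untouched, so the space is unchanged.

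Next I would make each cut introduce a fresh variable, processing the cut nodes top-down from the sink while maintaining the invariant that the part already processed is normal. Consider a cut node $v$ with clause $C_v$ whose two subtrees derive $C_L\vee x$ and $C_R\vee\neg x$, so $C_v=C_L\vee C_R$. If the variable $x$ already occurs in $C_v$, then the cut at $v$ is redundant: $C_v$ is a weakening of whichever premise supplies that literal, so I delete the other subtree and the cut and re-weaken the leaves of the retained subtree by the literals of $C_v$ it is missing, so that it derives $C_v$ directly. Otherwise $x$ is fresh: I re-weaken the leaves of the two subtrees by $C_v\setminus C_L$ and $C_v\setminus C_R$ respectively so that they derive exactly $C_v\vee x$ and $C_v\vee\neg x$ (excising any tautologies this creates), and recurse into the two subtrees, which are themselves treelike proofs of those clauses of no greater space. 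An easy induction then yields the second normality condition (each clause on a sink-to-node path names every variable cut above it, since each cut adds its own fresh literal to the current clause) and the cut half of the third (no variable is cut twice on a path), with base case a lone weakened source. This is precisely the shape of proof that SearchSpace (Algorithm~\ref{search-space}) constructs.

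The main obstacle here is bookkeeping rather than any real idea: one must check that the re-weakenings in the later stages do no lasting harm --- in particular that every tautological clause they can introduce is removable by the cleanup above without increasing space, and that pruning a redundant cut genuinely leaves a syntactically valid treelike proof --- and that the process terminates, which it does because each surgery strictly decreases either the number of cut nodes or the total size of the proof, so the whole argument closes by induction on the size of $\Pi$. (Equivalently, one could argue non-constructively: a space-$s$ treelike proof of the target of minimum size, already tautology-free and with weakenings pushed to the leaves, must itself be normal, since each violation would exhibit a strictly smaller space-$s$ proof.)
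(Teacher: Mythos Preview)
Your proposal is correct and follows essentially the same strategy as the paper: both normalize weakenings to sit only between the sources and the first cut, relabel the internal cut nodes with the ``canonical'' clauses determined by the path from the sink, and eliminate repeated cuts on a path by replacing a subtree with one of its children, with space preservation in each step justified via Proposition~\ref{optimal-space}. The only mechanical differences are that the paper pushes weakenings \emph{toward the sink} (cancelling them against a matching literal or cut) and then re-introduces weakenings at the sources, rather than commuting them directly toward the sources as you do, and it treats the relabeling and the repeated-cut elimination as two separate passes rather than your single top-down sweep.
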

\begin{proof}
First note that in general, we don't need to use weakening steps in the proof,
except perhaps on some initial path from a source: all other occurrences
can be eliminated by deleting the introduced literal along the path to the
sink until either a node is encountered in which the other incoming edge is from
a clause that also features that literal or which applies the cut rule on that
variable, redirecting the edge on this path to the cut node past it towards the
sink in the latter case (eliminating the other branch of the proof), and then 
finally replacing the weakening node with the node leading to it. This 
transformation does not increase the clause space of a proof and leaves a 
treelike proof treelike.

Once the weakening steps have been removed (i.e., in the proof cut nodes only
have outgoing edges to other cut nodes) we can see that on any path from the
sink to any cut node, at most one literal is introduced at each step; in
particular, the set of literals on the path leading to any cut node is a 
superset of the literals in the cut node. Note that we can obtain a proof of 
the same clause space in which the internal nodes are all labeled with the 
clauses consisting of these sets of literals, by adding some additional 
weakening steps between the sources of the proof and the first cut node. Since 
these steps leave these chains at clause space 1, the clause space is 
preserved, and a treelike proof is still treelike.

Finally, to guarantee the third property, we show how to eliminate additional
mentions of a variable. While the proof is not normal, identify some offending 
path. For the subtree rooted at the occurrence of the label closest to the 
source of this path, replace this subtree with its child subtree labeled with 
the same clause (note that one such subtree must exist since this literal is 
already mentioned in the clause). Note that the result is still a treelike 
resolution proof, and moreover, since the child subtree has clause space no 
greater than the clause space of the original subtree, the clause space of
the new proof cannot increase. 
\end{proof}

We now describe the proof of Theorem~\ref{space-analysis}.

\begin{theorem}[SearchSpace finds space-$s$ treelike proofs when they exist]
If there is a space-$s$ treelike proof of a clause $C$ from a CNF formula
$\varphi$, then SearchSpace returns such a proof, and otherwise it returns
``none.'' In either case, it runs in time $O(|\varphi|\cdot n^{2(s-1)})$ where
$n$ is the number of variables.
\end{theorem}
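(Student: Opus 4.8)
The plan is to establish soundness and completeness of SearchSpace by a single induction, and then to bound its running time by analysing the recursion tree. First observe that the recursion terminates: each recursive call either lowers the space budget (the call SearchSpace$(\varphi,s-1,C\vee\ell)$) or keeps it fixed while enlarging the working clause by one literal (the call SearchSpace$(\varphi,s,C\vee\neg\ell)$), and since the loop only considers literals $\ell$ with $\ell,\neg\ell\notin C$, the working clause stays non-contradictory and so never has more than $n$ literals. Hence the lexicographically ordered quantity $(s,\,n-|C|)$ strictly decreases on every recursive call, and I will run both inductive arguments along this measure.

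For soundness I claim that whatever SearchSpace returns is a treelike proof of $C$ from $\varphi$ of clause space at most $s$. In the base case the weakening derivation of $C$ from a clause $C'\in\varphi$ with $C\supseteq C'$ is a weakening chain, carried out in clause space $1$. In the recursive case, by the induction hypothesis $\Pi_1$ is a space-$(s-1)$ treelike proof of $C\vee\ell$ and $\Pi_2$ a space-$s$ treelike proof of $C\vee\neg\ell$; joining them under a final cut on the variable of $\ell$ produces a treelike derivation whose resolvent is exactly $C$ (using $\ell,\neg\ell\notin C$) and whose clause space is at most $s$ by Proposition~\ref{optimal-space}, since one subtree has space at most $s-1$ and the other at most $s$. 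Since SearchSpace always returns either a proof or ``none'', soundness also gives the easy half of the ``otherwise'' claim: if $C$ has no space-$s$ treelike proof from $\varphi$, SearchSpace cannot return a proof and hence returns ``none''.

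For completeness, suppose $C$ has a space-$s$ treelike proof from $\varphi$. If $C$ is a superset of a clause of $\varphi$, the first branch returns the required derivation; assume otherwise. Then $s\geq 2$: a space-$1$ treelike proof has no cut step (a cut node requires clause space at least $2$ by Proposition~\ref{optimal-space}), so it is a chain of weakenings from a source and would force $C$ to be a superset of an input clause. Apply Proposition~\ref{normal-suffices} to pass to a normal space-$s$ treelike proof of $C$; its sink is not a source, and since in a normal proof no weakening step follows a cut, the sink is a cut node, whose two parent subtrees --- after the internal relabeling carried out in the proof of Proposition~\ref{normal-suffices} --- prove clauses $C\vee x$ and $C\vee\neg x$ with $x,\neg x\notin C$, and by Proposition~\ref{optimal-space} at least one of them, say the one proving $C\vee x$, uses clause space at most $s-1$. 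Thus $C\vee x$ has a space-$(s-1)$ treelike proof, so by the induction hypothesis SearchSpace$(\varphi,s-1,C\vee x)$ returns a proof; in particular, as the loop iterates it encounters a first literal $\ell^*$ at which SearchSpace$(\varphi,s-1,C\vee\ell^*)$ returns a proof rather than ``none''. The crucial point is that for this $\ell^*$ the subsequent call SearchSpace$(\varphi,s,C\vee\neg\ell^*)$ also returns a proof: appending a single weakening step to the assumed space-$s$ proof of $C$ gives a space-$s$ treelike proof of $C\vee\neg\ell^*$ (weakening the final clause does not increase clause space), so the induction hypothesis applies to this call as well. Hence SearchSpace returns the joined derivation, which by soundness is a space-$s$ treelike proof of $C$.

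For the running time, note that within a fixed budget the calls form a path $C=C_0,C_1,\ldots$ with $|C_{i+1}|=|C_i|+1$ (each budget-$s$ call makes at most one further budget-$s$ call, to an enlarged clause); from $C_i$ the algorithm issues at most $2(n-|C_i|)$ calls at budget $s-1$, and $\sum_i 2(n-|C_i|)=O(n^2)$, so a budget-$s$ call gives rise to only $O(n^2)$ calls at budget $s-1$, hence $O(n^{2(s-1)})$ calls at budget $1$ in total; with the superset test costing $O(|\varphi|)$ per call this yields the stated $O(|\varphi|\cdot n^{2(s-1)})$ bound. I expect the main obstacle to be precisely the ``crucial point'' in the completeness argument: guaranteeing that the loop never returns ``none'' prematurely. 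This forces the right choice of induction measure --- so that both the budget-decreasing call \emph{and} the same-budget call on the enlarged clause count as strictly smaller --- and relies on the normal-form reduction of Proposition~\ref{normal-suffices} to fix the shape of the sink's cut step so that the two subtrees are proofs of $C\vee x$ and $C\vee\neg x$ rather than of weaker clauses requiring extra weakening.
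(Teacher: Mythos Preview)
Your proposal is correct and follows the same high-level route as the paper: induction using the normal form of Proposition~\ref{normal-suffices} together with Proposition~\ref{optimal-space}, followed by a recurrence-style running-time bound. Where you differ is in care, not in strategy. The paper's completeness argument simply asserts that SearchSpace ``can find any normal space-$s$ treelike proof by tracing paths from the root, choosing a literal labeling one of the clauses derivable in strictly smaller space first,'' but the algorithm does not know which literal corresponds to the smaller-space subtree and, because of the early \textsf{Return none}, may commit to a different literal $\ell^*$ before ever reaching the ``right'' one. Your ``crucial point''---that for \emph{any} $\ell^*$ at which the first call succeeds, the second call on $C\vee\neg\ell^*$ must also succeed, since the assumed space-$s$ proof of $C$ weakens to a space-$s$ proof of $C\vee\neg\ell^*$---is precisely what closes this gap, and your lexicographic measure $(s,\,n-|C|)$ is what lets the induction hypothesis apply to that same-budget call. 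The running-time arguments are equivalent: the paper solves the recurrence $T(n,s)\leq T(n-1,s)+2nT(n-1,s-1)+W$ directly, whereas you unroll it by noting that same-budget calls form a path of length at most $n$ and summing the $O(n^2)$ fan-out at each budget drop.
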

\begin{proof}
Recalling Proposition~\ref{optimal-space}, in any normal space-$s$ treelike
derivation of a clause $C$, one of the clauses involved in the final step must
be derivable in space at most $s-1$. It therefore clear that SearchSpace can
find any normal space-$s$ treelike proof by tracing paths from the root,
choosing a literal labeling one of the clauses derivable in strictly smaller
space first. By Proposition~\ref{normal-suffices}, this is sufficient, and all
that remains is to check the running time.

Given $W$ work per each invocation of SearchSpace (i.e., ignoring its recursive
calls, so $T(n,1)\leq W$ for all $n$ and $T(1,s)\leq W$ for all $s$), the
running time is described by the recurrence $T(n,s)\leq T(n-1,s)+2nT(n-1,s-1)+
W$. We can verify (by induction on $n$ and $s$) that $W(n+1)^{2(s-1)}$ is a
solution. Assuming the bound holds for $T(n-1,s)$ and $T(n-1,s-1)$, (for $n>1$,
$s>1$):
\begin{align*}
Wn^{2(s-1)}+2n\cdot W\cdot n^{2(s-2)}+W
&= W((n+2)\cdot n^{2s-3}+1)\\
&\leq W((n+1)^{2(s-1)}\frac{(n+2)n}{(n+1)^2}+1)\\
&\leq W((n+1)^{2(s-1)}-\frac{1}{(n+1)^2}(n+1)^{2(s-1)}+1)\\
&\leq W(n+1)^{2(s-1)}
\end{align*}
Noting that the first case can be checked in time $O(|\varphi|)$ (for
$O(|\varphi|)$ work per node) gives the claimed bound.
\end{proof}

We now establish that the bounded-space algorithm efficiently finds treelike
proofs; we first recall the statement of Proposition~\ref{treelike-log-space}.

\begin{proposition}
A treelike proof $\Pi$ can be carried out in clause space at most $\log_2
|\Pi|+1$.
\end{proposition}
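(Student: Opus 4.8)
The plan is to reduce the statement to a clean combinatorial fact about binary trees, via the recurrence already recorded in Proposition~\ref{optimal-space}. That proposition tells us that the clause space needed to carry out the treelike proof $\Pi$ equals the value $\mathrm{sp}(T)$ obtained by the recursion on the proof tree $T$ underlying $\Pi$: $\mathrm{sp}$ is $1$ on a single node, is $s+1$ at a node whose two subtrees each have value $s$, and is $\max\{s,s'\}$ at a node whose subtrees have values $s>s'$. So it suffices to show that $\mathrm{sp}(T)\leq\log_2 n(T)+1$, where $n(T)$ is the number of nodes of $T$; since the tree underlying $\Pi$ has at most $|\Pi|$ nodes and $x\mapsto\log_2 x+1$ is increasing, this gives the claimed bound.

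I would prove $\mathrm{sp}(T)\leq\log_2 n(T)+1$ by induction on the structure of $T$. The base case is a single node, where $\mathrm{sp}(T)=1=\log_2 1+1$. For the inductive step, write the two subtrees as $T_1,T_2$ with $\mathrm{sp}(T_1)=s_1\geq s_2=\mathrm{sp}(T_2)$, so $n(T)=n(T_1)+n(T_2)+1$. If $s_1>s_2$, then by the recurrence $\mathrm{sp}(T)=s_1\leq\log_2 n(T_1)+1\leq\log_2 n(T)+1$, using the induction hypothesis on $T_1$ and $n(T_1)\leq n(T)$. If $s_1=s_2=s$, then $\mathrm{sp}(T)=s+1$; the induction hypothesis gives $s\leq\log_2 n(T_i)+1$ for $i=1,2$, i.e.\ $n(T_i)\geq 2^{s-1}$, so $n(T)\geq 2\cdot 2^{s-1}+1=2^s+1>2^s$, whence $\log_2 n(T)> s$ and therefore $\mathrm{sp}(T)=s+1\leq\log_2 n(T)+1$. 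This completes the induction.

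The only substantive step is the balanced case, and its engine is the observation that a subtree with $\mathrm{sp}=s$ must contain at least $2^{s-1}$ nodes — which is exactly the contrapositive content of the induction hypothesis — so that two such subtrees force the node count past $2^s$. I do not anticipate any real obstacle here; the remaining work is just to note that $\mathrm{sp}$ as defined by the recurrence is the achievable clause space by Proposition~\ref{optimal-space}, and to be slightly careful that "number of nodes of the proof tree" is bounded by $|\Pi|$ (each line of $\Pi$ contributing at most one node). A minor point worth stating explicitly is that the inequality $\log_2 n(T_1)+1\leq\log_2 n(T)+1$ in the unequal case, and the monotonicity used at the very end, both rely only on $\log_2$ being nondecreasing, so no care with non-integer values of $\log_2|\Pi|$ is needed.
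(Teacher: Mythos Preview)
Your proof is correct and takes essentially the same inductive approach as the paper, both resting on Proposition~\ref{optimal-space}. The paper organizes the inductive step slightly differently---noting that one subtree has at most half the nodes and hence space at most $\log_2|\Pi|$, then applying the recurrence---but this is the dual of your equal-case analysis (where you invert the induction hypothesis to lower-bound node counts), so the arguments are the same in substance.
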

\begin{proof}
We proceed by induction on the structure of the DAG corresponding to $\Pi$.
For a proof consisting of a single node, the claim is trivial. Consider any
treelike proof now; one of children of the root is the root of a subtree
containing at most half of the nodes of the tree. By the induction hypothesis,
this derivation can be carried out in space at most $\log_2(|\Pi|/2)+1=
\log_2|\Pi|$, while the other child can be derived in space at most
$\log_2|\Pi|+1$. Therefore, by Proposition~\ref{optimal-space}, there is a
derivation of the root in space at most $\log_2|\Pi|+1$.
\end{proof}

\bibliographystyle{plain}

\bibliography{robust}

\end{document}